\documentclass{article} 
\usepackage{gram2026,times}

\usepackage{amsmath,amsfonts,bm}

\def\eqref#1{equation~\ref{#1}}

\def\1{\bm{1}}

\DeclareMathAlphabet{\mathsfit}{\encodingdefault}{\sfdefault}{m}{sl}
\SetMathAlphabet{\mathsfit}{bold}{\encodingdefault}{\sfdefault}{bx}{n}

\newcommand{\R}{\mathbb{R}}

\newcommand{\softmax}{\mathrm{softmax}}

\usepackage{hyperref}
\usepackage{url}
\usepackage{cleveref}
\usepackage{tikz} 
\usetikzlibrary{positioning}
\usetikzlibrary{calc}
\usepackage{multirow}
\usepackage{multicol}
\usepackage{amsmath}
\usepackage{amssymb}
\usepackage{amsthm}
\usepackage{algorithm}
\usepackage{algpseudocode}
\newtheorem{definition}{Definition}
\newtheorem{theorem}{Theorem}
\newtheorem{lemma}{Lemma}
\usepackage{bm}
\newcommand{\bs}[1]{\bm{#1}}
\usepackage{amsfonts}  
\usepackage{makecell}
\usepackage{mathrsfs} 
\usepackage{mathtools}
\DeclarePairedDelimiter\multiset{\lbrace\!\lbrace}{\rbrace\!\rbrace}
\usepackage{subcaption}
\usepackage{array}
\usepackage{wrapfig}
\usepackage{bbm}

\newcommand{\topk}{\mathcal{T}_k}

\newcommand{\mbr}{\mathbb{R}}

\usepackage[utf8]{inputenc} 
\usepackage[T1]{fontenc}    
\usepackage{booktabs}
\usepackage{nicefrac}       
\usepackage{microtype}      
\usepackage{xcolor}         
\newcommand{\firstbest}[1]{\textbf{\textcolor{red}{#1}}}       
\newcommand{\secondbest}[1]{\textcolor{blue}{#1}}              
\newcommand{\thirdbest}[1]{\textcolor{orange}{#1}}             
\usepackage{graphicx}

\newcolumntype{R}[1]{>{\raggedleft\arraybackslash}p{#1}}
\newcolumntype{C}[1]{>{\centering\arraybackslash}p{#1}}

\crefname{appendix}{Appendix}{Appendices}
\Crefname{appendix}{Appendix}{Appendices}
\crefname{theorem}{Theorem}{Theorems}
\Crefname{theorem}{Theorem}{Theorems}
\crefname{lemma}{Lemma}{Lemmas}
\Crefname{lemma}{Lemma}{Lemmas}

\makeatletter
\newcommand{\namedlabel}[2]{%
  \begingroup
  #2%
  \def\@currentlabel{#2}%
  \label{#1}%
  \endgroup
}
\makeatother

\title{k-Maximum Inner Product Attention \\ for Graph Transformers and \\ the Expressive Power of GraphGPS}

\author{Jonas De Schouwer
\\
Stanford University\\
\texttt{jonasds@cs.stanford.edu} \\
\And
Haitz Sáez de Ocáriz Borde \\
University of Oxford \\
\texttt{chri6704@ox.ac.uk} \\
\AND
Xiaowen Dong \\
University of Oxford \\
\texttt{xiaowen.dong@eng.ox.ac.uk}
}

\iclrfinalcopy 
\maintrack 
\begin{document}

\maketitle
\begin{abstract}
Graph transformers have shown promise in overcoming limitations of traditional graph neural networks, such as oversquashing and difficulties in modelling long-range dependencies. However, their application to large-scale graphs is hindered by the quadratic memory and computational complexity of the all-to-all attention mechanism. Although alternatives such as linearized attention and restricted attention patterns have been proposed, these often degrade performance or limit expressive power. To better balance efficiency and effectiveness, we introduce k-Maximum Inner Product (k-MIP) attention for graph transformers. k-MIP attention selects the most relevant key nodes per query via a top-k operation, yielding a sparse yet flexible attention pattern. Combined with an attention score computation based on symbolic matrices, this results in linear memory complexity and practical speedups of up to an order of magnitude compared to all-to-all attention, enabling the processing of graphs with over 500k nodes on a single A100 GPU. We provide a theoretical analysis of expressive power, showing that k-MIP attention does not compromise the expressiveness of graph transformers: specifically, we prove that k-MIP transformers can approximate any full-attention transformer to arbitrary precision. In addition, we analyze the expressive power of the GraphGPS framework, in which we integrate our attention mechanism, and establish an upper bound on its graph distinguishing capability in terms of the S-SEG-WL test. Finally, we validate our approach on the Long Range Graph Benchmark, the City-Networks benchmark, and two custom large-scale inductive point cloud datasets, consistently ranking among the top-performing scalable graph transformers.
\end{abstract}
\section{Introduction}
Since the advent of the original Graph Neural Network (GNN) model~\citep{GNN}, graph machine learning has become a powerful tool for analyzing relational data across domains ranging from social networks~\citep{borisyuk2024lignn,fan2019graph} to molecular biology~\citep{jumper2021highly,fout2017protein} and recommendation systems~\citep{he2020lightgcn,wang2019neural}, and has become a cornerstone of Geometric Deep Learning~\citep{GDL_paper,bronstein2021geometricdeeplearninggrids}. 

Traditional approaches typically follow the message-passing paradigm, where information is propagated locally between neighbouring nodes.
Despite its effectiveness, this paradigm has certain shortcomings such as oversmoothing~\citep{li2018deeper}, oversquashing~\citep{alon2020bottleneck} and limited expressive power~\citep{morris2019weisfeiler,xu2018powerful,loukas2019graph}, all of which may lead to difficulties in capturing long-range dependencies~\citep{akansha2023over}. Graph transformers have recently emerged as a promising alternative~\citep{ying2021transformers,dwivedi2020generalization,rampavsek2022recipe}, enabling information exchange between every pair of nodes.
However, this comes at the cost of quadratic complexity, and it remains an open question how to adapt the Transformer architecture~\citep{vaswani2017attention} to scale effectively to large graphs with potentially millions of nodes. In this paper we propose the k-Maximum Inner Product (k-MIP) self-attention mechanism for graph transformers. k-MIP attention dynamically selects the $k$ most influential keys for each query based on the intermediate activations, achieving scalability while avoiding the drawbacks of linearisation, rigid attention patterns and graph subsampling, which we discuss in Section~\ref{sec:related_work_graph_transformers}.

The main contributions of this paper are the following.
\begin{itemize}
    \item We introduce k-Maximum Inner Product~(k-MIP) self-attention for graph transformers, which achieves linear memory complexity and yields up to a ten-fold speedup over full attention\footnote{In our implementation; performance depends on the KeOps backend and GPU memory regime.}. As a result, k-MIP attention enables the processing of graphs with over 500k nodes on a single A100 GPU.
    \item We show that k-MIP attention can be seamlessly integrated into the GraphGPS framework and provide a theoretical analysis of its expressive power, establishing an upper bound on the graph-distinguishing capability of GraphGPS in terms of the S-SEG-WL test \citep{zhu2023structural}. This analysis clarifies how positional and structural encodings enable expressivity in graph transformers.
    \item We prove that k-MIP transformers can approximate any full-attention transformer to arbitrary precision, thereby guaranteeing that the proposed sparsification does not reduce the expressive power of transformer-based architectures.
    \item We empirically demonstrate competitive performance against other scalable graph transformers on a range of benchmarks, including the Long Range Graph Benchmark (LRGB)~\citep{dwivedi2022long}, the City-Networks benchmark~\citep{liang2025towards}, and two custom large-scale inductive point cloud datasets based on ShapeNet-Part~\citep{yi2016scalable} and S3DIS~\citep{armeni20163d}.
\end{itemize}
\vspace{-10pt}
\section{Related work}
\label{Background and related work}
\vspace{-5pt}
This section provides background on the k-MIP attention mechanism and surveys related work on scalable graph transformers. An extended discussion can be found in Appendix~\ref{sec:additional_rw}.
\vspace{-5pt}
\subsection{k-Maximum inner product attention}
\vspace{-5pt}
The attention mechanism used in this paper was first introduced by~\cite{zhao2019explicit} in the context of natural language processing, revisited by~\cite{gupta2021memory}, and later applied to computer vision by~\cite{wang2022kvt}.
While these works demonstrated the mechanism’s effectiveness in their respective domains, their methods suffered from quadratic memory complexity, rendering them unsuitable for direct adaptation to large graph datasets. To handle the substantially larger scale of the benchmarks considered in this work, we enhance the k-MIP attention mechanism through the use of symbolic matrices~\citep{JMLR:v22:20-275}, similar to previous work in latent graph inference~\citep{dgm,borde2023latent,borde2023projections} (see Appendix~\ref{sec:symbolic_matrices} for more details). This optimization achieves linear memory complexity and accelerates computation by an order of magnitude compared to full attention. Although the computational complexity remains quadratic, it nevertheless enables processing of graphs with over 500k nodes, as in City-Networks~\citep{liang2025towards}, on a single A100 GPU.
Furthermore, we provide a novel theoretical exploration of its expressive power in \cref{sec:expressive_power}. We also note that other approximate variants of k-MIP attention have recently been proposed in the literature~\citep{zeng2025zeta,mao2024iceformer}.
\vspace{-5pt}
\subsection{Graph transformers}
\label{sec:related_work_graph_transformers}
\vspace{-5pt}
Inspired by the success of Transformers~\citep{vaswani2017attention} in natural language modeling, substantial effort has been devoted to adapting this architecture to the graph domain~\citep{2024elucidating}. Most graph transformers achieve this by treating graph nodes as tokens, since using edges and/or subgraphs as tokens would lead to a combinatorial explosion in the number of tokens for large graphs. Early graph transformers, such as Graphormer~\citep{ying2021transformers} and Graph Transformer~\citep{dwivedi2020generalization}, demonstrated strong performance. However, they inherited the quadratic memory complexity of the Transformer, restricting their applicability to graphs with at most a few thousand nodes.
To overcome this scalability hurdle, various approaches have been proposed since then, which can be broadly classified into four categories with examples provided in \cref{tab:efficient_gt_comparison}. (1) \textbf{Graph subsampling} approaches process only sampled portions of large graphs but are consequently unable to capture long-range dependencies. (2) Methods with \textbf{engineered attention patterns} restrict attention along predefined patterns, potentially missing out on important relationships between nodes. (3) \textbf{Linearized attention} methods approximate standard attention through kernel tricks or matrix factorization, achieving linear complexity but often at the cost of performance. (4) Methods with \textbf{learnable attention patterns} dynamically determine which node pairs are relevant for attention in order to focus computational resources on the most important connections. These approaches promise both efficiency gains and strong predictive performance, but remain relatively underexplored in the literature.

Having established this categorization, we position k-MIP attention within the \emph{learnable patterns} category. A comparison to other methods is provided in \cref{sec:comparison_existing_scalable_gts}. Note that some frameworks, notably GraphGPS~\citep{rampavsek2022recipe}, are designed to be modular with respect to the attention mechanism and can thus belong to (or integrate) any of the above categories. Also, while some prior works~\citep{shehzad2024graph} classify attention-based MPNNs, such as GAT~\citep{velickovic2017graph}, as graph transformers, we exclude these methods from our analysis.

\begin{table*}[h]
    \centering
    \caption{Categorization of scalable graph transformer methods.}
    \scalebox{0.65}{
    \begin{tabular}{cccc}
        \toprule
        \makecell{Graph \\ Subsampling} 
        & \makecell{Engineered \\ Patterns} 
        & \makecell{Linearized \\ Attention} 
        & \makecell{Learnt \\ Patterns} \\
        \midrule
        Gophormer~\citep{zhao2021gophormer} 
        & Exphormer~\citep{shirzad2023exphormer} 
        & Nodeformer~\citep{wu2022nodeformer} 
        & GOAT~\citep{kong2023goat} \\
        NAGphormer~\citep{chen2022nagphormer} 
        & GPS+BigBird~\citep{rampavsek2022recipe} 
        & Difformer~\citep{wu2023difformer} 
        & \textbf{GPS+k-MIP~(ours)} \\
        & 
        SpExphormer~\citep{shirzad2024even} 
        & SGFormer~\citep{wu2024simplifying} 
        & \\
        
        & 
        & GPS+Performer~\citep{rampavsek2022recipe} 
        & \\ 
        \bottomrule
    \end{tabular}}
    \label{tab:efficient_gt_comparison}
\end{table*}

\vspace{-5pt}
\section{Method}
\label{sec:method}
\vspace{-5pt}

In standard multi-head self-attention, every query attends to every key to compute dense attention score matrices $\bs A^h \in [0,1]^{N \times N}$, where $h$ denotes the attention head. While this enables global information exchange between any pair of nodes, it comes with two significant drawbacks when applied to large graphs. First, the quadratic time and memory complexity in $N$ makes full attention computationally infeasible for graphs with more than a few thousand nodes. This limitation severely restricts the applicability of graph transformers to real-world problems, where graphs often have millions of nodes or more. Second, most attention scores tend to be small, as each node typically has only a few truly relevant connections. This observation is supported in \cref{sec:not_an_approximation}. This sparsity suggests that most attention computations are effectively wasted on irrelevant node pairs and may introduce noise into the learning process. Our goal is to address both issues simultaneously: the attention mechanism should be computationally efficient while focusing only on the most relevant node interactions. Since the attention scores provide a natural measure of relevance between nodes~\citep{vaswani2017attention}, we propose to only attend to the $k$ keys with the highest attention score for each query. We use a fixed $k$ across all queries to enable efficient representation of the top-$k$ operation results. We study the influence of this parameter in Appendix~\ref{sec:influence_of_k}.

\subsection{k-Maximum inner product self-attention}
We propose k-Maximum Inner Product self-attention, which extends multi-head self-attention by restricting each query to attend only to the $k$ keys with the highest inner product scores. Formally, given an input matrix $\bs X \in \mbr^{N \times d}$, a number of attention heads $H$ and learnable weight matrices $\bs W_Q^h, \bs W_K^h \in \mbr^{d \times d_K}$ and $\bs W_V^h \in \mbr^{d \times d_V}$, k-MIP attention operates as follows.
\begin{gather}
    \bs A^h = \softmax\left(\frac{1}{\sqrt{d_{K}}}\topk\left(\bs X \bs W_Q^h (\bs X \bs W_K^h)^\top\right)\right), \quad h \in \{1,\ldots,H\}. \\
    \texttt{k-MIP}(\bs X) = \sum^H_{h=1} \bs A^h \bs X \bs W_V^h \bs W_O^h.
\end{gather}
where $\topk$ is the top-k operation that retains only the $k$ largest elements in each row and sets all other elements to $-\infty$. See the pseudocode in Appendix~\ref{sec:algorithm}. For efficient implementation, we store the intermediate matrix $\bs X \bs W_Q^h (\bs X \bs W_K^h)^\top$ as a symbolic matrix~\citep{feydy2020fast}, which represents the matrix as a formula rather than materializing all elements in memory (Appendix~\ref{sec:symbolic_matrices}). Only when applying the top-k reduction are the $N^2$ elements computed lazily in the GPU's registers, avoiding memory overflows and costly transfers to the GPU's global memory. While we could not avoid the quadratic computational complexity of computing each inner product due to the hardness of searching in high-dimensional spaces, this approach gives the algorithm a linear memory footprint and a speedup of an order of magnitude compared to full attention, as we confirm in \cref{sec:controlled_experiment}. Furthermore, the top-k indices can be reused in the sparse backpropagation computation, which makes the \textbf{backward pass virtually negligible} in computation for large numbers of tokens.

\section{Expressive power}
\label{sec:expressive_power}

The expressive power of a parametrized model refers to the range of functions it can represent. Understanding the expressiveness of a model is crucial for identifying its limitations and for designing models that can solve a given task. For feedforward neural networks, research on expressivity led to the \emph{universal approximation theorem}, which states that a network with a single hidden layer and sufficiently many neurons can approximate any continuous function on a compact subset of $\mathbb{R}^n$ under mild conditions on the activation function~\citep{cybenko1989approximation,funahashi1989approximate,hornik1991approximation,leshno1993multilayer}. Similarly, a more recent paper \citep{yun2019transformers} has shown that full-attention transformer networks of constant width and sufficient depth can approximate any continuous sequence-to-sequence function to arbitrary precision.

For models that learn from graph-structured data, the question of expressive power is considerably more intricate, as a model’s ability to incorporate node features, graph structure, edge weights, and edge features all contribute to its overall expressiveness.
Although there are numerous ways to quantify the expressive power of such methods, important upper bounds have been derived for MPNNs by studying their ability to distinguish non-isomorphic graphs.
In particular, it has been shown that traditional MPNNs are at most as powerful as the Weisfeiler–Lehman (1-WL) test for detecting graph isomorphisms \citep{xu2018powerful, morris2019weisfeiler}. As a consequence, MPNNs are unable to represent any function that assigns different values to graphs that the 1-WL test cannot distinguish.

\begin{wrapfigure}[26]{r}{0.33\textwidth}
    \centering
    \includegraphics[width=0.35\textwidth]{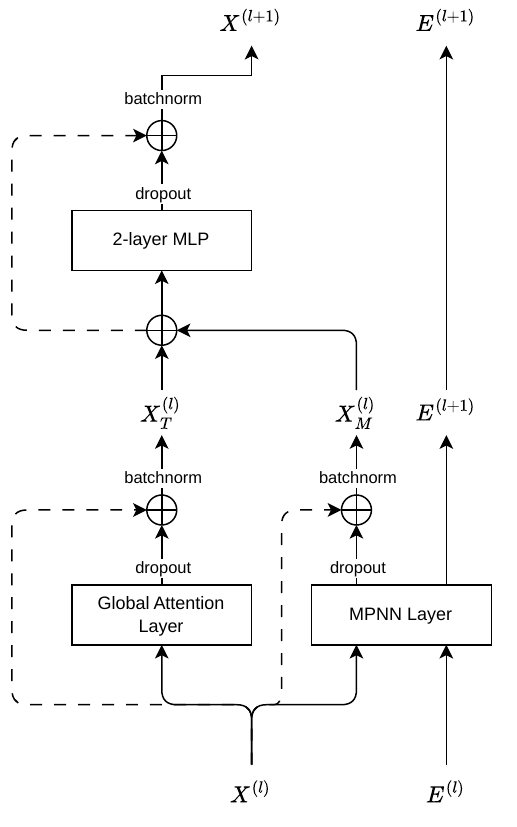}
    \caption[
        One layer in the GraphGPS framework.
    ]{
        One layer in the GraphGPS framework. The dashed lines indicate residual connections. Based on~\cite{rampavsek2022recipe}.
    }
    \label{fig:graphgps}
\end{wrapfigure}

We extend this line of work by establishing a comparable upper bound on the expressive power of the GraphGPS framework using the SEG-WL test from \cite{zhu2023structural}.
A key advantage of grounding our analysis in the SEG-WL framework is that it positions our results within an established hierarchy of expressiveness results: \cite{zhu2023structural} have already characterized several other graph transformer architectures~\citep{dwivedi2020generalization, ying2021transformers, kreuzer2021rethinking, zhao2021gophormer, chen2022structure} in terms of SEG-WL, so our result directly enables comparison of the expressive power of GraphGPS (and hence GPS+k-MIP) to all of these methods on a common footing. We provide additional relevant background on this topic in Appendix~\ref{Background on Expressive Power}, including a more extensive literature review and mathematical preliminaries for our results, where we dive into node coloring and a description of the SEG-WL test.

\subsection{Expressive Power of GraphGPS}
\label{sec:graphgps}

GraphGPS is a framework for building graph transformers, introduced in \citep{rampavsek2022recipe} and later leveraged by Exphormer \citep{shirzad2023exphormer} and GPS++ \citep{masters2022gps++}, among others. The framework consists of many sequentially stacked layers of the type depicted in Figure~\ref{fig:graphgps}.
The $l$-th layer updates both node and edge feature matrices, transforming $(\bs X^{(l-1)}, \bs E^{(l-1)})$ into $(\bs X^{(l)}, \bs E^{(l)})$.
Each layer consists of three main components: the \emph{MPNN Layer}, the \emph{Global Attention Layer}, and the \emph{2-layer MLP}. The MPNN Layer is a traditional message-passing neural network layer. In this work, we employ GatedGCN for all experiments, as it was observed in the GraphGPS paper to yield the best performance \citep{rampavsek2022recipe}.
While the MPNN Layer is limited to message passing along the given graph, the Global Attention Layer ensures that nodes that are not directly connected can attend to each other. The Global Attention Layers examined in this work include Transformer (i.e., full attention) \citep{vaswani2017attention}, BigBird \citep{zaheer2020big}, Performer \citep{choromanski2021rethinking}, Exphormer \citep{shirzad2023exphormer}, and k-MIP attention. With the exception of Exphormer, each of these allows \emph{any two nodes} to attend to each other. The 2-layer MLP is a simple two-layer feedforward neural network that is applied to the node after the MPNN and Global Attention Layers.

In this section, we prove that the GraphGPS framework (which includes GPS+k-MIP, Exphormer \cite{shirzad2023exphormer}, and GPS++ \cite{masters2022gps++}) can only distinguish graphs that are also distinguishable by the SEG-WL test (see Appendix~\ref{sec:SEG-WL}) enhanced with the same positional and structural encodings.
This result allows us to compare the expressive power of GraphGPS to the 1-WL test and highlights the importance of expressive encodings. Further, we will use it to shed new perspective on the super-1-WL expressiveness results given in various previous works~\citep{rampavsek2022recipe, kreuzer2021rethinking,shirzad2023exphormer}. Importantly, it should be noted that this result is not necessarily a drawback to our method, as we will show in \cref{sec: k-MIP approximation theorem} that k-MIP transformers can universally approximate any function representable by a full-attention transformer.

First, we prove that the graph distinguishing power of the GraphGPS framework is upper bounded by the $S$-SEG-WL test (see Definition~\ref{def:S-SEG-WL} in Appendix~\ref{sec:SEG-WL}) that uses the same node and edge feature enhancements.

\begin{theorem}
\label{thm:upper-bound-on-expressiveness}
    Let $\mathcal{A}$ be an instance of the GraphGPS framework that enhances its node features with $\nu(\bs A)$ and its edge features with $\mu(\bs A)$. Then $\mathcal{A}$ can only distinguish graphs that are also distinguishable by the $S$-SEG-WL test, where $S=(f_A,f_R)$ and $f_A,f_R$ are defined as
    \begin{align}
        f_A(v, G) &= \nu(\bs A)_v, \\
        f_R(v, u, G) &=
        \begin{cases}
            (0, \mathbbm{1}_{u=v}, \bs0_{d_{edge}}, \bs 0_{d_{PE}})  & \text{if } (u,v)\notin E \\
            (1, \mathbbm{1}_{u=v}, \bs E_{uv}, \mu(\bs A)_{uv})  & \text{if } (u,v)\in E
        \end{cases},
    \end{align}

    where the set of possible colors is $\mathcal{C} = \R^{d_{PE}} \, \cup \, \{0,1\}^2\times\R^{d_{edge}}\times\R^{d_{PE}} \, \cup \, \R^d \, \cup \, \R^{d_{edge}}$.
\end{theorem}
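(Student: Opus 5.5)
We argue by a standard refinement (induction) argument. We show that at every layer the GraphGPS node representations are a function of the $S$-SEG-WL colors at the corresponding iteration. Recall the $S$-SEG-WL update from Definition~\ref{def:S-SEG-WL}. Initial colors are $\chi^{0}(v) = f_A(v,G)$, possibly paired with the raw node features. Each iteration sets
\[
\chi^{t}(v) = \mathrm{hash}\bigl(\chi^{t-1}(v), \multiset{(\chi^{t-1}(u), f_R(v,u,G)) : u \in V}\bigr),
\]
and the final graph color is the multiset of node colors. The claim to prove is the following.

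\textbf{Claim.} For every layer $l$ there exist functions $\phi_l$ and $\psi_l$, independent of the graph, with
\begin{itemize}
\item $\bs X^{(l)}_v = \phi_l(\chi^{l}(v))$ for every node $v$, and
\item $\bs E^{(l)}_{uv} = \psi_l(\chi^{l}(v),\chi^{l}(u), f_R(v,u,G))$ for every edge $(u,v)\in E$.
\end{itemize}
If the claim holds, the readout is a permutation-invariant function of $\multiset{\bs X^{(L)}_v}$, hence a function of the final SEG-WL multiset. So any two graphs separated by $\mathcal{A}$ are also separated by $S$-SEG-WL.

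\textbf{Base case.} The input node features are $\bs X_v$ concatenated with $\nu(\bs A)_v$, which is determined by $f_A$ together with the raw features; we fold the raw node features into the initial color, which is why $\R^d\subset\mathcal{C}$. The input edge features $(\bs E_{uv},\mu(\bs A)_{uv})$ are read directly off $f_R(v,u,G)$ whenever its first coordinate equals $1$.

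\textbf{Inductive step.} We treat the three components of a layer separately.
\begin{itemize}
\item \emph{MPNN layer.} For GatedGCN or any message-passing layer, the new $\bs X_v$ is a function of $\bs X_v$ and the multiset $\multiset{(\bs X_u, \bs E_{uv}) : (u,v)\in E}$. This multiset is recoverable from the full multiset $\multiset{(\chi^{l-1}(u), f_R(v,u,G)) : u\in V}$ by keeping only the entries whose edge indicator is $1$. The updated edge features $\bs E^{(l)}_{uv}$ depend only on the endpoint representations and the old edge feature, so they have the required form.
\item \emph{Global attention layer.} Full attention, Performer, BigBird, Exphormer and k-MIP all compute $\mathrm{Attn}(\bs X)_v$ as a function of $\bs X_v$ and the multiset $\multiset{\bs X_u : u\in V}$, or of some graph-dependent subset of it. This includes the top-$k$ selection, since the top-$k$ set of scores is determined by the multiset of $(\bs X_v,\bs X_u)$ pairs, ties included. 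We obtain this multiset from the SEG-WL multiset by forgetting the $f_R$ coordinate.
\item \emph{MLP and residual connections.} The MLP, the residual connections and the normalizations act pointwise, so they preserve the functional dependence.
\end{itemize}
Since $\chi^{l}(v)$ injectively encodes both $\chi^{l-1}(v)$ and the full multiset, every quantity above is a function of $\chi^{l}(v)$, and the claim follows at layer $l$.

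\textbf{Main obstacle.} The delicate point is the attention mechanisms that are not functions of multisets. Two cases arise.
\begin{itemize}
\item \emph{Sparse-pattern attention.} BigBird's random and window patterns and Exphormer's expander edges depend on node indexing or randomness. We handle these by treating the pattern as extra edge information. Alternatively, we restrict the statement to the deterministic, permutation-equivariant instances, or require that the expander edges be encoded through $\mu$ or $\bs E$.
\item \emph{Tie-breaking in k-MIP.} The top-$k$ operation may break ties in a way that depends on node order. We resolve this by noting that tied keys have equal scores, so averaging over them yields a well-defined output; otherwise we assume generic parameters under which no ties occur.
\end{itemize}
We also need to check that $\bs E^{(l)}$ never requires information about non-edges beyond what $f_R$ provides, which holds because GraphGPS updates edge features only on existing edges.
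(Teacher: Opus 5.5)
Your induction is the paper's argument in functional rather than implicational form. The paper proves a modified version of Theorem~1 of Zhu et al.\ with hypotheses $c^l(v)=c^l(w)\Rightarrow d^l(v)=d^l(w)$ and (equal colours at $r,s$ and at $v,w$, equal $f_R$) $\Rightarrow e^l(r,v)=e^l(s,w)$. These are exactly your $\phi_l$ and $\psi_l$, and the paper verifies them from A1--A4. However, you have misquoted the update. In Definition~\ref{def:S-SEG-WL} the new colour is $\Phi\bigl(\multiset{(c^{l-1}(r), f_R(v,r,G)) \mid r\in V}\bigr)$, with no separate $c^{l-1}(v)$ argument. So your step that $\chi^{l}(v)$ injectively encodes $\chi^{l-1}(v)$ is not automatic. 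You must recover $\chi^{l-1}(v)$ as the first entry of the unique pair whose $f_R$-component has second coordinate $\mathbbm{1}_{u=v}=1$. This strong-regularity step is where the paper's lemma does its work, and it is why $\mathbbm{1}_{u=v}$ is part of $f_R$. You need it in three places:
\begin{itemize}
\item to obtain $\bs X^{(l-1)}_v$ for the residuals, the MPNN and the query;
\item to evaluate $\psi_{l-1}$ on incoming edges;
\item to pass from $(\chi^{l-1}(v),\chi^{l-1}(u))$ to $(\chi^{l}(v),\chi^{l}(u))$ in the edge claim.
\end{itemize}

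Your tie-breaking remedies also do not cover the layer as defined. Equal scores do not imply equal values, so the top-$k$ output is not determined by the multiset of scores. Averaging over tied keys defines a different layer. Generic weights do not help either: ties are a property of the input, and the theorem quantifies over all graphs for a fixed instance. The paper instead assumes a deterministic permutation-equivariant tie-breaking rule (the footnote to the transformer-block definition, together with A3). Such a rule exists: break ties by lexicographic order of the tokens $\bs X_u$. Any remaining tie is then between identical tokens, whose values coincide. Equivariance alone then settles the attention step, with no mechanism-specific reasoning. If $\bs X$ and $\bs Y$ have the same multiset of rows and $\bs X_v=\bs Y_w$, the permutation with $\bs Y=\bs P\bs X$ can be chosen to send $v$ to $w$, so $\mathrm{Attn}(\bs Y)_w=\mathrm{Attn}(\bs X)_v$. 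Your caveat about BigBird's index-dependent patterns and Exphormer's random expander edges is fair. On that point you are more careful than the paper, which simply asserts that A3 holds for every implemented module.
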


The proof of this theorem can be found in Appendix~\ref{sec:proof-of-thm:upper-bound-on-expressiveness}. By establishing a connection between the expressive power of GraphGPS and the $S$-SEG-WL test, we embed the GraphGPS framework (and hence, our GPS+k-MIP) into the same expressiveness hierarchy that \cite{zhu2023structural} constructed for other graph transformers, making it possible to directly compare GraphGPS to these methods as well as to the 1-WL test.
We will discuss three implications of this result.
First, we compare the expressive power of GraphGPS to the 1-WL test.
Second, we investigate the effect of more expressive encodings on the expressive power of GraphGPS.
Third, we discuss the origin of the expressive power of graph transformers.

\paragraph{Comparison to the 1-WL test}
The $S$-SEG-WL test generalizes the 1-WL test: 1-WL is recovered by choosing, for all $u,v\in V$,
\begin{align}
    f_A(v,G) &= c_0,       \label{eq:neighbour-encoding-abs} \\
    f_R(v,u,G) &=           \label{eq:neighbour-encoding-rel}
    \begin{cases}
        c_1 & \text{if } (u,v)\in E \\
        c_2 & \text{if } (u,v)\notin E
    \end{cases},
\end{align}
where $c_0,c_1,c_2\in\mathcal{C}$ are constants. Consequently, in the standard setting for evaluating 1-WL expressiveness (identical node features, identical edge features, and no positional encodings) note that the structural encoding scheme $S$ from \cref{thm:upper-bound-on-expressiveness} degenerates to the form \cref{eq:neighbour-encoding-abs}-\cref{eq:neighbour-encoding-rel}.
Hence, \cref{thm:upper-bound-on-expressiveness} implies that the expressive power of the GraphGPS framework in terms of graph distinguishability is upper bounded by the 1-WL test.

\paragraph{When more expressive encodings are used}
As we formally introduce in Appendix~\ref{ssec:SEG-WL_Preorder}, there is a preorder on the expressive powers of $S$-SEG-WL test.
In particular, if $S,S'$ are structural encoding schemes and $S'$ is a refinement of $S$, then $S'$-SEG-WL is at least as expressive in terms of graph distinguishability as $S$-SEG-WL.
While this theorem does not state that $S'$-SEG-WL is strictly more expressive than $S$-SEG-WL, \cite{zhu2023structural} provide various examples where the order relation is strict. The implication for GraphGPS is that more expressive positional and structural encodings lead to a higher upper bound on the expressiveness of the framework. In particular, using the Laplacian positional encoding allows GraphGPS to distinguish some graphs that are indistinguishable by the 1-WL test. An example of such a pair is given in \cite{kreuzer2021rethinking}.
Combining this with the fact that the MPNN module in the GraphGPS framework can be equally expressive as the 1-WL test when an expressive MPNN is used~\citep{xu2018powerful}, we can conclude that GraphGPS with Laplacian positional encodings is strictly more expressive than the 1-WL test.

\paragraph{The origin of graph transformers' expressive power}
While various previous works on graph transformers~\citep{rampavsek2022recipe, shirzad2023exphormer, kreuzer2021rethinking} have claimed super-1-WL expressiveness for their methods (in terms of graph distinguishing power), our result highlights that this expressiveness comes from the positional and structural encodings applied to the node and edge features, rather than from the Transformer architecture itself. In particular, for all of the three mentioned works, the same level of graph distinguishing power could be attained using an expressive GNN where the input features are augmented with the same positional and structural encodings.

\subsection{Universal Approximation of Full-Attention Transformers} \label{sec: k-MIP approximation theorem}

Since k-MIP attention is a sparsification of the standard multi-head self-attention, it is natural to ask what range of functions it can represent. To address this question, we prove that for each full-attention transformer $T_{full}$ there exists a k-MIP transformer $T_{\text{k-MIP}}$ that approximates $T_{full}$ to an arbitrary approximation error $\epsilon > 0$ on any compact set $U \subseteq \mbr^{N \times d}$. First, we present a unified definition for both full-attention and k-MIP transformers. Both are compositions of transformer blocks, parameterized by their attention mechanism. The input and output tokens are the rows of the respective matrices. This complements the results from~\citet{yun2020n}, as discussed in \cref{sec: relation-to-yun-et-al}.

\begin{definition}[General transformer block]
    A \emph{transformer block} $t^{h,m,r}_{\mathcal{A}}$ is a row-permutation equivariant mapping from $\mbr^{N\times d}$ to $\mbr^{N\times d}$ that implements the following sequential operation\footnote{We assume a deterministic, permutation-equivariant tie-breaking rule for the top-k operator; this is standard in theoretical analyses and does not affect practical behavior.}:
    \begin{center}
        \begin{tikzpicture}[
            >=stealth,
            font=\small,
            node distance = 1cm,
            every path/.style = {line width=.7pt},
            module/.style = {draw, rounded corners=3pt,
                             minimum width=1.4cm,
                             minimum height=.5cm},
            sum/.style = {draw, circle, minimum size=10pt, inner sep=0pt,
                          path picture={
                            \draw[line width=1pt]
                                 (path picture bounding box.south) --
                                 (path picture bounding box.north);
                            \draw[line width=1pt]
                                 (path picture bounding box.west)  --
                                 (path picture bounding box.east);
                          }}
          ]
        
        \node            (X)   {$\mathbf{X}$};
        \node[module, right=1cm of X] (A) {$\mathcal{A}^{h,m}$};
        \draw[->] (X) -- node[midway, coordinate] (c1) {} (A);   
        \node[sum,  right=.6cm of A]  (s1) {};
        \draw[->] (A) -- (s1);
        \node[module, right=1cm of s1] (MLP) {\texttt{MLP}$^{r}$};
        \draw[->] (s1) -- node[midway, coordinate] (c2) {} (MLP); 
        \node[sum,  right=.6cm of MLP] (s2) {};
        \draw[->] (MLP) -- (s2);
        \node[right=1cm of s2] (Y) {$\mathbf{Y}$};
        \draw[->] (s2) -- (Y);
        
        \coordinate (skip1-top) at ($(A.north west)+(0,7pt)$);
        \draw[->] (c1) |- (skip1-top) -| (s1.north);
        
        \coordinate (skip2-top) at ($(MLP.north west)+(0,7pt)$);
        \draw[->] (c2) |- (skip2-top) -| (s2.north);
        
        \end{tikzpicture}
        
    \end{center}
    Here, $\mathcal{A}^{h,m}$ is an attention layer (e.g., full attention or k-MIP attention) with $h$ heads and hidden dimension $d_{K}=d_{V}=m$. $\texttt{MLP}^r$ is a 2-layer MLP with ReLU activation and hidden dimension $r$. Note that here we ignore normalization for simplicity.
\end{definition}

\begin{definition}[Class $\mathcal{T}_{\mathcal{A}}^{h,m,r}$]
    $\mathcal{T}_{\mathcal{A}}^{h,m,r}$ is the class of transformers using attention mechanism $\mathcal{A}$, where each transformer is a composition of an arbitrary number of transformer blocks $t^{h,m,r}_{\mathcal{A}}$:
    \begin{equation}
        \mathcal{T}_{\mathcal{A}}^{h,m,r} \coloneqq
        \left\{
            T_{\mathcal{A}}: \mbr^{N\times d} \rightarrow \mbr^{N\times d} \mid T_{\mathcal{A}} \text{ is a composition of transformer blocks } t^{h,m,r}_{\mathcal{A}}
        \right\}
    \end{equation}
\end{definition}

Regarding the two cases considered in this work, $\mathcal{T}_{full}^{h,m,r}$ is the class of transformers using full multi-head self-attention and $\mathcal{T}_{\text{k-MIP}}^{h,m,r}$ is the class of transformers using k-MIP attention.

\begin{theorem}[k-MIP Approximation Theorem]
    \label{thm:k_mip_approximation}
    Consider any full-attention transformer $T_{full} \in \mathcal{T}_{full}^{h,m,r}$, any $\epsilon > 0$, any $p\in[1,\infty)$, and any compact $U \subseteq \mbr^{N\times d}$.
    Then there exists a k-MIP transformer $T_{\text{k-MIP}} \in \mathcal{T}_{\text{k-MIP}}^{2,1,4}$ such that
    \begin{equation}
        \left(
        \int_U
        ||
            T_{\text{k-MIP}}(\bs X)
            -
            T_{full}(\bs X)
        ||_p^p
        d\bs X
        \right)^{1/p} < \epsilon.
    \end{equation}
\end{theorem}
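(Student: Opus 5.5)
The strategy is to go through the universal approximation result of \citet{yun2019transformers}, which is the source of the target class $\mathcal{T}^{2,1,4}$, rather than to approximate $T_{full}$ block by block. The result states that full-attention transformers in $\mathcal{T}_{full}^{2,1,4}$ (two heads, head size one, MLP width four) can approximate any continuous permutation-equivariant sequence-to-sequence function on a compact set in $L^p$. The plan has three steps.

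\textbf{Step 1: reduce to a universality statement.} Any $T_{full}\in\mathcal{T}_{full}^{h,m,r}$ is continuous, since attention, ReLU MLPs and residual connections are all continuous. It is also row-permutation equivariant. It therefore suffices to show that $\mathcal{T}_{\text{k-MIP}}^{2,1,4}$ approximates every continuous permutation-equivariant $f$ on a compact $U$ in $L^p$. Since $U$ is bounded, $\|\cdot\|_p$ on $U$ is controlled by the sup norm up to a factor $\mathrm{vol}(U)^{1/p}$. I can also discard small-measure sets where outputs stay bounded, which is where the $L^p$ (rather than sup-norm) formulation helps.

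\textbf{Step 2: reproduce Yun et al.'s three stages with k-MIP blocks.} Their proof has three stages.
\begin{itemize}
\item[(i)] Feed-forward blocks quantize the input to a grid. This involves no attention, so it transfers verbatim: a block can switch attention off by using zero value/output weights.
\item[(ii)] A stack of attention layers implements a \emph{contextual mapping}, sending each quantized matrix to distinct token-wise codes. These layers use \emph{hardmax} attention obtained as a limit of softmax, with two heads of size one.
\item[(iii)] Feed-forward layers map the codes to the target values.
\end{itemize}
Stage (ii) is the key step. Its selective-shift operations only need, for each query, the maximum and minimum of a scalar key $\langle \bs u, \bs x_j\rangle$ over the keys. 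With a rank-one score $q_i k_j$ of appropriate sign, the top-$1$ element of row $i$ is exactly the argmax (or argmin) key. So with $k=1$, k-MIP attention computes the hardmax exactly, independently of the softmax temperature. I would write each attention layer of Yun et al.'s contextual mapping as a k-MIP layer with two heads, head size one, and $k=1$ (larger fixed $k$ can be handled by scaling scores so that softmax over the top-$k$ approaches hardmax). I would then check that the sum over heads and the residual reproduce their selective shift operator.

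\textbf{Step 3: combine the errors.} Approximate $f$ by a piecewise-constant $\bar f$ on grid cells, then realize $\bar f$ with the k-MIP network except on a small neighbourhood of the cell boundaries. Quantization by ReLU MLPs is exact only away from those boundaries, so this neighbourhood has small measure. The output stays bounded there, which makes the $L^p$ error small.

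\textbf{Main obstacle.} The hardest part is Step 2(ii). Yun et al.'s contextual mapping relies on the exact grid structure, including a distinct-entries assumption and the handling of duplicate tokens. With top-$k$ selection, ties among equal keys, or equal keys across duplicate tokens, interact with the tie-breaking rule. I would first verify that Yun et al.'s selective shift uses only $\max_j$ and $\min_j$ of scalar keys, whose values are independent of which tied index is chosen, so the deterministic equivariant tie-breaking rule does not matter. If that fails, I would perturb the grid slightly and absorb the resulting set into the small-measure exceptional region from Step 3.
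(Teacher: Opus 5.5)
Your proposal is correct and follows the paper's route. The paper also reduces the theorem to a universality lemma for continuous row-permutation-equivariant functions, using that $T_{full}$ is continuous and equivariant, and then reruns Yun et al.'s constructive proof in $\mathcal{T}^{2,1,4}$, arguing that the only property of softmax it uses is the hardmax limit under input scaling, which $\softmax\circ\topk$ shares. Your additional points---that with $k=1$ the top-$k$ step gives the hardmax exactly, and that in the selective shifts (query nonzero on the grid) tied scores imply equal values, so the tie-breaking rule is immaterial---are consistent refinements of that same argument, not a different approach.
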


This theorem is proven in Appendix~\ref{app:proof-kmip-approx-thm} and discussed in Appendix~\ref{app:discussion}, including detailed comparisons to prior theoretical work, scope, and limitations.

While \cref{thm:k_mip_approximation} shows that k-MIP \emph{transformers} can approximate any full-attention transformer, note that it is not necessarily true that the constituent k-MIP \emph{transformer blocks} will approximate every corresponding full-attention transformer block.
Indeed, in Appendix~\ref{sec:not_an_approximation} we show that this is not the case: a k-MIP attention layer is a very poor approximation of the full attention layer with the same weights until $k$ approaches~$N$, because the top-$k$ keys by inner product capture only a small fraction of the total attention weight.
Yet, \cref{thm:k_mip_approximation} proves that the composition of many such layers has the expressive power to approximate any full-attention transformer to arbitrary precision.

\section{Experiments}
\label{sec:experiments}

We evaluate the efficiency, prediction quality, and scalability enabled by the k-MIP self-attention mechanism through three sets of experiments: (1)~computational efficiency in a controlled setting, (2)~prediction quality on LRGB~\citep{dwivedi2022long}, and (3)~scalability to large-scale graph datasets~\citep{liang2025towards,chang2015shapenet,yi2016scalable}. For in-depth experimental details refer to Appendix~\ref{sec:experimental_details}. We provide complementary performance measurements in Appendix~\ref{app:detailed_performance_measurements}.

\subsection{Objective 1: Computational efficiency}
\label{sec:controlled_experiment}

We compare runtime and memory usage of different attention mechanisms while varying the number of nodes $N\in\{10^{i/2} \mid i=4,\dots,12\}$ with $d_K=10$ and $k=10$. We consider both an \emph{inference setting} (forward pass only, no gradient tracking) and a \emph{training setting} (forward and backward pass, with gradient tracking). Results are displayed in \cref{fig:efficiency_comparison_training} and a breakdown of the computational cost is provided in \cref{fig:profiling_breakdown}. For experimental details see \cref{sec:experimental_details_efficiency_comparison}.

\begin{figure}[thbp!]
    \centering
    \makebox[\textwidth][c]{    
        \begin{subfigure}[b]{0.33\textwidth}
            \centering
            \includegraphics[width=\textwidth]{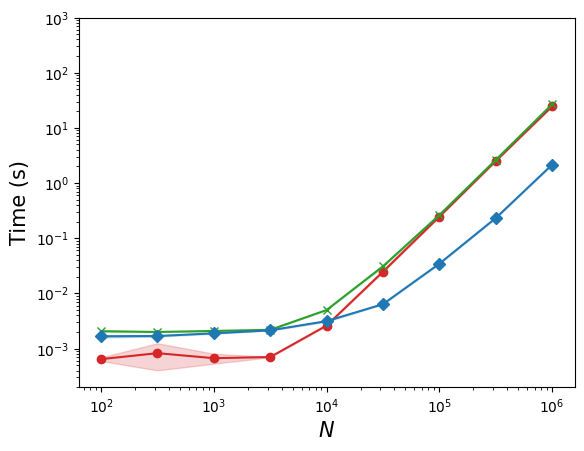}
            \caption{Inference setting: runtime}
            \label{fig:efficiency-training-forward}
        \end{subfigure}
        \hfill
        \begin{subfigure}[b]{0.33\textwidth}
            \centering
            \includegraphics[width=\textwidth]{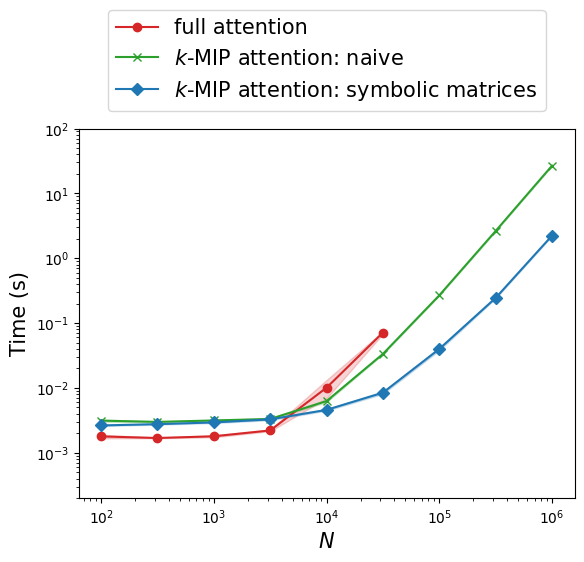}
            \caption{Training setting: runtime}
            \label{fig:efficiency-training-total}
        \end{subfigure}
        \hfill
        \begin{subfigure}[b]{0.33\textwidth}
            \centering
            \includegraphics[width=\textwidth]{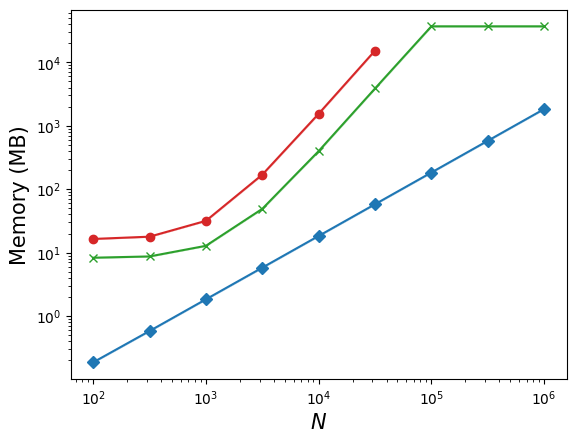}
            \caption{Training setting: peak memory}
            \label{fig:peak_memory_training}
        \end{subfigure}
    }
    \caption{
        Comparison of full attention and k-MIP attention (with/without symbolic matrices). Shown is the mean $\pm$ std over 5 runs, measured on a single 40GB A100 GPU.
        Full attention gave OOM errors in the training setting for $N\geq10^5$.
        The data behind this figure can be found in \cref{app:detailed_performance_measurements}.
    }
    \label{fig:efficiency_comparison_training}
\end{figure}

\begin{wrapfigure}{r}{0.5\textwidth}
    \vspace{-1em}
    \centering
    \includegraphics[width=0.5\textwidth]{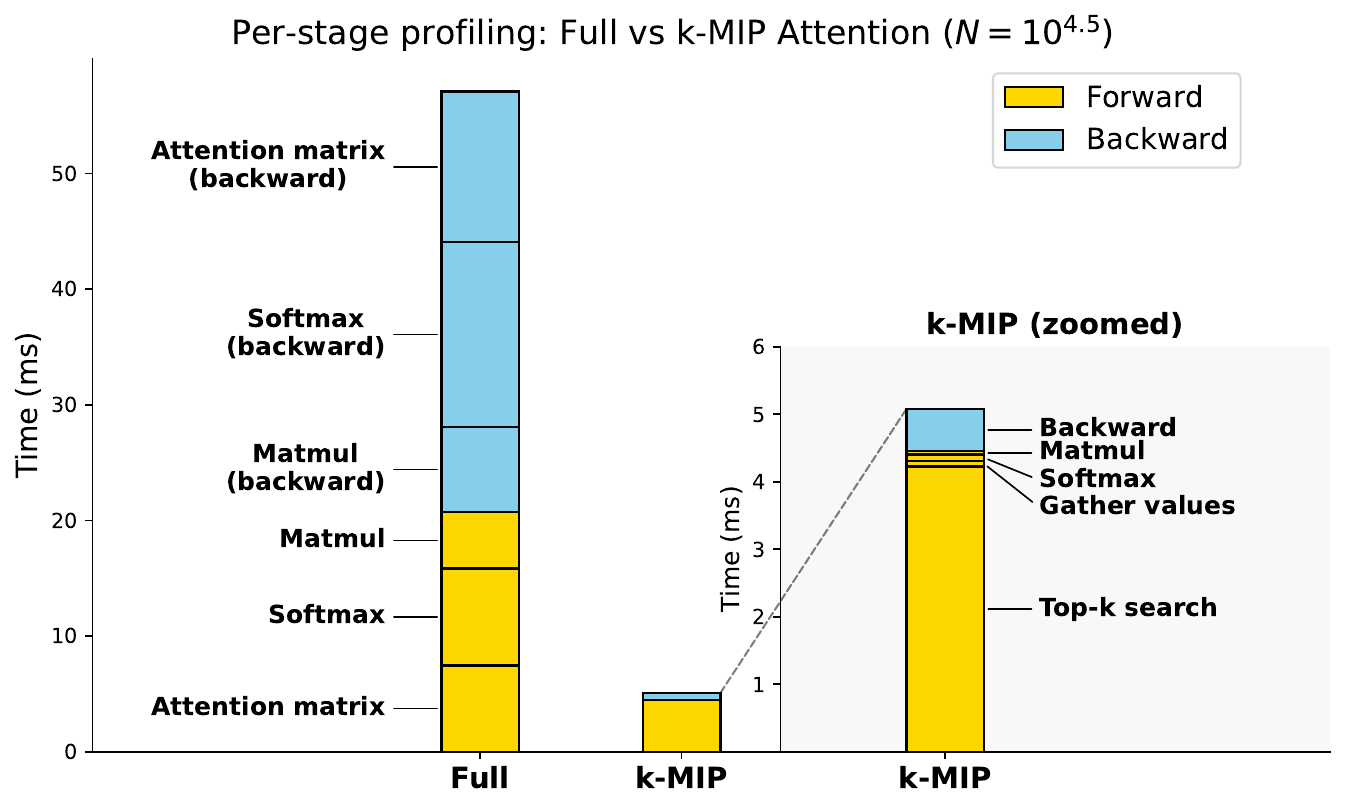}
    \caption{
        Runtime breakdown of full attention and k-MIP attention in the training setting at $N=10^{4.5}$, measured on a single 40GB A100 GPU. The left and middle bars compare full and k-MIP attention at the same scale; the right panel zooms into k-MIP attention.
    }
    \label{fig:profiling_breakdown}
    \vspace{-1.5em}
\end{wrapfigure}

\paragraph{Results}
k-MIP attention achieves a speedup of an order of magnitude over full attention in our implementation. Specifically, k-MIP is $12.43\times$ faster during inference at $N=10^6$ tokens and $8.46\times$ faster during training at $N=10^{4.5}$ tokens. Crucially, while full attention encounters OOM errors for $N \geq 10^5$, k-MIP scales to $N=10^7$ nodes on a single 40GB A100 GPU thanks to its linear memory complexity. A per-stage profiling breakdown (\cref{fig:profiling_breakdown}) reveals that the top-k search dominates the forward pass of k-MIP attention, while the remaining stages are negligible by comparison. Notably, the backward pass is nearly instantaneous, since the top-k indices do not require recomputation during backpropagation (see \cref{sec:algorithm}). For a direct comparison with FlashAttention~\citep{dao_flashattention_2022} refer to Appendix~\ref{app:flashattention_comparison}.

\subsection{Objective 2: Prediction quality}
\label{sec:objective_2}

\begin{table}[t]
    \centering
    \caption{
        Test performance of GPS+k-MIP and baselines on LRGB. Shown is the mean $\pm$ std over four runs. The \firstbest{first}, \secondbest{second}, and \thirdbest{third} best results are highlighted.
        $^\dagger$Taken from~\citep{tonshoff2023did}.
        $^\ddagger$Taken from~\citep{shirzad2023exphormer}.
    }
    \scalebox{0.75}{
    \small
    \begin{tabular}{lccccc}
        \toprule
        & \textbf{PascalVOC-SP (F1$\uparrow$)} & \textbf{COCO-SP (F1$\uparrow$)} & \textbf{Pept-Func (AP$\uparrow$)} & \textbf{Pept-Struct (MAE$\downarrow$)}
        \\
        \midrule
        \textbf{GCN}$^\dagger$                &  20.78 $\pm$ 0.31        & 13.38 $\pm$ 0.07
        &  \firstbest{68.60 $\pm$ 0.50}                 & \firstbest{0.2460 $\pm$ 0.0007} \\
        \textbf{GINE}$^\dagger$               &  27.18 $\pm$ 0.54        & 21.25 $\pm$ 0.09
        &  66.21 $\pm$ 0.67                 & \secondbest{0.2473 $\pm$ 0.0017} \\
        \textbf{GAT}                         &  27.15 $\pm$ 0.49        & 18.86 $\pm$ 0.11         
        &  \secondbest{67.87 $\pm$ 0.56}                 & 0.2488 $\pm$ 0.0018 \\
        \textbf{GatedGCN}$^\dagger$           &  38.80 $\pm$ 0.40        & 29.22 $\pm$ 0.18
        &  \thirdbest{67.65 $\pm$ 0.47}                 & \thirdbest{0.2477 $\pm$ 0.0009} \\
        \midrule
        \textbf{GPS + BigBird}               & 38.75 $\pm$ 0.64        & \thirdbest{35.25 $\pm$ 0.27}         
        &  64.83 $\pm$ 0.73                 & 0.2566 $\pm$ 0.0019 \\
        \textbf{GPS + Performer}             & 37.92 $\pm$ 1.13         & 27.70 $\pm$ 0.27         
        &  66.06 $\pm$ 0.64                 & 0.2643 $\pm$ 0.0008 \\
        \textbf{GPS + Transformer}$^\dagger$  &  \firstbest{44.40 $\pm$ 0.65}        & \firstbest{38.84 $\pm$ 0.55}
        &  65.34 $\pm$ 0.91                 & 0.2509 $\pm$ 0.0014 \\
        \textbf{Exphormer}$^\ddagger$ &  \secondbest{39.75 $\pm$ 0.37}        & 34.55 $\pm$ 0.09         
        &  65.27 $\pm$ 0.43                 & 0.2481 $\pm$ 0.0007 \\
        \midrule
        \textbf{GPS + k-MIP (ours)}          &  \thirdbest{39.69 $\pm$ 0.92}            & \secondbest{35.56 $\pm$ 0.45}         
        &  66.27 $\pm$ 0.44                 & 0.2562 $\pm$ 0.0037 \\
        \bottomrule
    \end{tabular}
    }
    \label{table:lrgb}
\end{table}

We aim to assess the empirical performance of k-MIP attention in graph transformers compared to other scalable attention mechanisms. To this end, we integrate five attention mechanisms into the GraphGPS architecture (see \cref{sec:graphgps}) and compare them against each other and against MPNN baselines on LRGB~\citep{dwivedi2022long}. We follow the methodology of~\cite{tonshoff2023did} with a 500k parameter budget (details in Appendix~\ref{sec:lrgb_hyperparameters}).

\paragraph{Results}
\Cref{table:lrgb} presents our evaluation on LRGB. The results show that MPNNs outperform the GTs on both Peptides datasets, in line with previous work~\citep{tonshoff2023did}. On PascalVOC-SP and COCO-SP, by contrast, graph transformers excel. On both of those, k-MIP attention matches or outperforms other scalable attention mechanisms, though the gap with full attention remains large.

\subsection{Objective 3: Scalability}
\label{sec:objective_3}
We now evaluate k-MIP attention's performance on large graphs using two benchmarks: (1) City-Networks~\citep{liang2025towards} for transductive learning on road network graphs, and (2) point cloud segmentation datasets ShapeNet-Part~\citep{yi2016scalable} and S3DIS~\citep{armeni20163d} converted to k-NN graphs for inductive learning. Experimental details are provided in Appendix~\ref{sec:city_networks_hyperparameters} and Appendix~\ref{sec:point_cloud_datasets_hyperparameters}, respectively.

\begin{figure*}[htbp!]
    \centering
    \includegraphics[width=0.9\textwidth]{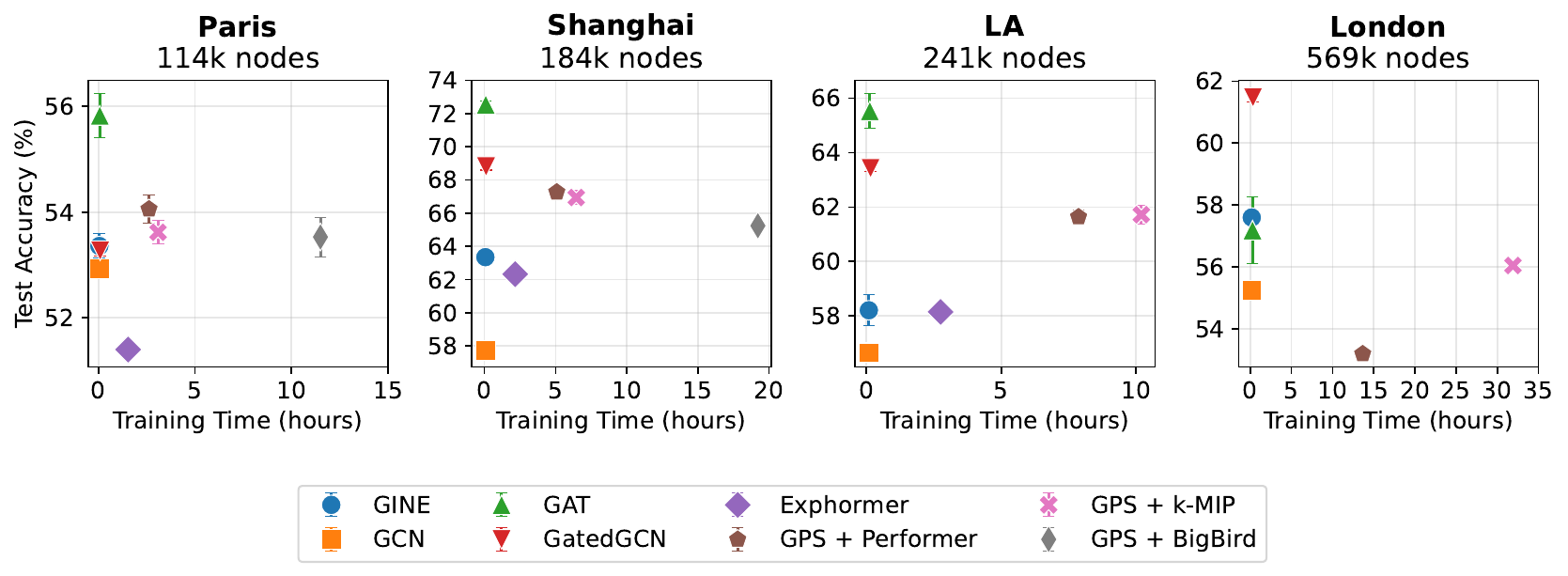}
    
    \caption{Tradeoffs between training time and accuracy across different datasets in City-Networks~\citep{liang2025towards}. Shown is the mean $\pm$ std over four runs, except for the London dataset where we only ran one run for the graph transformer models. GPS+BigBird was not evaluated on LA due to long training times. GPS+Transformer, Exphormer, and GPS+BigBird returned OOM on London.}
    \label{fig:city_tradeoffs}
\end{figure*}

\paragraph{Results on City-Networks}
\Cref{fig:city_tradeoffs} presents the results on the City-Networks benchmark. The performance comparison reveals that GAT outperforms all graph transformer variants. Among graph transformers, GPS+k-MIP achieves comparable accuracy to GPS+Performer and significantly outperforms Exphormer and GPS+BigBird. Crucially, GPS+k-MIP scales to the London dataset with 569k nodes on a single 80GB A100 GPU, showcasing its ability to handle real-world large graphs. This outcome could be attributed to the absence of positional encodings in the benchmark: computing Laplacian eigenvectors becomes computationally prohibitive at this scale, depriving graph transformers of the expressivity boost discussed in Section~\ref{sec:graphgps}.

\begin{table}[tbp!]
    \centering
    \caption{
        Test performance of GPS+k-MIP and baselines on ShapeNet-Part and S3DIS. Shown is the mean $\pm$ std over three runs. The \firstbest{first}, \secondbest{second}, and \thirdbest{third} best results are highlighted.
    }
    \scalebox{0.75}{
    \small
    \begin{tabular}{lcc}
        \toprule
        & \textbf{ShapeNet-Part (F1$\uparrow$)} & \textbf{S3DIS (mIoU$\uparrow$)}
        \\
        \midrule
        \textbf{GCN}                 & 60.18 $\pm$ 0.04  & 39.98 $\pm$ 1.47  \\
        \textbf{GINE}                & 64.57 $\pm$ 0.35 & 44.16 $\pm$ 0.62  \\
        \textbf{GAT}                 & 63.01 $\pm$ 0.17 & 44.24 $\pm$ 1.14  \\
        \textbf{GatedGCN}            & 76.20 $\pm$ 0.32 & 63.71 $\pm$ 1.28  \\
        \midrule
        \textbf{GPS + BigBird}       & \thirdbest{79.65 $\pm$ 0.98} & \thirdbest{67.92 $\pm$ 0.91} \\
        \textbf{GPS + Performer}     & 77.36 $\pm$ 1.23 & 60.83 $\pm$ 0.56 \\
        \textbf{GPS + Transformer}   & --  & -- \\
        \textbf{Exphormer}           & \secondbest{82.62 $\pm$ 0.31} & \firstbest{68.37 $\pm$ 0.23} \\
        \midrule
        \textbf{GPS + k-MIP (ours)}  & \firstbest{82.68 $\pm$ 0.64} & \secondbest{67.99 $\pm$ 1.51} \\
        \bottomrule
    \end{tabular}
    }
    \label{table:inductive_large_graphs}
\end{table}

\paragraph{Results on ShapeNet-Part and S3DIS}
\Cref{table:inductive_large_graphs} presents the results for the ShapeNet-Part and S3DIS benchmarks. Graph transformers clearly outperform MPNNs on both tasks. GPS+k-MIP achieves the best performance on ShapeNet-Part, roughly on par with Exphormer and outperforming GPS+Performer and GPS+BigBird by a significant margin. On S3DIS, GPS+k-MIP is only outperformed by Exphormer.

\section{Conclusion}
\label{sec:conclusion}

k-MIP attention is a new scalable attention mechanism for graph transformers.
It maintains the flexibility of full attention to propagate information between any two nodes, while achieving linear memory complexity and a ten-fold speedup over full attention in our implementation. Using k-MIP attention, we trained graph transformers on graphs with more than 500k nodes on a single 80GB A100 GPU. To our knowledge, this is the first time a non-linearized graph transformer with this flexibility has been demonstrated at such scale.
Further, it consistently ranks among the top-performing scalable attention mechanisms on LRGB, City-Networks, ShapeNet-Part, and S3DIS.
Finally, we have provided novel upper and lower bounds on the expressive power of graph transformers based on k-MIP attention.
On one hand, we have proven that k-MIP transformers can approximate any full-attention transformer to arbitrary precision, guaranteeing that the sparsification does not come at the cost of model expressivity.
On the other hand, we have proven that no graph transformer in the GraphGPS framework is more expressive than a specific instance of the SEG-WL test that depends on the used positional encoding.

\paragraph{Limitations}
First, while k-MIP attention achieves significant practical speedups and scales to graphs with over 500k nodes, its computational complexity remains quadratic in the number of nodes, which may become prohibitive for extremely large graphs.
Second, k-MIP attention inherently exposes each query to only a subset of keys at each step. This creates the possibility that if the most relevant keys for a particular query consistently fall outside the top-$k$ selection during training, the model may fail to learn the corresponding attention patterns (supervision starvation). One possible solution could be to start with a large $k$ and gradually decrease it as training progresses, allowing the model to learn from a wider range of keys in the beginning and then focus on the most relevant ones as it converges. Investigating the practical significance of this limitation and developing mitigation strategies remains an important direction for future work. Lastly, as discussed in Appendix~\ref{app:flashattention_comparison}, pursuing low-level GPU optimizations similar to FlashAttention~\citep{dao_flashattention_2022} would be paramount for its widespread adoption.

\clearpage
\bibliography{gram2026}
\bibliographystyle{gram2026}
\clearpage
\appendix
\crefalias{section}{appendix}
\crefalias{subsection}{appendix}
\crefalias{subsubsection}{appendix}

\section{Background on the Expressive Power of Graph Neural Networks}
\label{Background on Expressive Power}

In this appendix we provide additional literature review as well as mathematical background and definitions to complement the results in the main text.

\subsection{Additional Literature review on the Expressive Power of Graph Neural Networks}
\label{Additional Literature review on the Expressive Power of Graph Neural Networks}

A lot of work has gone into overcoming the expressive power limitation of MPNNs, leading to the development of more expressive models like higher-order GNNs \citep{morris2019weisfeiler}, as well as augmentations to the input features leading to higher expressive power.
In early works, such augmentations took the form of unique and/or random node identifiers \citep{loukas2019graph,sato2021random}, which unfortunately break the permutation invariance or equivariance of the GNN. Hence, more recent works have employed positional encodings based on eigenvectors of the adjacency matrix or Laplacian of the graph \citep{dwivedi2021graph,lim2022sign}, node and distance metrics \citep{li2020distance}, substructure counts \citep{bouritsas2022improving}, or random walks \citep{dwivedi2021graph}.
Many of these have been shown to lead to an expressive power beyond the 1-WL test. Incorporating positional encodings into the Weisfeiler-Lehman test leads to a preorder of WL test variations, where more discriminative encodings lead to a higher power for graph distinguishability. This preorder has been formalized in terms of the Structural Encoding Enhanced Global Weisfeiler-Lehman Test (SEG-WL~test) from \citep{zhu2023structural}.

In their work, \citep{zhu2023structural} also characterise the expressive power of various graph Transformer architectures in terms of a SEG-WL test, including the original graph Transformer \citep{dwivedi2020generalization}, Graphormer \citep{ying2021transformers}, SAN \citep{kreuzer2021rethinking}, Gophormer \citep{zhao2021gophormer}, and SAT \citep{chen2022structure}.
In this work, we extend their analysis by showing that the expressive power of the GraphGPS framework (including our own GPS+k-MIP, Exphormer \citep{shirzad2023exphormer}, and GPS++ \citep{masters2022gps++}) is upper bounded by a SEG-WL test with a structural encoding scheme that is determined by the input node features, the input edge features, and the positional encodings used in the model. By adopting the same framework, our result can be directly compared to those already established for the architectures listed above.
In particular, we show that in the usual 1-WL setting, where all node and edge features are identical and there are no positional encodings, the GraphGPS framework cannot distinguish graphs that the 1-WL test cannot distinguish. We use this insight to advocate for the use of expressive positional encodings.

When sufficiently expressive positional encodings are used, however, many graph transformers can leverage the universal approximation property of sequence transformers to approximate any continuous function on graphs to arbitrary precision.
Such a universal approximation result has been proven for SAN \citep{kreuzer2021rethinking} and Exphormer \citep{shirzad2023exphormer} under the assumption that a maximally expressive positional encoding (the padded adjacency matrix) is used.
in this work, we will prove a completely analogous result for the GPS+k-MIP, showing that no expressivity is lost by using the k-MIP self-attention mechanism in graph transformers.

\subsection{Node colorings and the SEG-WL Test} \label{sec:SEG-WL}

As discussed in~\cref{sec:expressive_power}, node coloring algorithms such as the WL test have been used to establish important upper and lower bounds on the expressive power of GNNs. We contribute to this literature by showing a similar upper bound on the expressive power of the GraphGPS framework, using the SEG-WL test from \cite{zhu2023structural}.
This section provides an overview of the background necessary to understand node coloring algorithms and the SEG-WL test.

\subsubsection{Node Colorings and the 1-WL test} \label{ssec:1-WL}

We denote a \emph{multiset} of elements by $\multiset{a_1, \ldots, a_n}$, where the order of the elements does not matter. We denote the set of class of multisets with elements from a class $\mathcal{S}$ by $\mathbb{N}^\mathcal{S}$.

\begin{definition}
    A \emph{node coloring} of a graph $G$ is a function $c: V \to \mathcal{C}$ that assigns a colour to each node in $G$. Here, $\mathcal{C}$ denotes the set of possible colours.
\end{definition}

Note that there is no restriction on the class $\mathcal{C}$. In particular, $\mathcal{C}$ could be $\mbr^d$, therefore any GNN and every instance of the GraphGPS framework can be seen as an algorithm that generates node colorings.

The Weisfeiler-Lehman test (abbreviated 1-WL test, to distinguish it from higher-order variations \cite{morris2019weisfeiler})
is such a node coloring algorithm that was originally introduced to test graph isomorphism. The test iteratively refines the node coloring of a graph by aggregating the colours of neighbouring nodes, as described in the following definition and illustrated in \cref{fig:1-WL-test}.

\begin{definition}
    The \emph{Weisfeiler-Lehman Test}~(1-WL Test) is a graph isomorphism test that iteratively refines the node coloring of a graph as described below.

    Let the input be a graph $G=(V,E)$ with initial node coloring $c^0: V \to \mathcal{C}$. The test iteratively updates the colour of each node $v\in V$ as
    \begin{equation}
        c^{l}(v) = \tau\left(c^{l-1}(v), \multiset{c^{l-1}(r) \mid r \in \mathcal{N}_{in}(v)}\right),
    \end{equation}
    where $\tau$ is an injective map from $\mathcal{C} \times \mathbb{N}^\mathcal{C}$ to $\mathcal{C}$.

    The algorithm can be terminated after a fixed number of iterations $L$, or when the equivalence classes of the node colorings stabilise.
\end{definition}

\begin{figure}[hbpt!]
    \centering
    \begin{subfigure}[b]{0.419\textwidth}
        \centering
        \includegraphics[width=\textwidth]{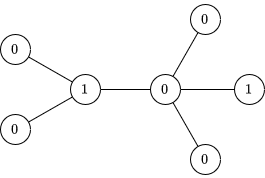}
        \vspace{0.1cm}
        \caption{Initial node coloring (iteration 0)}
        \label{fig:1-WL-iter0}
    \end{subfigure}
    \hfill
    \begin{subfigure}[b]{0.481\textwidth}
        \centering
        \includegraphics[width=\textwidth]{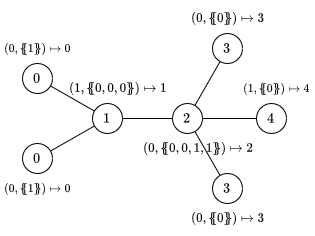}
        \caption{Node coloring after 1 iteration}
        \label{fig:1-WL-iter1}
    \end{subfigure}
    \caption[
        Illustration of a single iteration of the Weisfeiler-Lehman test.
    ]{
        Illustration of a single iteration of the Weisfeiler-Lehman test. The node labels (in $\mathcal{C} = \mathbb{N}$) are written inside the nodes. The mapping $\tau$ is written next to the nodes.
    }
    \label{fig:1-WL-test}
\end{figure}

The 1-WL test can be used to test whether two graphs are isomorphic, by comparing the multisets of node colours generated by the test on the two graphs. If the multisets are different, the graphs are guaranteed to be non-isomorphic. In this case, the 1-WL test is said to distinguish both graphs. However, if the multisets are the same, they may or may not be isomorphic. 
This idea can be extended to other node coloring algorithms as follows.

\begin{definition}
    An algorithm $\mathcal{A}$ that generates node colorings \emph{distinguishes} two non-isomorphic graphs $G_1$ and $G_2$ iff the multisets of node colorings generated by $\mathcal{A}$ on $G_1$ and $G_2$ are not equal, i.e. iff
    \begin{equation}
        \multiset { \mathcal{A}(G_1)(v) \mid v\in V_1 } \neq \multiset { \mathcal{A}(G_2)(v) \mid v\in V_2 }.
    \end{equation}
\end{definition}

While the 1-WL test is a powerful tool for distinguishing pairs of non-isomorphic graphs, it cannot distinguish all such pairs. \Cref{fig:1-WL-indistinguishable-graphs} shows an example of two non-isomorphic graphs that are indistinguishable by the 1-WL test \citep{rattan2023weisfeiler}.

\begin{figure}[hbpt!]
    \hspace*{\fill}
    \begin{subfigure}[b]{0.419\textwidth}
        \centering
        \includegraphics[width=\textwidth]{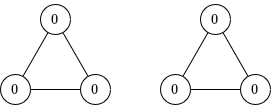}
        \caption{Two disjoint 3-cycles}
        \label{fig:1-WL-indistinguishable-graph-1}
    \end{subfigure}
    \hspace*{\fill}
    \begin{subfigure}[b]{0.264\textwidth}
        \centering
        \includegraphics[width=\textwidth]{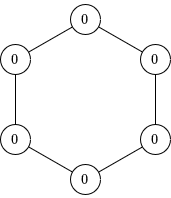}
        \caption{6-cycle}
        \label{fig:1-WL-indistinguishable-graph-2}
    \end{subfigure}
    \hspace*{\fill}
    \caption{Two graphs that are indistinguishable by the 1-WL test.}
    \label{fig:1-WL-indistinguishable-graphs}
\end{figure}

\subsubsection{SEG-WL Test} \label{ssec:SEG-WL}

Augmenting the input features with positional encodings can increase the power of graph neural networks for distinguishing non-isomorphic graphs.
In \cite{zhu2023structural}, the authors introduce the Structural Encoding Enhanced Global Weisfeiler-Lehman Test (SEG-WL~test), which is a generalisation of the 1-WL test that incorporates structural encodings into the node coloring algorithm.
Because we will characterise the expressive power of the GraphGPS framework (which includes the GPS+k-MIP) using the SEG-WL test, we will define the SEG-WL test in the remainder of this section. The following definition forms a general framework for feature augmentation, that can augment both node-wise and edge-wise features.

\begin{definition}[\cite{zhu2023structural}]
    A \emph{structural encoding scheme} $S = (f_A, f_R)$ is a pair of functions, where for any graph $G = (V, E)$, $f_A(v, G) \in \mathcal{C}$ defines the encoding of any node $v \in V$ and $f_R(v, u, G) \in \mathcal{C}$ defines the encoding of any node pair $(v, u) \in V \times V$.
    $S$ is called \emph{strongly regular} if there exists a discriminator function $\xi: \mathcal{C} \times \mathcal{G} \to \{0,1\}$ such that $\xi(f_R(v, u, G), G) = 1$ if and only if $u = v$.
\end{definition}

For each structural encoding scheme $S$, there is an associated $S$-SEG-WL test, which is defined as follows.

\begin{definition}[\cite{zhu2023structural}]
    The \emph{$S$-SEG-WL Test}, where $S=(f_A,f_R)$ is a structural encoding scheme, is a graph isomorphism test that iteratively refines the node coloring of a graph as described below.

    Let the input be a graph $G=(V,E, \bs X, \bs E, \bs A)$. The test proceeds as follows:
    \begin{enumerate}
        \item Initialize the node coloring $c^{0}: V \to \mathcal{C}$ as
        \begin{equation}    \label{eq:SEG-WL-initialization}
            c^0(v) = \Phi_0(\bs X_v, f_A(v,G)),
        \end{equation}
        where $\Phi_0$ is an injective function from $\mbr^{d}\times\mathcal{C}$ to $\mathcal{C}$.
        \item In iteration $l$, compute the colour $c^{l}(v)$ of each node $v\in V$ as
        \begin{equation} \label{eq:SEG-WL-update}
            c^{l}(v) = \Phi(\multiset{ (c^{l-1}(r), f_R(v,r,G)) \mid r \in V}),
        \end{equation}
        where $\Phi$ is a function that injectively maps $\mathbb{N}^{\mathcal{C} \times \mathcal{C}}$ to $\mathcal{C}$.
        \end{enumerate}
\label{def:S-SEG-WL}
\end{definition}

\subsubsection{The SEG-WL Preorder} \label{ssec:SEG-WL_Preorder}

Different node coloring algorithms can be compared in terms of their expressive power by comparing the pairs of graphs that they can distinguish. If an algorithm $\mathcal{A}$ can distinguish every pair of non-isomorphic graphs that $\mathcal{B}$ can distinguish, we say that $\mathcal{A}$ is \emph{more expressive} than $\mathcal{B}$.
If, in addition, there exists a pair of graphs that $\mathcal{A}$ can distinguish but $\mathcal{B}$ cannot, we say that $\mathcal{A}$ is \emph{strictly more expressive} than $\mathcal{B}$.
It is easy to check that the relation ``is more expressive than'' is reflexive and transitive, and therefore forms a preorder on the class of node coloring algorithms. Note that it is not antisymmetric, so it is not a partial order as was wrongly stated in \cite{zhu2023structural}.

As there is a SEG-WL test corresponding to each structural encoding scheme, the relation ``is more expressive than'' also forms a preorder on the class of SEG-WL tests.
One result that provides insight in this preorder is Theorem 3 from \cite{zhu2023structural}, which embodies the intuitive result that SEG-WL tests become more expressive as their structural encoding schemes become more discriminative. To formalize this, we first introduce the notion of a \emph{refinement} of a structural encoding scheme.

\begin{definition}
    Consider two structural encoding schemes $S=(f_A, f_R)$ and $S' = (f_A', f_R')$. We call $S'$ a \emph{refinement} of $S$ (notated $S' \succsim S$) if there exist mappings $p_A,p_R$ such that for any $G$ and $u,v\in V$, we have
    \begin{align}
        f_A(v,G) &= p_A(f_A'(v,G)), \\
        f_R(v,u,G) &= p_R(f_R'(v,u,G)).
    \end{align}
    Then the $S$-SEG-WL test is more expressive than the $S'$-SEG-WL test in testing non-isomorphic graphs.
\end{definition}

\begin{theorem}[Theorem 3 in \cite{zhu2023structural}]  \label{thm:SEG-WL-Refinement}
    For two structural encoding schemes $S$ and $S'$, if $S' \succsim S$, then the $S'$-SEG-WL test is more expressive than the $S$-SEG-WL test. Further, if $S$-SEG-WL distinguishes two non-isomorphic graphs $G_1$ and $G_2$ after $t$ iterations, then $S'$-SEG-WL distinguishes $G_1$ and $G_2$ after at most $t$ iterations.
\end{theorem}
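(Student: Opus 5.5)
The plan is to show that the $S'$-SEG-WL coloring is, at every iteration, a refinement of the $S$-SEG-WL coloring. Concretely, I will prove by induction on $l$ that there is a single map $q_l:\mathcal{C}\to\mathcal{C}$ such that $c^l_S(v) = q_l(c^l_{S'}(v))$ for \emph{every} graph $G$ and every node $v\in V$. The map must not depend on $G$, since colors from $G_1$ and $G_2$ have to be compared later. Here $c^l_S$ and $c^l_{S'}$ denote the colorings of Definition~\ref{def:S-SEG-WL} under the schemes $S$ and $S'$, with the same fixed injective maps $\Phi_0,\Phi$.

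For the base case, $c^0_{S'}(v)=\Phi_0(\bs X_v, f_A'(v,G))$. Since $\Phi_0$ is injective, it has a left inverse on its image, which recovers the pair $(\bs X_v, f_A'(v,G))$. Applying $p_A$ to the second coordinate gives $(\bs X_v, f_A(v,G))$, and then applying $\Phi_0$ gives $c^0_S(v)$. So $q_0 = \Phi_0\circ(\mathrm{id}\times p_A)\circ\Phi_0^{-1}$ on the image of $\Phi_0$, extended arbitrarily elsewhere. It is independent of $G$ because $p_A$ is.

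For the inductive step, assume $q_{l-1}$ exists. By injectivity of $\Phi$, the color $c^l_{S'}(v)$ determines the multiset $\multiset{(c^{l-1}_{S'}(r), f_R'(v,r,G)) \mid r\in V}$. Apply the map $(a,b)\mapsto (q_{l-1}(a), p_R(b))$ elementwise to this multiset. By the inductive hypothesis and the definition of refinement, the result is exactly $\multiset{(c^{l-1}_S(r), f_R(v,r,G)) \mid r\in V}$. Applying $\Phi$ then yields $c^l_S(v)$. Pushing a multiset forward through a function is well defined, so $q_l := \Phi\circ(\text{elementwise }(q_{l-1},p_R))\circ\Phi^{-1}$ works. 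Again it is uniform over all graphs.

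For the graph-level conclusion, suppose the $S'$-SEG-WL test fails to distinguish $G_1,G_2$ at iteration $t$, meaning $\multiset{c^t_{S'}(v)\mid v\in V_1}=\multiset{c^t_{S'}(v)\mid v\in V_2}$. Pushing both sides forward through the common map $q_t$ gives equality of the $S$-color multisets at iteration $t$. By contraposition, whenever $S$-SEG-WL distinguishes $G_1,G_2$ after $t$ iterations, so does $S'$-SEG-WL after at most $t$ iterations. This yields both claims of the theorem.

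The only delicate point is keeping $q_l$ independent of the graph and well defined. This is why I rely on the fixed, graph-independent injective maps $\Phi_0$ and $\Phi$ and on the graph-independent maps $p_A,p_R$, and take inverses only on images. Beyond that, the argument is routine.
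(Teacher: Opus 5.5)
Your proof is correct and complete. The paper gives no proof of this statement: it is quoted as Theorem~3 of \cite{zhu2023structural}. The nearest in-paper comparison is therefore the induction used for \cref{thm:SEG-WL-upper-bound}.

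That induction is phrased pointwise. It proves implications such as ``$c^l(v)=c^l(w)$ implies $d^l(v)=d^l(w)$'' for nodes $v,w$ of two graphs. It then transfers equality of multisets by matching equal elements in pairs. Your graph-independent factor maps $q_l$ with $c^l_S=q_l\circ c^l_{S'}$ carry the same information. Such a map exists on the realized colors exactly when equal $S'$-colors force equal $S$-colors across all graphs. The functional form, however, turns every multiset step into a pushforward. Both the inductive step and the graph-level conclusion at the same iteration $t$ then follow without any pairing argument.

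Two minor remarks. First, assuming both tests share $\Phi_0$ and $\Phi$ is inessential. Composing the $S$-test's maps with left inverses of the $S'$-test's maps gives the same construction. Second, your argument correctly uses no strong regularity, which the statement does not assume. The lemma, by contrast, needs it to recover $c^l(v)$ from the multiset defining $c^{l+1}(v)$.
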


\section{Proof of \cref{thm:upper-bound-on-expressiveness}}

In this appendix, we provide the proof for \cref{thm:upper-bound-on-expressiveness}, repeated here for clarity.

\begin{theorem}[\Cref{thm:upper-bound-on-expressiveness} repeated]
    Let $\mathcal{A}$ be an instance of the GraphGPS framework that enhances its node features with $\nu(\bs A)$ and its edge features with $\mu(\bs A)$. Then $\mathcal{A}$ can only distinguish graphs that are also distinguishable by the $S$-SEG-WL test, where $S=(f_A,f_R)$ and $f_A,f_R$ are defined as
    \begin{align}
        f_A(v, G) &= \nu(\bs A)_v, \\
        f_R(v, u, G) &=
        \begin{cases}
            (0, \mathbbm{1}_{u=v}, \bs0_{d_{edge}}, \bs 0_{d_{PE}})  & \text{if } (u,v)\notin E \\
            (1, \mathbbm{1}_{u=v}, \bs E_{uv}, \mu(\bs A)_{uv})  & \text{if } (u,v)\in E
        \end{cases},
    \end{align}

    where the set of possible colors is $\mathcal{C} = \R^{d_{PE}} \, \cup \, \{0,1\}^2\times\R^{d_{edge}}\times\R^{d_{PE}} \, \cup \, \R^d \, \cup \, \R^{d_{edge}}$.
\end{theorem}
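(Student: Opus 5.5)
The plan is the standard "coloring refinement" induction used for MPNN versus 1-WL bounds, adapted to the global aggregation of the $S$-SEG-WL test. Let $c^l$ denote the $S$-SEG-WL coloring and $(\bs X^{(l)}, \bs E^{(l)})$ the GraphGPS features after $l$ layers. I will prove by induction on $l$ that there exist functions $\phi_l$ and $\psi_l$ such that, for every graph $G$ and all nodes $u,v$,
\[
\bs X^{(l)}_v = \phi_l\bigl(c^{l}(v)\bigr),
\qquad
\bs E^{(l)}_{uv} = \psi_l\bigl(c^{l}(v), c^{l}(u), f_R(v,u,G)\bigr) \ \text{ for } (u,v)\in E .
\]
Given this, if GraphGPS produces different multisets $\multiset{\bs X^{(L)}_v}$ on $G_1$ and $G_2$, then the multisets $\multiset{c^L(v)}$ must also differ. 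So any pair distinguished by $\mathcal{A}$ is distinguished by $S$-SEG-WL. I will also assume the final readout is a function of the node multiset.

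\textbf{Base case.} The initial node features are $(\bs X_v, \nu(\bs A)_v)$, which is an injective image of $c^0(v)=\Phi_0(\bs X_v, f_A(v,G))$. The initial edge features $(\bs E_{uv}, \mu(\bs A)_{uv})$ on edges are read off from the last two components of $f_R(v,u,G)$. The encoding is built so that both hold: the flag $1$ marks edges, and $\mathbbm{1}_{u=v}$ lets $v$ recognise itself.

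\textbf{Inductive step.} The aim is to show that the full layer output at $v$ is a function of the multiset
\[
M_v=\multiset{(c^{l-1}(r), f_R(v,r,G)) \mid r\in V}.
\]
Since $\Phi$ is injective, $c^{l}(v)=\Phi(M_v)$ determines $M_v$, and this gives the required $\phi_l$. I will check each component of the layer in turn.

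\begin{itemize}
\item The self feature $\bs X^{(l-1)}_v$ comes from the unique element of $M_v$ whose $f_R$ has second component $1$ (strong regularity).
\item The MPNN layer, say GatedGCN, aggregates over neighbours $r$ a function of $\bs X^{(l-1)}_v$, $\bs X^{(l-1)}_r$ and $\bs E^{(l-1)}_{vr}$. The neighbours are exactly the elements of $M_v$ with first flag $1$. By induction each summand is a function of $c^{l-1}(v)$ (already recovered), $c^{l-1}(r)$ and $f_R(v,r,G)$, so the aggregate is a function of $M_v$.
\item Global attention, in any of the listed variants, computes $\sum_r w(\bs X_v,\bs X_r)\,\bs X_r W_V$ normalised by $\sum_r w$. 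That is a function of $\bs X_v$ and $\multiset{\bs X_r}$, hence of $M_v$.
\item For k-MIP, the top-$k$ set is determined by the multiset of scores together with the permutation-equivariant tie-breaking rule. Tie-breaking is the subtle point: it must depend only on the features, not on node identities, or equal-colour nodes could be treated differently. I will assume it is feature-determined, in line with the footnote in the definition of the transformer block.
\item Exphormer's expander and virtual-node edges would need to be either encoded as part of $\bs E$/$\mu$ or treated as all-to-all. I will remark on this rather than handle it in full.
\end{itemize}

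Residuals, normalisation and the MLP are node-wise, so they preserve the claim. For the edge update, GatedGCN sets $\bs E^{(l)}_{uv}$ as a function of $\bs E^{(l-1)}_{uv}$, $\bs X^{(l-1)}_u$ and $\bs X^{(l-1)}_v$. By induction this is a function of $c^{l-1}(u)$, $c^{l-1}(v)$ and $f_R$. Each $c^{l-1}$ is in turn a function of $c^{l}$, because $c^l$ determines $M_v$ and the self element of $M_v$ carries $c^{l-1}(v)$. This closes the edge half of the induction.

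\textbf{Main obstacle.} The main obstacle is the bookkeeping that keeps edge features expressible through colours: $c^l$ must refine $c^{l-1}$, and that refinement rests on strong regularity via $\mathbbm{1}_{u=v}$. The other delicate point is the tie-breaking issue for k-MIP noted above. Everything else is routine composition of functions.
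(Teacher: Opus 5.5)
Your proposal is correct and follows essentially the same route as the paper. Your claims that $\bs X^{(l)}_v$ and $\bs E^{(l)}_{uv}$ are functions of the colours $c^l$ (and of $f_R$) are exactly the paper's induction hypotheses (IH.1)/(IH.2), ``equal colours imply equal embeddings''; strong regularity is used the same way to recover $c^{l-1}(v)$ from $c^{l}(v)$, and your component-by-component check mirrors the paper's construction of $\sigma$ and $\tau$ from assumptions A1--A4. The only difference is organisational: the paper first proves a general lemma (a modified Theorem 1 of Zhu et al.) for any model with that update form and then shows GraphGPS fits it, whereas you inline both steps (minor slip: the message into $v$ should use $\bs E^{(l-1)}_{rv}$, the edge $(r,v)$ encoded in $f_R(v,r,G)$, not $\bs E^{(l-1)}_{vr}$).
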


 For this theorem, we slightly simplify the GraphGPS architecture presented in Figure~\ref{fig:graphgps} by omitting batch normalization. This simplification is necessary to ensure that the output of the model depends only on a single input graph, which enables the characterization of the model as a mapping. This mapping is also deterministic, because dropout is disabled at inference time. Further, we focus on node-level tasks, where no graph pooling or edge decoding is applied; instead, the node features are directly returned without additional processing. The results we present are, however, easily extensible to graph- and edge-level tasks.

We would like to highlight that we use the following properties of the components of the GraphGPS framework. Note that assuming these properties is not restrictive, as they are satisfied by all instances of $\textrm{MPNN}$, $\textrm{Attn}$, and $\textrm{MLP}$ implemented in the GraphGPS framework. In particular, we do not yet assume that $\textrm{Attn}$ is the k-MIP self-attention mechanism, as our results in Section~\ref{sec:graphgps} hold for any instance of the GraphGPS framework.

\begin{enumerate}
        \item[\namedlabel{item:MPNN-node-assumption}{A1}.] The output embedding of $\textrm{MPNN}$ for node $v$ only depends on the previous embedding of $v$, the embeddings of the nodes in the in-neighbourhood of $v$ and the edge embeddings of the corresponding incoming edges. Further, it is invariant w.r.t. permutations of the neighbours.
        \item[\namedlabel{item:MPNN-edge-assumption}{A2}.] The output embedding of $\textrm{MPNN}$ for edge $(r,v)$ only depends on the previous embedding of $(r,v)$ and the previous embeddings of the nodes $r$ and $v$.
        \item[\namedlabel{item:Attn-assumption}{A3}.] $\textrm{Attn}$ is equivariant w.r.t. token permutations, and thus to permutations of the node indices.
        \item[\namedlabel{item:MLP-assumption}{A4}.] $\textrm{MLP}$ acts on each node embedding independently.
\end{enumerate}

\subsection{Preliminary Lemma}    \label{sec:proof-of-thm:upper-bound-on-expressiveness}

To prove \cref{thm:upper-bound-on-expressiveness}, we will first establish the following lemma.
This lemma and its proof are based on Theorem 1 in \cite{zhu2023structural}, albeit substantial modifications were necessary to account for the fact that the GraphGPS framework iteratively computes edge embeddings in addition to node embeddings.

\begin{lemma}[Theorem 1 in \cite{zhu2023structural}, modified]  \label{thm:SEG-WL-upper-bound}
    For any strongly regular structural encoding scheme $S=(f_A, f_R)$ and graph $G=(V,E)$ with node features $\bs X$, if a graph neural model $\mathcal{A}$ that computes node embeddings $d^ l: V\to \mathcal{C}$ and node pair embeddings $e^ l: V\times V \to \mathcal{C}$ satisfies the following conditions:
    \begin{enumerate}
        \item[\namedlabel{item:cond1}{C1}.] $\mathcal{A}$ computes the initial node and node pair embeddings with
        \begin{align}
            d^{0}(v) &= \phi(\bs X_v, f_A(v,G)) \\
            e^{0}(r,v)& = \psi(f_R(v,r,G))
        \end{align}
        where $\phi$ and $\psi$ are model-specific functions.
        \item[\namedlabel{item:cond2}{C2}.] $\mathcal{A}$ aggregates and updates node and edge embeddings iteratively with
        \begin{align}
            d^{l}(v) &= \sigma\left(
                \multiset{(d^{l-1}(r), e^{l-1}(r,v), f_R(v,r,G)) \mid r \in V}
            \right)
            \label{eq:mca-node-update}
            \\
            e^ l (r,v) &= \tau\left(
                e^{l-1}(r,v), d^{l-1}(r), d^{l-1}(v), f_R(v,r,G)
            \right)
            \label{eq:mca-edge-update}
        \end{align}
        where $\sigma$ and $\tau$ are model-specific functions.
    \end{enumerate}
    Then any two graphs $G_1$ and $G_2$ that are distinguished by $\mathcal{A}$ in iteration $l$ are also distinguished by $S$-SEG-WL in iteration $l$.
\end{lemma}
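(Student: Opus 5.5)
We prove the lemma by induction on $l$. The inductive claim is stronger than the statement: the $S$-SEG-WL colouring refines the model's embeddings jointly across both graphs. Let $c^l$ denote the $S$-SEG-WL colouring. For all graphs $G,G'$ in $\{G_1,G_2\}$, all nodes $v\in V$, $v'\in V'$, and all $r\in V$, $r'\in V'$, we show:
\begin{align}
c^l(v)=c^l(v') &\;\Rightarrow\; d^l(v)=d^l(v'), \\
\bigl(c^l(r),c^l(v),f_R(v,r,G)\bigr)=\bigl(c^l(r'),c^l(v'),f_R(v',r',G')\bigr) &\;\Rightarrow\; e^l(r,v)=e^l(r',v').
\end{align}
The edge part of the claim is what makes the argument close. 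Edge embeddings are not coloured by SEG-WL, so we have to show they are determined by the node colours of their endpoints together with the relative encoding.

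\textbf{Base case.} By \ref{item:cond1} and injectivity of $\Phi_0$, $c^0(v)=c^0(v')$ gives $\bs X_v=\bs X'_{v'}$ and $f_A(v,G)=f_A(v',G')$. Hence $d^0(v)=d^0(v')$. The edge embedding $e^0(r,v)=\psi(f_R(v,r,G))$ depends only on $f_R$, so the edge part holds trivially.

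\textbf{Inductive step, nodes.} Suppose $c^l(v)=c^l(v')$. Injectivity of $\Phi$ gives equality of the multisets
\[
\multiset{(c^{l-1}(r),f_R(v,r,G))\mid r\in V}=\multiset{(c^{l-1}(r'),f_R(v',r',G'))\mid r'\in V'}.
\]
This yields a bijection $\pi:V\to V'$ with $c^{l-1}(\pi r)=c^{l-1}(r)$ and $f_R(v',\pi r,G')=f_R(v,r,G)$.

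By strong regularity, the discriminator $\xi$ identifies the unique $r$ with $f_R(v,r,G)$ flagged as the diagonal, namely $r=v$. Therefore $\pi(v)=v'$, and so $c^{l-1}(v)=c^{l-1}(v')$. This is the step where strong regularity is essential: it recovers the colour of the centre node, which the edge update needs.

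The inductive hypothesis now applies to each matched pair. It gives $d^{l-1}(r)=d^{l-1}(\pi r)$. Since the triples $(c^{l-1}(r),c^{l-1}(v),f_R(v,r,G))$ and $(c^{l-1}(\pi r),c^{l-1}(v'),f_R(v',\pi r,G'))$ agree, it also gives $e^{l-1}(r,v)=e^{l-1}(\pi r,v')$. So the multisets in \eqref{eq:mca-node-update} coincide under $\pi$, and $d^l(v)=d^l(v')$.

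One subtlety is that the multiset fixes the colours of $r$ only at level $l-1$. That is why the inductive claim for edges is stated at level $l-1$ with level-$(l-1)$ colours. We also use that $c^l$ refines $c^{l-1}$, which follows from the diagonal recovery above.

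\textbf{Inductive step, edges.} Suppose $c^l(r)=c^l(r')$, $c^l(v)=c^l(v')$ and $f_R(v,r,G)=f_R(v',r',G')$. By the refinement just noted, $c^{l-1}(r)=c^{l-1}(r')$ and $c^{l-1}(v)=c^{l-1}(v')$. The inductive hypothesis then gives equal $d^{l-1}$ values at both endpoints and equal $e^{l-1}$ values on the edge. Since \eqref{eq:mca-edge-update} depends only on these quantities and $f_R$, we get $e^l(r,v)=e^l(r',v')$.

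\textbf{Conclusion.} Suppose SEG-WL does not distinguish $G_1$ and $G_2$ at iteration $l$, i.e.\ the colour multisets agree. Then there is a colour-preserving bijection $V_1\to V_2$. By the node claim it also preserves $d^l$, so the embedding multisets agree and $\mathcal{A}$ does not distinguish the graphs either. Taking the contrapositive gives the lemma.

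The main obstacle is the inductive step for nodes, specifically carrying the edge-embedding information through the induction. We handle it by keying edges to the endpoint colour pair together with $f_R$, and by using strong regularity to pin down $\pi(v)=v'$.
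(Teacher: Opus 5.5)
Your proposal is correct and is essentially the paper's proof. It uses the same two-part inductive invariant: node embeddings are determined by the SEG-WL colour, and edge embeddings are determined by the two endpoint colours together with $f_R$. It also uses the same diagonal-recovery argument via strong regularity to obtain $c^{l}(v)=c^{l}(v')\Rightarrow c^{l-1}(v)=c^{l-1}(v')$, and the same multiset-matching conclusion.

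One caveat applies equally to your proof and the paper's. The step $\pi(v)=v'$ tacitly requires $\xi$ to give the same verdict on equal codes coming from $G$ and $G'$. The literal definition $\xi:\mathcal{C}\times\mathcal{G}\to\{0,1\}$ does not guarantee this, but it holds whenever $\xi$ ignores its graph argument, as for the scheme in Theorem~\ref{thm:upper-bound-on-expressiveness}, where $\xi$ reads off the second coordinate.
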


\begin{proof}
    We consider two not necessarily distinct graphs $G_v = (V_v, E_v, \bs X_v, \bs E_v, \bs A_v)$ and $G_w = (V_w, E_w, \bs X_w, \bs E_w, \bs A_w)$ and denote the colour given by $S$-SEG-WL to node $u\in V_v \cup V_w$ at iteration $l$ by $c^ l(u)$.
    We first show by induction that for any iteration $l$ and any nodes $v,r \in V_v, w,s\in V_w$, the following two implications hold:
    \begin{align}
        c^ l(v) = c^ l(w) &\implies d^ l(v) = d^ l(w)
        \tag{IH.1}
        \label{eq:IH1}
        \\
        \begin{rcases}
            \begin{alignedat}{2}
                && c^ l(r) = c^ l(s)& \\
                && f_R(v,r,G_v) = f_R(w,s,G_w)& \\
                && c^ l(v) = c^ l(w)&
            \end{alignedat}
        \end{rcases}
        &\implies
        e^ l(r,v) = e^ l(s,w) 
        \tag{IH.2}
        \label{eq:IH2}
    \end{align}
    
    \paragraph{Base case}
    For $l = 0$,
    \cref{eq:IH1} holds because $c^ 0(v) = c^ 0(w)$ implies $\bs X_v = \bs X_w$ and $f_A(v,G_v) = f_A(w,G_w)$, which leads to $d^ 0(v) = d^ 0(w)$.
    Further, \cref{eq:IH2} holds because $f_R(v,r,G_v) = f_R(w,s,G_w)$ directly entails $e^ 0(r,v) = e^ 0(s,w)$.

    \paragraph{Induction step for \cref{eq:IH1}}
    Suppose that the induction hypotheses \cref{eq:IH1} and \cref{eq:IH2} hold for iteration $l$ and that $c^ {l+1}(v) = c^ {l+1}(w)$.
    From the injectivity of function $\Phi$, we have 
    \begin{equation}    \label{eq:equality-because-of-injectivity}
        \multiset{(c^ l(r), f_R(v,r, G_v)) \mid r \in V_v} = \multiset{(c^ l(s), f_R(w,s, G_w)) \mid s \in V_w}
    \end{equation}
    Because $f_R$ is strongly regular, we can choose a discriminator function $\xi$ such that for all graphs $G$ and nodes $a,b$, $\xi(f_R(a,b,G), G) = \mathbbm{1}_{a=b}$. By applying $\xi$ to the second element of each tuple in both multisets together with the corresponding graph, we obtain
    \begin{equation}
        \multiset{(c^ l(r), \mathbbm{1}_{r=v}) \mid r \in V_{v}} = \multiset{(c^ l(s), \mathbbm{1}_{s=w}) \mid s \in V_{w}}
    \end{equation}
    In each of the above multisets, there is a unique tuple for which the second element is 1, namely the tuples corresponding to $r=v$ and $s=w$, respectively. Therefore, these tuples must be equal, which implies
    \begin{equation}    \label{eq:cl-equality}
        c^ l(v) = c^ l(w)
    \end{equation}
    Because the two multisets in \cref{eq:equality-because-of-injectivity} are identical and finite, their elements can be matched in pairs. Further, by the induction hypotheses and the fact that $c^ l(v) = c^ l(w)$, we have that for any $r\in V_v$ and $s\in V_w$,
    $(c^ l(r), f_R(v,r, G_v)) = (c^ l(s), f_R(w,s, G_w))$
    implies
    $(d^ l(r), e^ l(r,v), f_R(v,r, G_v)) = (d^ l(s), e^ l(s,w), f_R(w,s, G_w))$.
    Hence,
    \begin{equation}
        \begin{split}
            &\multiset{(d^ l(r), e^ l(r,v), f_R(v,r, G_v)} \mid r \in V_{v}) \\
            &= \multiset{(d^ l(s), e^ l(s,w), f_R(w,s, G_w)) \mid s \in V_{w}}
        \end{split}
    \end{equation}
    Considering $\mathcal{A}$ updates node labels according to \cref{eq:mca-node-update},
    $d^ {l+1}(v) = d^ {l+1}(w)$ holds. This completes the induction step for \cref{eq:IH1}.
    
    \paragraph{Induction step for \cref{eq:IH2}}
    While retaining our assumptions that the induction hypotheses \cref{eq:IH1} and \cref{eq:IH2} hold for iteration $l$ and that $c^{l+1}(v) = c^{l+1}(w)$, suppose additionally that $c^{l+1}(r) = c^{l+1}(s)$ and $f_R(v,r,G_v) = f_R(w,s,G_w)$.
    Analogously to the derivation preceding \cref{eq:cl-equality}, it follows from 
    $c^{l+1}(v) = c^{l+1}(w)$ that $c^{l}(v) = c^{l}(w)$ and from $c^{l+1}(r) = c^{l+1}(s)$ that $c^{l}(r) = c^{l}(s)$.
    Induction hypothesis \cref{eq:IH1} now yields $d^ l(v) = d^ l(w)$ and $d^ l(r) = d^ l(s)$, while induction hypothesis \cref{eq:IH2} leads to $e^ l(r,v) = e^ l(s,w)$.
    Combining these results, we have
    \begin{equation}
        \begin{split}
        &(
            e^ l(r,v),
            d^ l(r),
            d^ l(v),
            f_R(v,r,G_v)
        )
        \\
        =
        &(
            e^ l(s,w),
            d^ l(s),
            d^ l(w),
            f_R(w,s,G_w)
        )
        \end{split}
    \end{equation}
    And because $\mathcal{A}$ updates edge labels according to \cref{eq:mca-edge-update}, we have $e^{l+1}(r,v) = e^{l+1}(s,w)$. This completes the induction step for \cref{eq:IH2}, and thereby the proof that \cref{eq:IH1,eq:IH2} hold for all iterations $l$.

    \paragraph{Consequence of \cref{eq:IH1}}
    Now consider two graphs $G_1$ and $G_2$ that are distinguished by $\mathcal{A}$ after $l$ iterations.
    We will prove by contradiction that $G_1$ and $G_2$ are also distinguished by the $S$-SEG-WL test after $l$ iterations.
    Suppose that $G_1$ and $G_2$ are not distinguished by the $S$-SEG-WL test after $l$ iterations, i.e.
    \begin{equation}    \label{eq:non-distinguishing-assumption}
        \multiset{c^ l(v) \mid v \in V_1} = \multiset{c^ l(w) \mid w \in V_2}
    \end{equation}

    Because the two multisets in \cref{eq:non-distinguishing-assumption} are identical and finite, their elements can be matched in pairs of equal elements. By \cref{eq:IH1}, we have that for any $v\in V_1, w\in V_2$, $c^ l(v) = c^ l(w)$ implies $d^ l(v) = d^ l(w)$, from which
    \begin{equation}
        \multiset{d^ l(v) \mid v \in V_1} = \multiset{d^ l(w) \mid w \in V_2}
    \end{equation}
    This would imply that $G_1$ and $G_2$ are not distinguished by $\mathcal{A}$ after $l$ iterations, which contradicts our assumption. Therefore, $G_1$ and $G_2$ must be distinguished by the $S$-SEG-WL test after $l$ iterations.
\end{proof}

Now that \cref{thm:SEG-WL-upper-bound} is proven, we can proceed with the proof of \cref{thm:upper-bound-on-expressiveness}.

\subsection{Reduction of \cref{thm:upper-bound-on-expressiveness} to \cref{thm:SEG-WL-upper-bound}}

\begin{proof}
    Let $\mathcal{B}$ be an instance of the GraphGPS framework (\cref{sec:graphgps}) that iteratively generates the node and edge embeddings $(\bs X^ l)_{l=0}^L$ and $(\bs E^ l)_{l=0}^L$.
    Extend $\mathcal{B}$ to the coloring algorithm $\mathcal{A}$ that iteratively computes node and node pair colours as follows:
    \begin{align}
        d^ l(v) &= \bs X^ l_v \\
        e^ l(r,v)
        &=
        \begin{cases}
            \bs0_d & \text{if } (r,v)\notin E \\
            \bs E^ l_{rv} & \text{if } (r,v)\in E
        \end{cases}
    \end{align}

    First, note that the structural encoding scheme in the theorem statement is strongly regular, as any function $\xi$ for which $\xi(t,G) \coloneq t.\texttt{second}$ if $t$ is a 4-tuple is a discriminator function for $f_R$.
    We will now prove that $\mathcal{A}$ satisfies the conditions of \cref{thm:SEG-WL-upper-bound}, where $S = (f_A, f_R)$ and $\mathcal{C}$ are as defined in the theorem statement. Once this is established, it follows that $\mathcal{A}$ can only distinguish graphs that are also distinguishable by the $S$-SEG-WL test.
    The desired result then follows from the observation that $\mathcal{A}$ and $\mathcal{B}$ always generate the same node colorings, and thus distinguish exactly the same graphs.

    $\mathcal{A}$ satisfies Condition \ref{item:cond1} by construction: the initial node and node pair colours are computed as
    \begin{align}
        d ^ 0 (v)
        &=
        \bs X_v^0 =
        \begin{bmatrix}
            \bs X_v \\ \nu(\bs A)_v
        \end{bmatrix}
        = \phi(\bs X_v, f_A(v,G))
        \\
        e ^ 0 (r,v)
        &=
        \begin{cases}
            \bs0_d & \text{if } (r,v)\notin E \\
            \begin{bmatrix}
                \bs E_{rv} \\ \mu(\bs A)_{rv}
            \end{bmatrix} & \text{if } (r,v)\in E
        \end{cases}
        \;
        =
        \psi(f_R(v,r,G))
    \end{align}
    if $\phi$ is chosen to be the concatenation operation and $\psi$ is chosen to be a function that concatenates the last two elements when given a 4-tuple.

    $\mathcal{A}$ also satisfies Condition \ref{item:cond2}.
    To see this, we will describe the construction of the functions $\sigma$ and $\tau$ such that the update rule of $\mathcal{A}$ can be written as \cref{eq:mca-node-update,eq:mca-edge-update}.

    $\sigma$ takes in $I_v = \multiset{(\bs X_r^{l-1}, e^{l-1}(r,v), f_R(v,r,G)) \mid r \in V}$ and outputs $d^ l(v) = \bs X_v^ l$.
    Given the input $I_v$, first retrieve the node feature vector $\bs X_v^{l-1}$ by selecting the unique tuple $t\in I_v$ for which $\mathbbm{1}_{r=v} = f_R(v,r,G).\texttt{second} = 1$ and letting $\bs X_v^{l-1} = t.\texttt{first}$.
    Then, retrieve the multiset $\multiset { (\bs X_r^{l-1}, \bs E_{rv}^{l-1}) \mid (r,v) \in E }$ from $I_v$ by selecting $\bs X_r^{l-1}$ and $\bs E_{rv}^{l-1}$ from all tuples $t$ for which $\mathbbm{1}_{(r,v)\in E} = f_R(v,r,G).\texttt{first} = 1$.
    Because of Assumption \ref{item:MPNN-node-assumption}, we now have all the necessary inputs to apply the message passing layer $\textrm{MPNN}$ and obtain $\bs X_{M, v}^plus{*}{l}{}$.
    Further, compute $\bs X_{M,v}^{l} = \bs X_{M,v}^plus{*}{l}{} + \bs X_v^{l-1}$.
    Next, retrieve the multiset $\multiset{ \bs X_r^{l-1} \mid r \in V }$ from $I_v$ by selecting the first element of all tuples.
    Assumption \ref{item:Attn-assumption} guarantees that this and $\bs X_v^{l-1}$ is all we need to determine $\textrm{Attn}(\bs X^{l-1})_v$. Thus, we can compute $\bs X_{T,v}^{l} = \textrm{Attn}(\bs X^{l-1})_v + \bs X_v^ {l-1}$.
    Finally, Assumption \ref{item:MLP-assumption} ensures that $\textrm{MLP}$ computes an elementwise function, so we can compute $\bs X_v^ l = \textrm{MLP}(\bs X_{M,v}^ l + \bs X_{T,v}^ l) + \bs X_{M,v}^ l + \bs X_{T,v}^ l$.
    By following this procedure, $\sigma$ can implement the desired update rule.

    $\tau$ takes in $I_{rv} = (e^{l-1}(r,v), d^{l-1}(r), d^{l-1}(v), f_R(v,r,G))$ and outputs $e^ l (r,v)$.
    Given the input $I_{rv}$, first determine whether $(r,v)\in E$ by checking the third element of $f_R(v,r,G)$. If $(r,v)\notin E$, output $\bs0_d$.
    Otherwise, Assumption \ref{item:MPNN-edge-assumption} guarantees that $e^ l (r,v) = \bs E_{rv}^ l$ depends only on $e^ {l-1}(r,v)$, $d^ {l-1}(r)$ and $d^ {l-1}(v)$.
    Thus, $\tau$ can be implemented as desired.

    This completes the proof that $\mathcal{A}$ satisfies the conditions of \cref{thm:SEG-WL-upper-bound}, where $S = (f_A, f_R)$ and $\mathcal{C}$ are as defined in the theorem statement. It follows that $\mathcal{A}$ can only distinguish graphs that are also distinguishable by the $(f_A, f_R)$-SEG-WL test.
    Now observe that $\mathcal{A}$ and $\mathcal{B}$ always generate the same node colorings, and thus distinguish exactly the same graphs. From this follows the desired result.
\end{proof}

\section{Proof of \cref{thm:k_mip_approximation}}
\label{app:proof-kmip-approx-thm}

To prove \cref{thm:k_mip_approximation}, we first establish the following lemma.

\begin{lemma}
    \label{lemma:k_mip_approximation}
    Consider any compact set $U\subseteq \mbr^{N\times d}$ and any function $f:U\to \mbr^{N\times d}$ that satisfies the following conditions:
    \begin{itemize}
        \item $f$ is continuous w.r.t. any entry-wise $\ell^p$ norm with $p\in[1,\infty)$.
        \item $f$ is equivariant w.r.t. row permutations, i.e. for any permutation matrix $\bs P$ such that $f(\bs X)$ and $f(\bs P \bs X)$ are well-defined we have
        \begin{equation}
            f(\bs P \bs X) = \bs P f(\bs X).
        \end{equation}
    \end{itemize}
    Then there exists a k-MIP transformer $T_{\text{k-MIP}} \in \mathcal{T}_{\text{k-MIP}}^{2,1,4}$ such that
    \begin{equation}
        \left( \int_U ||f(\bs X) - T_{\text{k-MIP}}(\bs X)||_p^p d\bs X \right)^{1/p} < \epsilon.
    \end{equation}
\end{lemma}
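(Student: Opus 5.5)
The plan is to follow the three-step strategy of \citet{yun2019transformers}, replacing softmax attention with k-MIP attention (two heads, head size $1$, MLP width $4$) in the one step that really uses attention.

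\textbf{Step 1: piecewise-constant approximation.} Since $U$ is compact and $f$ is continuous, $f$ is uniformly continuous on $U$. Enlarge $U$ to a cube $[-B,B]^{N\times d}$ and extend $f$ continuously and equivariantly to it. Partition the cube into a grid of side $\delta$ and let $\bar f$ take the value of $f$ at a grid representative of each cell. Taking these representatives consistently under row permutations keeps $\bar f$ equivariant, and for $\delta$ small we get $\|f-\bar f\|_{L^p(U)}<\epsilon/3$.

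\textbf{Step 2: quantization.} A stack of blocks whose attention part is switched off (value weights zero) and whose MLPs use ReLU pieces can map each token $\bs X_i\in\R^d$ to its grid point $\bs L_i\in\delta\mathbb{Z}^d$, with width $4$ sufficing as in Yun et al. This map is exact except on a set of arbitrarily small measure near the cell boundaries. Because we measure error in $L^p$ and not $L^\infty$, this exceptional set costs at most $\epsilon/3$, using boundedness of the intermediate values on the compact domain. Concretely, a further MLP stack then maps $\bs L_i$ to the scalar $u^\top \bs L_i$ with $u=(1,\delta^{-1},\dots)$, which is injective on the grid.

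\textbf{Step 3: contextual mapping.} This is the main obstacle. We need a composition of k-MIP blocks that sends the quantized input $\bs L$ to token values $q(\bs L)\in\R^N$ such that:
\begin{itemize}
\item all entries $q(\bs L)_i$ are distinct;
\item $q(\bs L)_i\neq q(\bs L')_j$ unless $\bs L'$ is a row permutation of $\bs L$ with row $j$ of $\bs L'$ corresponding to row $i$ of $\bs L$.
\end{itemize}
Yun et al.\ build this from selective shift operations, $\max_j$ or $\min_j$ of scalar keys restricted to tokens in an interval, realized by hardmax attention as a limit of softmax. Here we use the same device with $k=1$. With head size $1$, the query--key product is $q_i k_j$ with scalar factors, so the top-$1$ key for query $i$ is $\arg\max_j k_j$ if $q_i>0$ and $\arg\min_j k_j$ if $q_i<0$. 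The softmax over a single retained entry is then exactly $1$, so k-MIP computes hardmax \emph{exactly} and we avoid the softmax limit. Two heads realize the difference of a max-selective head and a min-selective head. The interval selectivity comes from choosing the query as a ReLU-gated function of the token, precomputed by an MLP in a preceding block. Shifting tokens one at a time in increasing order of $u^\top \bs L_i$ makes the resulting values distinct and globally separated, exactly as in Yun et al.'s Lemma 6. The delicate points are two. First, ties in the top-$1$ selection must be handled by the equivariant tie-break, which is harmless since all values are distinct after quantization. Second, the shift operator must only affect tokens in the targeted interval, which requires a careful check with scalar query and key weights.

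\textbf{Step 4: value mapping and assembly.} Attention-free blocks now map each contextual id $q(\bs L)_i$ to $\bar f(\bs L)_i$. This is possible by an MLP stack of width $4$ because the id set is finite, and it is well defined because $\bar f$ is equivariant. Combining the three error contributions through the triangle inequality in $L^p(U)$ gives a total error below $\epsilon$.
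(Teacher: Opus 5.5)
Your proposal follows the same route as the paper. Both transfer the constructive proof of Theorem~2 of Yun et al.\ and observe that attention enters only through the hardmax selection in the selective-shift layers.

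The difference is in how the transfer is justified. The paper notes that $\softmax(\lambda\,\topk(\bs Z))\to\textrm{hardmax}(\bs Z)$ as $\lambda\to\infty$ for every fixed $k$. Top-$k$ selection is unchanged by positive rescaling of $\bs W_Q$, and the softmax over the retained entries concentrates on the row maximum, so Yun et al.'s limiting argument applies verbatim. You instead take $k=1$, which makes each head compute hardmax exactly. The only remaining discrepancy with Yun et al.'s idealized network is then the ReLU replacement of the discontinuous piecewise-linear activations.

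This is cleaner, but it proves the lemma only for $k=1$. For $k\ge N$ the k-MIP class coincides with full attention, so the lemma has content precisely for a fixed $k<N$, such as the values used in the experiments. For $2\le k<N$ your exactness is lost, and you need the paper's scaling observation after all.

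In your favour, Step~1 makes explicit the extension from an arbitrary compact $U$ to a compactly supported equivariant function, which the paper leaves implicit. Set $f(\bs P\bs X)=\bs P f(\bs X)$ on $\bigcup_{\bs P}\bs P U$, which is well defined by the hypothesis. Then apply Tietze extension, a permutation-invariant cutoff, and symmetrization over permutations.

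One step needs correcting. Quantized tokens need not be pairwise distinct: two rows of $\bs X$ can fall into the same cell, and this happens on a set of positive measure. By row-equivariance, any contextual map must give identical rows identical ids. So your first bullet cannot hold on such inputs, and the claim that ``all values are distinct after quantization'' is false.

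Yun et al.\ state the contextual-mapping guarantees only for grids with pairwise distinct rows. They send the remaining grids to ids outside the range used by distinct-row grids, where the value map outputs zero. These inputs are absorbed into the $L^p$ error because their measure vanishes as $\delta\to0$. You should do the same.

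Top-$1$ ties are harmless for a different reason than the one you give. With a nonzero scalar query, a tie in $q_jk_i$ forces equal keys, and in the selective shift the key and value are the same scalar $u^\top$-projection of the token.

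Finally, interval selectivity needs no gated query. Give the two heads the affine queries $u^\top\bs L_j-b$ and $u^\top\bs L_j-b'$, with $b,b'$ off the grid so the queries never vanish, and opposite output signs. They then cancel outside $(b,b')$ and yield $\max_i u^\top\bs L_i-\min_i u^\top\bs L_i$ inside.
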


\begin{proof}
    Yun et al.~\citep{yun2019transformers} provided a constructive proof of this theorem for full-attention transformers $T_{full} \in \mathcal{T}_{full}^{2,1,4}$ (Theorem 2 in their paper).
    Our proof is completely analogous; the only difference that needs to be accounted for is the substitution of $\softmax$ with $\softmax \circ \topk$ in the attention mechanism.
    
    One can note that the only property of the row-wise softmax function $\softmax$ that is used in the proof by Yun et al.~\citep{yun2019transformers} is that the output can be made arbitrarily close to a row-wise hardmax function $\textrm{hardmax}$ by scaling up the input matrix $\bs Z$ by a factor $\lambda$:
    \begin{equation}
        \softmax(\lambda \bs Z) \to \textrm{hardmax}(\lambda \bs Z) \quad \text{as} \quad \lambda \to \infty.
    \end{equation}
    This property is also satisfied by $\softmax \circ \topk$, which validates the proof for k-MIP transformers.
\end{proof}

Now we can proceed with the proof of \cref{thm:k_mip_approximation}.
\begin{proof}[Proof of \cref{thm:k_mip_approximation}]
    Consider any full-attention transformer $T_{full} \in \mathcal{T}_{full}^{h,m,r}$ and any compact set $U\subseteq \mbr^{N\times d}$.
    Let $f: U \to \mbr^{N\times d}$ be the restriction of $T_{full}$ to the domain $U$.
    Then we claim that $U$ and $f$ satisfy the conditions of \cref{lemma:k_mip_approximation}:
    \begin{itemize}
        \item $T_{full}$ is a sequential composition of components that are continuous w.r.t. every norm in $\{\ell^p \mid p\in[1,\infty)\}$ on their entire domain. Therefore, $T_{full}$ is itself continuous w.r.t. every such norm. Consequently, its restriction $f$ is continuous w.r.t. every norm in $\{\ell^p \mid p\in[1,\infty)\}$.
        \item $T_{full}$ is a sequential composition of components that are equivariant w.r.t. row permutations. Hence, $T_{full}$ is itself equivariant w.r.t. row permutations. Consequently, its restriction $f$ is equivariant w.r.t. row permutations, i.e. for any permutation matrix $\bs P$ such that $f(\bs X)$ and $f(\bs P \bs X)$ are well-defined we have
        \begin{equation}
            f(\bs P \bs X) = \bs P f(\bs X).
        \end{equation}
    \end{itemize}
    \Cref{lemma:k_mip_approximation} then guarantees that there exists a k-MIP transformer $T_{\text{k-MIP}} \in \mathcal{T}_{\text{k-MIP}}^{2,1,4}$ such that
    \begin{equation}
        \left( \int_U ||f(\bs X) - T_{\text{k-MIP}}(\bs X)||_p^p d\bs X \right)^{1/p} < \epsilon.
    \end{equation}

    Since $T_{full}$ and $f$ are equal on $U$, it follows that for this k-MIP transformer $T_{\text{k-MIP}}$,
    \begin{equation}
        \left( \int_U ||T_{full}(\bs X) - T_{\text{k-MIP}}(\bs X)||_p^p d\bs X \right)^{1/p} < \epsilon.
    \end{equation}
\end{proof}

\section{Discussion of \cref{thm:k_mip_approximation}}
\label{app:discussion}

\Cref{thm:k_mip_approximation} establishes that any full-attention transformer can be approximated to arbitrary accuracy by a k-MIP transformer on any compact set of inputs. This fundamental result guarantees that the sparsification introduced by k-MIP attention does not compromise the expressive power of the model class. In this section, we discuss both what this theorem guarantees and its scope.

\subsection{Key guarantees}

\paragraph{Preservation of expressive power} The sparsification inherent in k-MIP attention mechanisms does not fundamentally limit the class of functions that can be represented. Any function representable by a full-attention transformer can be approximated by a k-MIP transformer.

\paragraph{Bounded architectural requirements} Remarkably, the approximating k-MIP transformer has fixed architectural constraints: it belongs to $\mathcal{T}_{\text{k-MIP}}^{2,1,4}$, meaning it requires only 2 heads of dimension 1 and MLPs of dimension 4, regardless of the complexity of the target full-attention transformer (the universal construction is not necessarily practical). However, the theorem does not constrain the number of layers required, which may need to be arbitrarily large to achieve the desired approximation accuracy for complex target functions.

\subsection{Scope and limitations}

The theorem has a specific scope that practitioners should understand to avoid potential misunderstandings.

\paragraph{No guarantees on internal representations}
The theorem considers the approximation of the \emph{function} (i.e. input-output behavior) $T_{full}: U \to \mbr^{N\times d}$ implemented by a full-attention transformer. This does not guarantee that the approximating k-MIP transformer will have intermediate representations or attention matrices that are similar to those of the full-attention transformer.

\paragraph{No uniqueness of approximation}
The theorem guarantees that there exists at least one k-MIP transformer that approximates the target function, where the proof provides a construction based on tiling the input space.
However, this approximating k-MIP transformer is not necessarily the only one: an alternative approximation may be found in practice that is not captured by this construction.

\paragraph{No optimization guarantees}
The theorem guarantees that there \emph{exists} at least one k-MIP transformer that approximates the target function. However, it does not guarantee that standard optimization algorithms (e.g., gradient descent) will find the exact approximation constructed by the proof, nor does it guarantee that any approximation is guaranteed to be found. The optimization landscape may present challenges that prevent convergence to an approximation of the full-attention transformer, or the full-attention transformer may be suboptimal for the task for which the k-MIP transformer is being optimized.

\paragraph{No guarantee of equivalent expressive power for fixed-depth architectures}
A corollary of the theorem is that the \emph{class} of k-MIP transformers is equally expressive as the class of full-attention transformers.
However, this does not guarantee that a k-MIP transformer with a specific number of layers will have equivalent expressive power to a full-attention transformer with the same depth.

\paragraph{Compact domain restriction}
The approximation guarantee only holds on compact sets. For unbounded domains, the theorem provides no guarantees about approximation quality. Nevertheless, this limitation has minimal practical impact, as input features in most real-world applications are typically normalized or naturally bounded in their range of values.

\subsection{Practical implications}

These theoretical guarantees suggest that k-MIP attention mechanisms are fundamentally sound from an expressivity standpoint, but practitioners should not expect automatic approximation or automatic performance parity with full attention. The gap between theoretical possibility and practical realizability depends on factors not addressed by the theorem, including optimization dynamics and finite-sample effects.

\subsection{Relation to prior work}
\label{sec: relation-to-yun-et-al}

\Cref{thm:k_mip_approximation} builds heavily on Theorem 2 from~\citep{yun2019transformers}, which states that any full-attention transformer (without positional encodings) is a universal function approximator of permutation-equivariant functions from $\mbr^{N\times d}$ to $\mbr^{N\times d}$.
In our work, we extend this result to the case of k-MIP transformers. \Cref{lemma:k_mip_approximation} is the equivalent of Theorem 2 from \citep{yun2019transformers}, where we show that any k-MIP transformer is a universal function approximator of permutation-equivariant functions from $\mbr^{N\times d}$ to $\mbr^{N\times d}$.
\Cref{thm:k_mip_approximation} subsequently narrows the scope of the theorem to the approximation of full-attention transformers, as this is more relevant for the practical applications of k-MIP transformers.

Another related work is~\citep{yun2020n}, which proposes a unified framework for sparse transformers and delineates conditions under which such sparse transformers are universal approximators of sequence-to-sequence functions. 
However, their approach differs from ours in three critical ways:
\begin{itemize}
    \item \citet{yun2020n} deal with the approximation of general sequence-to-sequence functions, whereas our focus on the graph domain requires the approximation of functions that are equivariant w.r.t. node permutations. In~\cref{lemma:k_mip_approximation}, we proved that k-MIP transformers can indeed approximate all such permutation equivariant functions.
    \item The main result (Theorem 1) of \citet{yun2020n} deliberately breaks the natural permutation equivariance of transformers by requiring input features to be enhanced to $\bs X' = \bs X + \bs E$, where $\bs E \in \mbr^{N\times d}$ is a trainable positional embedding that breaks symmetry. In contrast, our work leverages this permutation equivariance as a fundamental desideratum for graph-based learning tasks, eliminating the need of such trainable embeddings and more closely resembling practical settings.
    \item We prove that all functions representable by a full-attention transformer satisfy the conditions of~\cref{lemma:k_mip_approximation}, proving that each full-attention transformer can be universally approximated.
\end{itemize}

We note that---apart from these three differences---k-MIP transformers would into the framework of~\citep{yun2020n} as a special case of sparse transformers with:
\begin{itemize}
    \item $p = 1$: no cycling between sparsity patterns; there is only one sparsity pattern.
    \item $A_1^{(t)} = [n]$: the single sparsity pattern does not restrict which keys can attend to each other.
    \item Corollary to the above: $S_1^{(t)} = [n]$ for every $t \in \mathbb{N}$.
    \item $\rho(A) := \mathbf{softmax}(\tau_A(A))$.
\end{itemize}
However, as stated before, we believe this result to be less relevant for graph-based learning tasks where permutation equivariance is a fundamental desideratum.

\section{The k-MIP Algorithm}
\label{sec:algorithm}

\begin{algorithm}[h]
    \caption{A Single Layer of Multi-Head k-MIP Self-Attention using Symbolic Matrices}
    \label{alg:k_mip_attn}
    \begin{algorithmic}[1]
        \Require
            Input features $\bs X \in \mbr^{N \times d}$; \\
            Learnable matrices $\bs W_Q^h, \bs W_K^h \in \mbr^{d \times d_{K}}, \bs W_V^h \in \mbr^{d \times d_{V}}, \bs W_O^h \in \mbr^{d_{V} \times d}$ \\
            number of keys to consider $k \in \mathbb{N}$, number of attention heads $H \in \mathbb{N}$
        \For{$h = 1$ to $H$}
            \State $\bs S \gets \texttt{SymbolicMultiply}(\bs X \bs W_Q^h, (\bs X \bs W_K^h)^\top)$
                \Comment{$\mbr^{N \times N}$, stored as symbolic matrix}
                \label{alg:k_mip_attn:step1.1}
            \State $\bs I_{\text{argtopk}}, \bs E_{\text{topk}} \gets \texttt{RowwiseTopK}(\bs S, k)$
                \Comment{$\{0,1,\dots,N-1\}^{N\times k}, \mbr^{N \times k}$}
                \label{alg:k_mip_attn:step1.2}
            \State $\bs V_{\text{topk}} \gets \texttt{GatherRows}(\bs X \bs W_V^h, \bs I_{\text{argtopk}})$
                \Comment{$\mbr^{N \times k \times d_{V}}$}
                \label{alg:k_mip_attn:step2}
            \State $\bs A^h \gets \softmax(\bs E_{\text{topk}}/\sqrt{d_{K}})$
                \Comment{$\mbr^{N \times k}$}
                \label{alg:k_mip_attn:step3}
            \State $\bs O^h \gets \texttt{Einsum}_{nk,nkv \rightarrow nv}(\bs A^h, 
            \bs V_{\text{topk}})$
                \Comment{$\mbr^{N \times d_{V}}$}
                \label{alg:k_mip_attn:step4}
        \EndFor
        \State $\bs O \gets \sum^H_{h=1} \bs O^h \bs W_O^h$
            \Comment{$\mbr^{N \times d}$}
            \label{alg:k_mip_attn:step5}
    \end{algorithmic}
\end{algorithm}

The full algorithm in pseudocode is presented in \cref{alg:k_mip_attn}.
For each head, the computation proceeds in three steps.
First, lines~\ref{alg:k_mip_attn:step1.1}-\ref{alg:k_mip_attn:step1.2} compute the indices of the $k$ highest entries in each row of $\bs Q\bs K^\top$ using symbolic matrices~\citep{feydy2020fast}, resulting in a matrix $\bs I_{\text{argtopk}} \in \mathbb{N}^{N \times k}$ of the top $k$ key indices and a matrix $\bs E_{\text{topk}} \in \mbr^{N \times k}$ of the corresponding inner products.
Second, line~\ref{alg:k_mip_attn:step2} extracts the value vectors at these indices\footnote{{The $\texttt{GatherRows}$ operation results in $\bs V_{\text{topk}}[n,k',:] = (\bs X \bs W_V^h)[\bs I_{\text{argtopk}[n,k']}, :]$} for all indices $n, k'$}
and line~\ref{alg:k_mip_attn:step3} computes the attention scores by performing a row-wise softmax on $\bs E_{\text{topk}}$.
Finally, line~\ref{alg:k_mip_attn:step4} computes the output for this head by summing the value vectors weighted by the attention scores.
The final output is computed by summing the outputs of all heads.

Line~\ref{alg:k_mip_attn:step1.1} is the operation that requires the computation of $N^2$ inner products, and thus becomes the bottleneck as $N$ grows due to its quadratic complexity.
However, thanks to the use of symbolic matrices using Kernel Operations~(KeOps)~\citep{JMLR:v22:20-275}, these intermediate inner products are never materialized in the GPU's global memory and instead computed lazily in CUDA registers, thus avoiding memory overflows and high-latency memory transfers.
This innovation is key to the scalability of k-MIP attention: despite retaining a quadratic computational complexity, it gives the algorithm a linear memory footprint and a speedup of an order of magnitude, which we empirically confirm in \cref{sec:controlled_experiment}.
Furthermore, the matrices $\bs E_{\text{argtopk}}$ and $\bs E_{\text{topk}}$ do not require recomputation during backpropagation, rendering the backward pass computationally negligible relative to the forward pass when $N$ is large.

While maximum inner product search is a well-studied problem in the field of information retrieval and recommendation systems~\citep{shrivastava2014asymmetric}, the techniques used in those fields were not beneficial in our method. In particular, we conducted preliminary experiments with IVF, PQ, and LSH indexes in FAISS~\citep{johnson2019billion}, but these approaches failed to deliver a speedup that justified the associated loss in recall.

\section{Symbolic matrices}
\label{sec:symbolic_matrices}

Traditionally in machine learning, matrices are stored as either dense or sparse matrices. Both of these methods store each element in memory at a known location as depicted in \cref{fig:dense_matrix,fig:sparse_matrix}, respectively.
When there are many non-zero elements, however, both have a large memory footprint and slow storage and retrieval times.

Symbolic matrices, as popularised by Feydy et al.~\citep{feydy2020fast}, take a different approach. Instead, they store the elements of the matrix as a formula $\bs M_{i,j} = F(\bs x_i,\bs y_j)$ that is evaluated on data arrays $(\bs x_i)_{i=1}^N$ and $(\bs y_j)_{j=1}^M$.
Reduction operations are evaluated lazily, with high levels of parallelism, and computed without ever sending intermediate results to the GPU's global memory, which can make them 30 to 1000 times more efficient than their dense counterparts~\citep{feydy2020fast}.

In this work, we use symbolic matrices to compute, for all queries $\bs q_i$, the $k$ keys $\bs k_j$ that have the highest inner products with $\bs q_i$.
We first define the symbolic matrix $\bs E$ with the formula $\bs E_{i,j} = \bs q_i \cdot \bs k_j$.
Then, we apply a reduction on the rows of $\bs E$ that computes the indices of the $k$ largest elements of each row.

Because of our use of symbolic matrices, this computation never materialises the results of the $N^2$ inner products in GPU main memory and instead makes use of the GPU's shared memory and registers.
The result is that our implementation of k-MIP self-attention has a negligible memory footprint for both the forward and backward pass, and that it is an order of magnitude faster than its implementation with dense matrices. For a comparison with FlashAttention~\citep{dao_flashattention_2022} refer to Appendix~\ref{app:flashattention_comparison}.

\begin{figure}[hbpt!]
    \centering
    \begin{subfigure}[b]{0.25\textwidth}
        \centering
        \includegraphics[width=\textwidth]{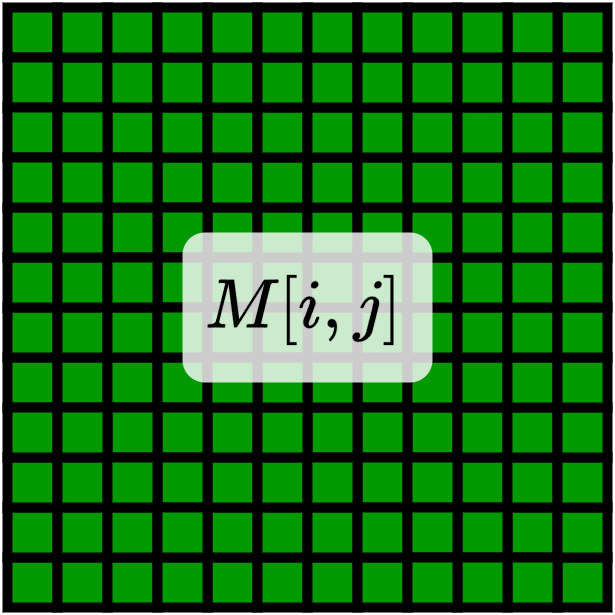}
        \caption{Dense matrix}
        \label{fig:dense_matrix}
    \end{subfigure}
    \hfill
    \begin{subfigure}[b]{0.39\textwidth}
        \centering
        \includegraphics[width=\textwidth]{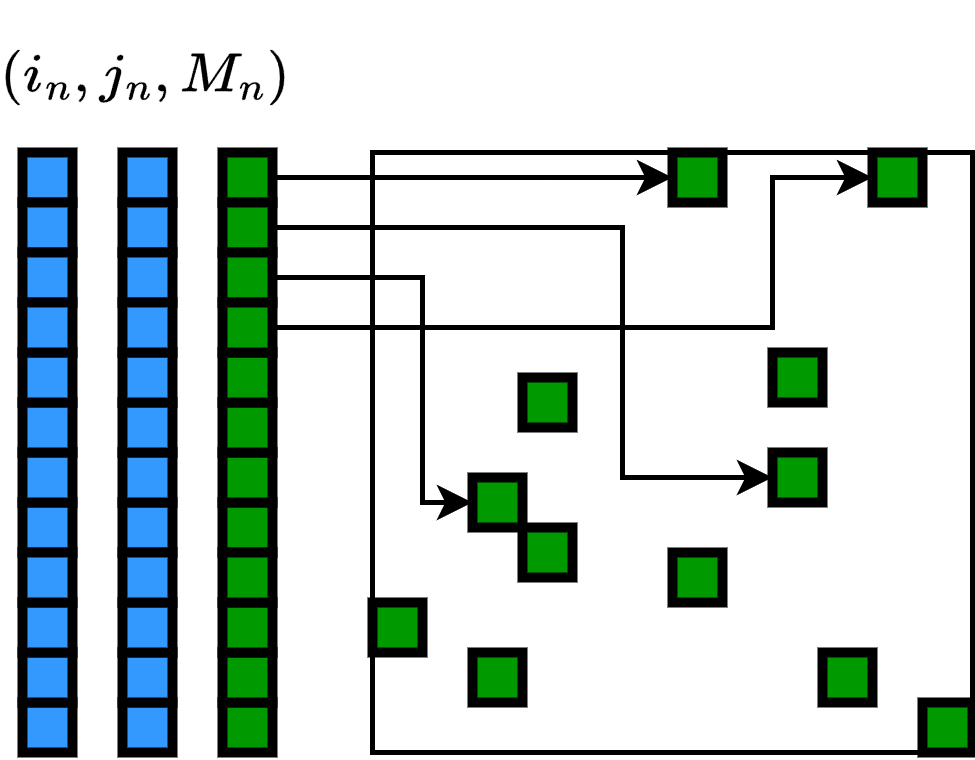}
        \caption{Sparse matrix}
        \label{fig:sparse_matrix}
    \end{subfigure}
    \hfill
    \begin{subfigure}[b]{0.25\textwidth}
        \centering
        \includegraphics[width=\textwidth]{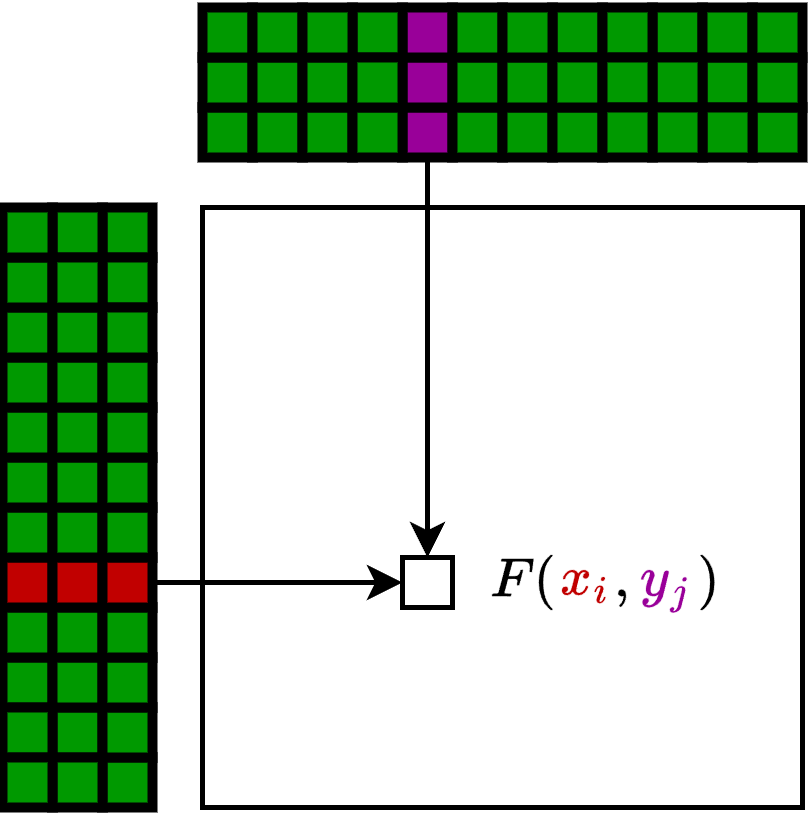}
        \caption{Symbolic matrix}
        \label{fig:symbolic_matrix}
    \end{subfigure}
    \caption[
        A comparison of dense, sparse, and symbolic matrices.
    ]{A comparison of dense, sparse, and symbolic matrices. Based on Feydy et al.~\citep{feydy2020fast}.}
    \label{fig:matrix_comparison}
\end{figure}

\section{Additional related work}
\label{sec:additional_rw}

\subsection{Comparison between k-MIP attention and other scalable graph transformers}
\label{sec:comparison_existing_scalable_gts}

In the categorization of \cref{sec:related_work_graph_transformers}, we have positioned k-MIP attention within the \emph{learnable patterns} category. 
This is because k-MIP attention, like other methods in this category, allows the model to dynamically determine which nodes can attend to each other. It does this without imposing any restrictions beyond $k$-regularity on potential node interactions.
This flexibility is an advantage over most existing approaches that employ graph subsampling or engineered attention patterns, as it may enable k-MIP attention to capture dependencies that are missed by more restrictive attention mechanisms.

Regarding scalability, k-MIP attention shares the linear memory complexity of most existing scalable graph transformers, making it suitable for processing large graphs. While its time complexity remains quadratic, our implementation with symbolic matrices provides significant practical efficiency. This allows k-MIP attention to scale to graphs with approximately 500,000 nodes at a computational budget roughly 2 times slower than Performer and 3 times faster than BigBird, as demonstrated in our experiments on the City-Networks benchmark in \cref{sec:objective_3,sec:detailed_performance_city_networks}.

This scale of operation (hundreds of thousands of nodes) is consistent with the current capabilities of most state-of-the-art scalable graph transformers. Models leveraging linearized approximations of attention, such as NodeFormer~\citep{wu2022nodeformer}, and Difformer~\citep{wu2023difformer}, have been demonstrated on graphs with up to a few million nodes. GOAT~\citep{kong2023goat}, which performs full attention w.r.t. k-means clusters of the keys, attains a similar scale. The notable exception is SGFormer~\citep{wu2024simplifying}, which has been trained on much larger graphs such as arxiv-100M. Note however that the ``attention'' mechanism in the latter is closer to a linear projection layer than to a true attention mechanism.

\subsection{Comparison with SpExphormer}

Additionally, while we do not consider SpExphormer~\citep{shirzad2024even} to be the most directly related work to our approach, we provide a tentative comparison to clarify the differences in modeling assumptions, training procedures, and theoretical guarantees.

At a high level, k-MIP attention replaces full self-attention with a k-Maximum Inner Product (k-MIP) operator: for each query node, only the top-$k$ keys according to inner product are retained. This sparsification is performed implicitly using symbolic matrix primitives (e.g., KeOps), so full attention matrices are never materialized in GPU memory. In contrast, SpExphormer follows a two-stage pipeline. First, a narrow Exphormer-style model is trained to learn attention scores over a fixed computational graph. Second, for each layer, only a small fixed number of highest-scoring edges is retained, and a wider model is retrained on the resulting sparse graph.

The induced computational graphs also differ substantially. In k-MIP attention, each layer and each attention head induces its own $k$-regular directed graph, with no restrictions on which nodes may attend to one another. SpExphormer, by contrast, uses a fixed computational graph during its first training stage, consisting of the input graph augmented with an expander graph (without virtual nodes). During the second stage, a new sparse $k$-regular directed graph is sampled at every epoch and layer via reservoir sampling, based on the learned attention scores over the Stage~1 graph.

From a computational perspective, k-MIP attention retains the worst-case quadratic time complexity of full attention, though with reduced constants due to top-$k$ selection and without materializing dense attention matrices. Its memory complexity scales linearly with the number of nodes and $k$. SpExphormer’s complexity depends on the training stage: Stage~1 has costs comparable to Exphormer, while Stage~2 scales linearly with the number of retained edges, resulting in significantly reduced memory usage during wide-model training.

The two approaches also differ in their theoretical guarantees. For k-MIP attention, we establish an approximation theorem showing that, for any full-attention transformer and any compact set, there exists a shallow k-MIP transformer that can approximate it arbitrarily well. SpExphormer provides complementary guarantees: one result shows that sufficiently narrow networks can approximate arbitrarily wide transformers under boundedness assumptions, while another bounds the spectral norm error introduced by edge sampling with high probability.

Empirically, the largest graph processed with k-MIP attention is the London road network, containing 569k nodes and 759k edges, trained on a single 80GB A100 GPU. SpExphormer demonstrates scalability to larger graphs, such as Amazon2M with two million nodes, relying on substantial CPU memory (500GB RAM) and a 40GB A100 GPU while using only a small fraction of GPU memory.

Overall, k-MIP attention emphasizes fully learnable attention patterns, single-stage training, and formal expressivity guarantees, whereas SpExphormer prioritizes aggressive memory reduction through two-stage training and sampled sparsification, enabling scalability to extremely large graphs, particularly when the input graph is already dense. Both methods highlight different trade-offs between computational efficiency, training complexity, and theoretical characterization.

\subsection{Other approaches that employ learnable sparsity}

Reformer~\citep{reformer} introduces an attention mechanism based on locality-sensitive hashing, in which keys and queries are first projected into a lower-dimensional space using a random matrix and then hashed into angular buckets, after which attention is computed independently within each bucket. Similarly, Routing Transformer~\citep{roy-etal-2021-efficient} clusters normalized key and query vectors at each self-attention step and restricts attention computation to tokens within the same cluster, where clusters are obtained via a spherical $k$-means algorithm with iteratively updated centroids. Clusterformer~\citep{wang2021clusterformer} follows an analogous strategy but applies Euclidean $k$-means to unnormalized key and query vectors. Sinkhorn Transformer~\citep{pmlr-v119-tay20a} adopts a different approach by leveraging the Sinkhorn algorithm to approximate full attention, constructing a differentiable sorting network for each attention head that assigns relevant key blocks to query blocks before performing attention between the corresponding blocks. Clustered Attention~\citep{vyas2020fasttransformersclusteredattention} employs a hierarchical strategy in which queries are clustered and attention is computed only for the cluster centroids with respect to all keys; each output token then inherits the attention result of its nearest centroid, reducing computational cost while preserving relevant information. Finally, we emphasize that our work was not inspired by prior uses of k-MIP attention in the sequence domain; instead, we develop this attention mechanism from the perspective of latent graph inference, where the goal is to discover an effective latent computational graph~\citep{dgm,borde2023latent,borde2023projections}.

\section{Experimental details}
\label{sec:experimental_details}
\subsection{Datasets overview}
\label{sec:datasets}

In this section, we provide an overview of the datasets used in our experiments. \Cref{table:dataset_statistics} summarizes the key characteristics of each dataset, including the number of graphs, average number of nodes and edges, whether the graphs are directed, the prediction level (graph, node, or link), the prediction task, and the evaluation metric used.

\begin{table}[h]
    \small
    \centering
    \caption{Overview of the graph learning datasets used in this study.}
    \makebox[\textwidth][c] {   
        \setlength{\tabcolsep}{4pt}
        \begin{tabular}[c]{
            l@{\hskip 0pt}
            r
            R{4.5em}
            R{4.5em}
            c
            C{5.5em}
            C{7em}
            c
        }
        \toprule
        \textbf{Dataset} & \textbf{\# Graphs} & \textbf{Avg. \#~nodes} & \textbf{Avg. \#~edges} & \textbf{Directed} & \textbf{Prediction level} & \textbf{Prediction task} & \textbf{Metric} \\
        \midrule
        COCO-SP & 123,286 & 476.9 & 2,693.7 & No & ind. node & 81-class classif. & F1 score \\
        PascalVOC-SP & 11,355 & 479.4 & 2,710.5 & No & ind. node & 21-class classif. & F1 score \\
        Peptides-Func & 15,535 & 150.9 & 307.3 & No & graph & 10-task classif. & Avg. Prec. \\
        Peptides-Struct & 15,535 & 150.9 & 307.3 & No & graph & 11-task regr. & MAE \\
        \midrule
        Paris & 1 & 114k & 183k & No & transd. node & 10-class classif. & Accuracy \\
        Shanghai & 1 & 184k & 263k & No & transd. node & 10-class classif. & Accuracy \\
        LA & 1 & 241k & 343k & No & transd. node & 10-class classif. & Accuracy \\
        London & 1 & 569k & 759k & No & transd. node & 10-class classif. & Accuracy \\
        \midrule
        ShapeNet-Part & 16,881 & 2,616.2 & 20,929.6 & Yes & ind. node & 50-class classif. & F1 score \\
        S3DIS & 23,585 & 4,096.0 & 131,072.0 & Yes & ind. node & 12-class classif. & mIoU \\
        \midrule
        Cifar10 & 60,000 & 117.6 & 941.1 & Yes & graph & 10-class classif. & Accuracy \\
        MalNet-Tiny & 5,000 & 1,410.3 & 2,859.9 & Yes & graph & 5-class classif. & Accuracy \\
        \bottomrule
        \end{tabular}
    }
    \label{table:dataset_statistics}
\end{table}

\paragraph{COCO-SP}
The COCO-SP dataset (CC BY 4.0 License)~\citep{dwivedi2022long} is part of the LRGB~\citep{dwivedi2022long}. It is derived from the MS COCO image classification dataset (CC BY 4.0 License)~\citep{lin2015microsoftcococommonobjects} by constructing adjacency (hence planar) graphs on SLIC superpixels extracted from the images. The task is to classify each superpixel into one of 81 semantic segmentation classes. The dataset was downloaded from \url{https://www.dropbox.com/s/r6ihg1f4pmyjjy0/coco_superpixels_edge_wt_region_boundary.zip?dl=1} on 20 February 2025.

\paragraph{PascalVOC-SP}
The PascalVOC-SP dataset (Custom license for Pascal VOC 2011 respecting Flickr terms of use) ~\citep{dwivedi2022long} is part of the LRGB~\citep{dwivedi2022long}. It is derived from the Pascal VOC 2011 image classification dataset (Custom license for Pascal VOC 2011 respecting Flickr terms of use)~\citep{everingham2010pascal} by constructing adjacency (hence planar) graphs on SLIC superpixels extracted from the images. The task is to classify each superpixel into one of 21 semantic segmentation classes. The dataset was downloaded from \url{https://www.dropbox.com/s/8x722ai272wqwl4/voc_superpixels_edge_wt_region_boundary.zip?dl=1} on 4 September 2024.

\paragraph{Peptides-func}
The Peptides-func dataset (CC BY-NC 4.0 License)~\citep{dwivedi2022long} is part of the LRGB~\citep{dwivedi2022long}. It consists of atomic graphs of peptides obtained from the SATPdb dataset (CC BY-NC 4.0)~\citep{singh2016satpdb}. The task is multi-label graph classification, where each peptide is classified into one or more of 10 functional classes. The dataset was downloaded from \url{https://www.dropbox.com/s/ol2v01usvaxbsr8/peptide_multi_class_dataset.csv.gz?dl=1} on 20 February 2025.

\paragraph{Peptides-Struct}
The Peptides-Struct dataset (CC BY-NC 4.0 License)~\citep{dwivedi2022long} is part of the LRGB~\citep{dwivedi2022long}. It consists of atomic graphs of peptides obtained from the SATPdb dataset (CC BY-NC 4.0 License)~\citep{singh2016satpdb}. The task is graph regression of 11 structural properties of the peptides. The dataset was downloaded from \url{https://www.dropbox.com/s/0d4aalmq4b4e2nh/peptide_structure_normalized_dataset.csv.gz?dl=1} on 20 February 2025.

\paragraph{ShapeNet-Part}
The ShapeNet-Part dataset (MIT License)~\citep{yi2016scalable} is a subset of the ShapeNet repository (ShapeNetCore v2 CAD models: non-commercial research license, see \href{https://shapenet.org/terms}{ShapeNet Terms of Use})~\citep{chang2015shapenet}, designed for 3D part segmentation tasks. It contains 16,881 3D shapes from 16 object categories. Each shape is annotated with 2-6 parts, with a total of 50 distinct part labels across all categories. The dataset is provided as 3D point clouds with corresponding part labels for each point. We transformed this point cloud segmentation task into a node classification task by constructing a directed k-NN graph on the point cloud where $k=8$, which is the optimal $k$ found for ModelNet40~\citep{wu20153d} in~\citep{zheng2024pointvig}.
We use the train/test/validation split from PyTorch Geometric~\citep{Fey/Lenssen/2019}. The dataset was downloaded from PyTorch Geometric version 2.0.4.

\paragraph{S3DIS}
The Stanford Large-Scale 3D Indoor Spaces~(S3DIS) dataset (CC BY 4.0)~\citep{armeni20163d} consists of the 3D point clouds of six large-scale indoor areas from three different buildings of Stanford university.
The point features are the 3D positions and RGB values, and each point is labelled as one of 13 semantic classes.
We transformed this point cloud segmentation task into a node classification task by constructing a directed k-NN graph on the point cloud where $k=32$, which is the optimal $k$ found for this dataset in~\citep{zheng2024pointvig}.
We use the train/test split from PyTorch Geometric~\citep{Fey/Lenssen/2019}, and randomly select 3,294 graphs from the training set for validation. This results in 16,997 training graphs, 3,294 validation graphs, and 3,294 test graphs.
The dataset was downloaded from PyTorch Geometric version 2.0.4.

\paragraph{Cifar10}
The Cifar10 graph dataset (MIT License)~\citep{dwivedi2023benchmarking} was derived from the homonymous image classification dataset by constructing an 8-NN graph on SLIC superpixels extracted from the images. The node features are 5-dimensional and consist of 3 RGB values and (x,y)-coordinates for the superpixel. The edge features are 1-dimensional and consist of the node distance. The task is to classify the images into one of ten classes. The splits are the same as in the original dataset: 45K/5K/5K for training/validation/test respectively. The dataset was downloaded from PyTorch Geometric~\citep{Fey/Lenssen/2019} version 2.0.4.

\paragraph{MalNet-Tiny}
MalNet-Tiny (CC-BY License)~\citep{rampavsek2022recipe} is a subset of MalNet (CC-BY License)~\citep{freitas2020large} that is comprised of function call graphs derived from Android APKs. This subset contains 5,000 graphs with of up to 5000 nodes, each coming from benign software or 4 types of malware. The graphs are stripped of any original node or edge features, the task is to predict the type of the software based on the structure alone.~\citep{rampavsek2022recipe}. The dataset was downloaded from \url{http://malnet.cc.gatech.edu/graph-data/malnet-graphs-tiny.tar.gz} on 4 September 2024.

\subsection{Objective 1: Computational efficiency}
\label{sec:experimental_details_efficiency_comparison}

This section provides detailed experimental methodology for the computational efficiency study presented in \cref{sec:controlled_experiment}. To comprehensively evaluate the computational efficiency of k-MIP attention, we designed a controlled experiment that systematically compares the runtime performance and memory consumption of different attention mechanisms across varying graph sizes.

Our experimental protocol involves sampling normally distributed key, query and value matrices of dimensions $N \times d_K$ for a range of node counts $N \in \{10^{i/2} \mid i=4,\dots,12\}$, corresponding to graphs with sizes from $N=100$ to $N=10^6$ nodes. We fix the key dimension to $d_K=10$.

We evaluate the computational efficiencies of the attention mechanisms under two distinct scenarios: (1) an \textit{inference setting} where we perform only the forward pass without gradient tracking, simulating deployment scenarios where the model is used for prediction only, and (2) a \textit{training setting} where we perform both forward and backward passes with full gradient tracking enabled, simulating the complete training pipeline including backpropagation through the attention mechanism.

For k-MIP attention, we consistently use $k=10$ across all experiments.

All experiments are conducted on a single NVIDIA A100 40GB GPU using PyTorch 2.0 with CUDA 11.8. We measure wall-clock time for both forward and backward passes and peak GPU memory consumption. Each configuration is evaluated 5 times with different random seeds, and we report the mean $\pm$ standard deviation of both runtime and memory usage. The results are visualized in \cref{fig:efficiency_comparison_training}, with the complete numerical data provided in \cref{sec:detailed_performance_efficiency_comparison}.

\subsection{LRGB}
\label{sec:lrgb_hyperparameters}

Tönshoff et al.~\citep{tonshoff2023did} have provided the most extensive hyperparameter search on the LRGB to date. Hence, to provide a fair comparison, we reproduced their numbers for GCN, GINE, GatedGCN, and GPS+Transformer, and employed their methodology for all models that were not included in their work.

Specifically, we conducted a ``linear'' hyperparameter search starting from a default configuration. For each hyperparameter, we evaluated its performance across a predefined range while keeping other parameters fixed. We then also evaluated the configuration obtained by combining the best-performing values for each hyperparameter. From all evaluated configurations, we selected the one with the highest validation performance as our final setting. For this hyperparameter search, we used a single run per configuration. For the final evaluation, we reported the averages and standard deviations across four different random seeds as specified by the LRGB~\citep{dwivedi2022long}.

The hyperparameters and ranges were as detailed in \cref{tab:hyperparameters_lrgb}. For each configuration, the hidden dimension was chosen to be the maximum that made the model adhere to the official 500k parameter budget.
\begin{table}[h]
    
    \centering
    \caption{Hyperparameter search ranges and default values for the LRGB experiment.}
    \label{tab:hyperparameters_lrgb}
    \begin{tabular}{lll}
        \toprule
        \textbf{Hyperparameter} & \textbf{Default} & \textbf{Range} \\
        \midrule
        Dropout & 0.1 & [0, 0.1, 0.2] \\
        Depth & 8 & [6, 8, 10] \\
        Learning rate & 0.001 & [0.001, 0.0005, 0.0001] \\
        Head depth & 2 & [1, 2, 3] \\
        Encoding & none & [none, LapPE, RWSE] \\
        $k$ (only GPS + k-MIP) & 15 & [5, 15, 45] \\
        Optimizer & AdamW & \\
        Loss function & weighted CE & \\
        \# Epochs & 200 & \\
        Batch size & 1 & \\
        LR scheduler & cosine & \\
        \# Warmup epochs & 10 & \\
        Weight decay & 0.0& \\
        Activation & GeLU & \\
        MPNN layer & GatedGCN & \\
        \# Attention heads & 4 & \\
        Hidden dimension & variable& \\
        Attention dropout & 0.5 & \\
        \# Input FCLs & 0 & \\
        \# Output FCLs & 2 & \\
        Graph pooling & mean & \\
        Expander Degree (only Exphormer) & 3 & \\
        \# Virtual Nodes (only Exphormer) & 3& \\
        \bottomrule
    \end{tabular}
\end{table}

This differs from Tönshoff et al.~\citep{tonshoff2023did} in two ways. First, we don't search over the internal MPGNN and normalization for the graph transformer models. Instead, we always use GatedGCN as the internal MPGNN and LayerNorm as the normalization, as preliminary experiments showed that these configurations yield better performance. Second, we performed the full hyperparameter sweep and adhered to the official parameter budgeton the COCO dataset, rather than limiting it.

All other details are the same as in Tönshoff et al.~\citep{tonshoff2023did}. The full configuration files are available in the provided code repository.

The data splits are the same as in Tönshoff et al.~\citep{tonshoff2023did} and Rampášek et al.~\citep{rampavsek2022recipe}.
For Peptides-Func and Peptides-Struct, this means the official train/val/test splits are used.
For COCO-SP and PascalVOC-SP, there are only train and val splits provided. We maintain the train split and divide the original val split into new val and new test split. The policy for this val/test splitting is as follows:
\begin{itemize}
    \item Split total number of val graphs into 2 sets (val, test) with 50:50 using a stratified split proportionate to original distribution of data with respect to a meta label.
    \item Each image is meta-labelled by majority voting of non-background ground truth node labels. Then new validation and new test sets are created with stratified sampling based on these meta-labels. This is done for preserving
    same distribution of node labels in both new val and new test.
\end{itemize}

Each experiment of the LRGB was conducted on a single 16GB V100 GPU.
No run on Peptides-Func, Peptides-Struct, or PascalVOC-SP took longer than 4 hours to complete.
On COCO, the training took roughly 8 hours for MPNNs, 40 hours for GPS+BigBird, and 20 hours for the other graph transformers.

\subsection{City-Networks}
\label{sec:city_networks_hyperparameters}

On the City-Networks dataset, we tuned the hyperparameters on the Paris dataset and used those on all four datasets. For this hyperparameter search, we used the same methodology as for the LRGB experiment and ran all experiments ourselves. This means that, again, we conducted a ``linear'' hyperparameter search starting from a default configuration. For each hyperparameter, we evaluated its performance across a predefined range while keeping other parameters fixed. We then also evaluated the configuration obtained by combining the best-performing values for each hyperparameter. From all evaluated configurations, we selected the one with the highest validation performance as our final setting. For this hyperparameter search, we used a single run per configuration. For the final evaluation, we reported the averages and standard deviations across four different random seeds.

The hyperparameters and ranges are as detailed in \cref{tab:hyperparameters_city_networks}. For each configuration, the hidden dimension was chosen to be the maximum that made the model adhere to a 500k parameter budget.

We opted not to run the GPS+BigBird model on the LA dataset due to budget constraints.
When processing the London dataset, memory constraints on our 80GB A100 GPU required us to reduce the model depth to 7 for GPS+k-MIP and 5 for GPS+Performer. While GPS+BigBird and Exphormer could have been run with a maximum depth of 3, we chose not to include these results as the reduced depth would have significantly compromised the fairness of the comparison.

\begin{table}[h]
    
    \centering
    \caption{Hyperparameter search ranges and default values for the City-Networks experiment.}
    \label{tab:hyperparameters_city_networks}
    \begin{tabular}{lll}
        \toprule
        \textbf{Hyperparameter} & \textbf{Default} & \textbf{Range} \\
        \midrule
        Dropout & 0.1 & [0, 0.1, 0.2] \\
        Depth & 8 & [6, 8, 12, 16] \\
        Learning rate & 0.001 & [0.005, 0.001, 0.0005] \\
        Head depth & 2 & [1, 2, 3] \\
        Encoding & none \\
        $k$ (only GPS + k-MIP) & 15 & [5, 15, 45] \\
        Optimizer & AdamW & \\
        Loss function & weighted CE & \\
        \# Epochs & 1500 & \\
        Batch size & 1 & \\
        LR scheduler & cosine & \\
        \# Warmup epochs & 10 & \\
        Weight decay & 0.0& \\
        Activation & GeLU & \\
        MPNN layer & GatedGCN & \\
        \# Attention heads & 4 & \\
        Hidden dimension & variable& \\
        Attention dropout & 0.5 & \\
        \# Input FCLs & 0 & \\
        \# Output FCLs & 2 & \\
        Graph pooling & mean & \\
        Expander Degree (only Exphormer) & 3 & \\
        \# Virtual Nodes (only Exphormer) & 3& \\
        \bottomrule
    \end{tabular}
\end{table}

Note in particular the larger range of depth values, which was chosen because the targets in the City-Networks dataset are constructed based on a 16-hop eccentricity~\citep{liang2025towards}, and they additionally observed that the performance of MPGNNs improves when increasing the depth to 16.
The range of learning rates was also adjusted, because we found that a learning rate of 0.0001 was always too low on the LRGB.

Additionally, we enhanced the input features in two ways that improved the performance of the models. First, we normalized all input features to have zero mean and unit variance. Second, we concatenated sine and cosine encodings of each node's longitude and latitude to the input features. For a node with longitude $x$ and latitude $y$ (normalized over the training set to be in the range $[-0.9\pi, 0.9\pi]$) the encodings are defined as
\begin{align}
\bs{v}_{long}
=
\begin{pmatrix}
    \sin(x \cdot 2^0) \\
    \sin(x \cdot 2^1) \\
    \vdots \\
    \sin(x \cdot 2^7) \\
    \hline
    \cos(x \cdot 2^0) \\
    \cos(x \cdot 2^1) \\
    \vdots \\
    \cos(x \cdot 2^7)
\end{pmatrix}
\quad \text{and} \quad
\bs{v}_{lat}
=
\begin{pmatrix}
    \sin(y \cdot 2^0) \\
    \sin(y \cdot 2^1) \\
    \vdots \\
    \sin(y \cdot 2^7) \\
    \hline
    \cos(y \cdot 2^0) \\
    \cos(y \cdot 2^1) \\
    \vdots \\
    \cos(y \cdot 2^7)
\end{pmatrix}
\end{align}

We used the train/val/test splits for each dataset provided by the authors~\citep{liang2025towards}.

The experiments on the Paris dataset were conducted on a single 40GB A100 GPU, whereas the experiments on the other datasets were conducted on a single 80GB A100 GPU.
The runtimes for each model and dataset are provided in \cref{tab:detailed_performance_city_networks}.

\subsection{Point cloud datasets}
\label{sec:point_cloud_datasets_hyperparameters}

\paragraph{Dataset generation}
Existing large-scale inductive graph benchmarks are often solvable with few-hop message passing neural networks, making them inadequate for evaluating attention mechanisms that benefit from long-range dependencies. To address this limitation and evaluate our method on challenging large-scale inductive tasks, we construct custom graph learning datasets from point cloud segmentation data.

We create graph learning tasks by adding k-NN graphs to two established point cloud segmentation datasets: ShapeNet-Part~\citep{yi2016scalable} and S3DIS~\citep{armeni20163d}. This construction follows previous work that successfully applied graph neural networks to point cloud learning~\citep{zheng2024pointvig,liang2019hierarchical,shen2018mining,wang2018local}. We set $k=8$ for ShapeNet-Part and $k=32$ for S3DIS, adopting the optimal values identified by Zheng et al.~\citep{zheng2024pointvig}.

This approach creates challenging inductive tasks that require capturing complex spatial relationships across entire point clouds, providing an ideal testbed for evaluating attention mechanisms in graph transformers.

\paragraph{Hyperparameters}
For the point cloud datasets, we did not have the resources to perform a full hyperparameter search. Instead, we chose the hyperparameters a priori based on the defaults used by Shirzad et al.~\citep{shirzad2023exphormer}, as detailed in \cref{tab:point_cloud_datasets_hyperparameters}. For each configuration, we chose the hidden dimension to adhere to a parameter budget of 300k. Importantly, we used the same hyperparameters for all graph transformers. For the MPNNs, we ran versions with 5, 10, and 15 layers, and selected the best performing model based on the validation set.

\begin{table}[t]
    \centering
    \caption{The hyperparameters used on ShapeNet-Part and S3DIS. GT stands for graph transformer.}
    \scalebox{0.95}{
    \begin{tabular}{lcccc}
        \toprule
        \textbf{HP/Dataset} & \multicolumn{2}{c}{\textbf{ShapeNet-Part}} & \multicolumn{2}{c}{\textbf{S3DIS}} \\
        & \textbf{MPNNs} & \textbf{GTs} & \textbf{MPNNs} & \textbf{GTs} \\
        \midrule
        \textbf{Metric}
        & F1 score & F1 score & mIoU & mIoU \\
        \textbf{PE/SE type}
        & none & none & none & none \\
        \textbf{Optimizer}           
        & adamW & adamW & adamW & adamW \\
        \textbf{Loss function}       
        & weighted CE & weighted CE & weighted CE & weighted CE \\
        \textbf{\# Epochs}           
        & 150 & 200 & 120 & 200 \\
        \textbf{Batch size}          
        & 8 & 8 & 8 & 8 \\
        \textbf{LR}                  
        & 1e-3 & 5e-4 & 1e-3 & 5e-4 \\
        \textbf{LR scheduler}        
        & reduce\_on\_plateau & cosine & reduce\_on\_plateau & cosine \\
        \textbf{Scheduler patience}  
        & 10 & -- & 10 & -- \\
        \textbf{Scheduler reduce factor}    
        & 0.5 & -- & 0.5 & --\\
        \textbf{\# Warmup epochs}    
        & -- & 10 & -- & 10 \\
        \textbf{Weight decay}        
        & 1e-4 & 1e-4 & 1e-4 & 1e-4 \\
        \textbf{Activation}
        & ReLU & ReLU & ReLU & ReLU \\
        \textbf{\# layers}
        & 5/10/15 & 8 & 5/10/15 & 8 \\
        \textbf{MPNN layer}
        & -- & GatedGCN & -- & GatedGCN \\
        \textbf{\# Attention heads}
        & -- & 4 & -- & 4 \\
        \textbf{Hidden dimension}
        & variable & variable & variable & variable \\
        \textbf{Dropout}
        & 0.1 & 0.1 & 0.1 & 0.1 \\
        \textbf{Attention dropout}
        & 0.0 & 0.0 & 0.0 & 0.0 \\
        \textbf{\# Input FCLs}
        & 0 & 0 & 0 & 0 \\
        \textbf{\# Output FCLs}
        & 2 & 2 & 2 & 2 \\
        \textbf{Graph pooling}
        & sum & mean & sum & mean \\
        \midrule
        \textbf{Expander Degree}
        & -- & 3 & -- & 3 \\
        \textbf{\# Virtual Nodes}
        & -- & 3 & -- & 3 \\
        \midrule
        \textbf{$k$}
        & -- & 10 & -- & 10 \\
        \bottomrule
    \end{tabular}}
    \label{tab:point_cloud_datasets_hyperparameters}
\end{table}

For both ShapeNet-Part and S3DIS, we used the official train/val/test splits as provided by PyTorch Geometric~\citep{Fey/Lenssen/2019}.

The experiments on ShapeNet-Part were conducted on a single 16GB V100 GPU, except from Exphormer which was conducted on a single 40GB A100 GPU. The experiments took between 3 and 8 hours for the MPNNs, 8h for Exphormer and GPS+k-MIP, 12h for GPS+Performer, and 25h for GPS+BigBird.

The experiments on S3DIS were conducted on a single 40GB A100 GPU. They took roughly 30 hours for the MPNNs and 60h for the graph transformers.

\subsection{Computational infrastructure}
\label{sec:computational_infrastructure}

The experiments for the LRGB and City-Networks benchmark were conducted on the infrastructure of an external cloud provider. The experiments for the point cloud datasets were conducted on an internal cluster.

A back-of-the-envelope calculation reveals that the experiments reported in the main text took approximately 3000 GPU hours to complete. Preliminary experiments, most notably those run on LRGB before we were aware of the critique of the experimental setting of Dwivedi et al.~\citep{dwivedi2022long} by Tönshoff et al.~\citep{tonshoff2023did}, took approximately 1500 GPU hours to complete.

\section{Additional results}

\subsection{Not an approximation of full attention}
\label{sec:not_an_approximation}

While it is tempting to think of k-MIP attention as a way to approximate full attention, we examine in this section whether this is indeed the case.

\paragraph{Experimental setup}
To study this question, we set up the following experiment.
First, we train a GraphGPS model with k-MIP attention on the MalNet-Tiny dataset.
Second, we extract the queries, keys, and values $\bs{Q}, \bs{K}, \bs{V}$ that are generated internally in the forward pass of the first 50 graphs with more than 1000 nodes in the test split of this dataset.
We exclude the first layer and only consider queries, keys, and values from subsequent layers, as the first layer's inputs contain numerous duplicate values which would significantly distort our experimental results.
Third, for these queries, keys and values, we compare the output of k-MIP attention $O_{\text{k-MIP}}^k = k\texttt{-MIP-Attn}(\bs{Q}, \bs{K}, \bs{V}) \in \mbr^{N\times d}$ for varying values of $k$ to the output of a full attention mechanism applied on the same matrices $O_{full} = \texttt{MHSA}(\bs{Q}, \bs{K}, \bs{V}) \in \mbr^{N\times d}$.
For each value of $k$, we measure the average $\mathcal{L}_2$ distance between the corresponding rows of $O_{\text{k-MIP}}^k$ and $O_{full}$. Of this, we report the average and standard deviations over the 50 graphs, for each value of $k$.

\begin{figure}[hbpt!]
    \centering
    \begin{subfigure}[b]{0.48\textwidth}
        \centering
        \includegraphics[width=\textwidth]{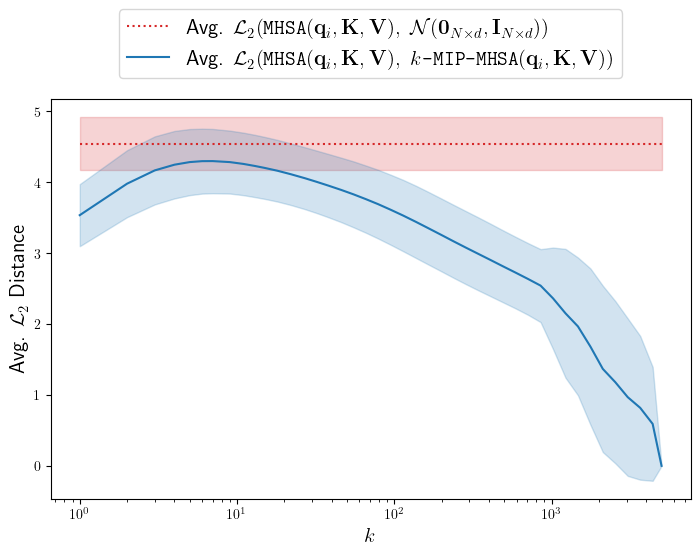}
        \caption{
            The average $\mathcal{L}_2$ distance between the rows of $\texttt{k-MIP-Attn}(\bs{Q}, \bs{K}, \bs{V})$ and $\texttt{MHSA}(\bs{Q}, \bs{K}, \bs{V})$ for varying $k$, where the inputs are queries, keys, and values extracted from a trained GraphGPS model with k-MIP attention.
            The average and standard deviation are taken over the first 50 graphs with more than 1000 nodes in the test split of the MalNet-Tiny dataset.
            For comparison, when approximating the full attention output with samples from $\mathcal{N}(\bs 0_{N\times d}, \bs I_{N\times d})$, the $\mathcal{L}_2$ distance was $4.5417 \pm 0.3756$.}
        \label{fig:approx-attention-l2-distances}
    \end{subfigure}
    \hfill
    \begin{subfigure}[b]{0.48\textwidth}
        \centering
        \includegraphics[width=\textwidth]{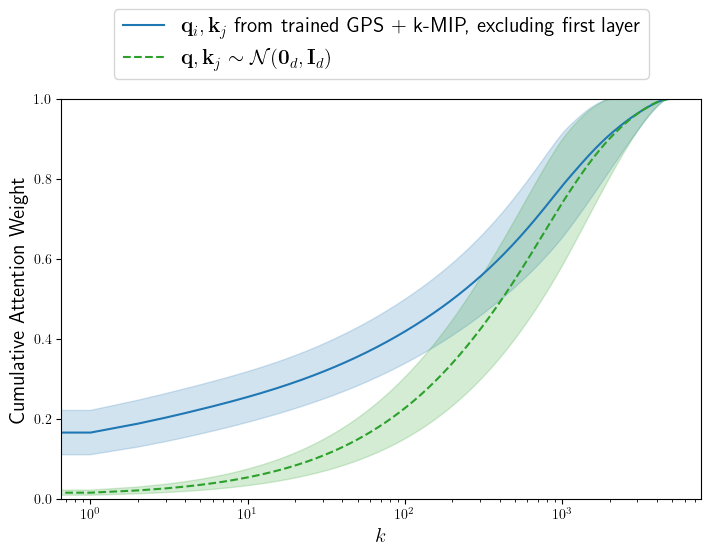}
        \caption{
            The cumulative attention weight for the $k$ keys with the highest inner product for each query, for varying $k$, where the inputs are queries and keys extracted from a trained GraphGPS model with k-MIP attention.
            The average and the standard deviation are taken over the first 50 graphs with more than 1000 nodes in the test split of the MalNet-Tiny dataset.
            For comparison, the cumulative attention weights are shown when the queries and keys are sampled from a multivariate normal distribution.
            \vspace{0.2em}
        }
        \label{fig:approx-attention-cum-weights}
    \end{subfigure}
    \caption{
        Analysis of k-MIP attention's relationship to full attention.
        \textbf{a.} The L2 distance between k-MIP attention and full attention outputs.
        \textbf{b.} The cumulative attention weight for the top-$k$ keys.
    }
    \label{fig:attention-analysis}
\end{figure}

\paragraph{Results and Discussion}
The results of this experiment are shown in \cref{fig:attention-analysis}.
While it is tempting to think of k-MIP attention as a way to approximate full attention, the results demonstrate that k-MIP attention is a very poor approximation until $k$ is close to the number of nodes in the graph. For reference, for $k \leq 100$ the approximation was barely better than sampling from a multivariate normal distribution.
Note that this does not contradict \cref{thm:k_mip_approximation}: even when every layer forms a poor approximation of the corresponding full attention layer, the composition of many such layers may still approximate the full-attention transformer.

\paragraph{Inspecting the Attention Weights}
The reason for the poor approximation of the full attention mechanism can be found by inspecting the attention weights. In \cref{fig:approx-attention-cum-weights}, we show the cumulative attention weights for the $k$ keys with the highest inner product for each query, for various values of $k$, as given by the expression
\begin{equation}
    \sum_{i}^N \sum_{j\leq k} \softmax_{1\leq j\leq N}(\bs{q}_i^\top \bs{k}_{\sigma_i(j)})
\end{equation}
where each permutation $\sigma_i$ is chosen such that $\bs{q}_i^\top \bs{k}_{\sigma_i(1)} \geq \bs{q}_i^\top \bs{k}_{\sigma_i(2)} \geq \cdots \geq \bs{q}_i^\top \bs{k}_{\sigma_i(k)}$. Also, note that softmax normalization is over all $N$ keys here.

The results shows that, while the few highest-inner-product keys have a much larger attention weight than the other keys, they make up only a small fraction of the total attention weight. It takes values of $k\approx 200$ to reach a cumulative attention weight of 0.5, and values of $k\approx 1000$ to reach a cumulative attention weight of 0.8. This implies that the many key-value pairs that do not belong to the $k$ keys with the highest inner product can still have a significant total impact on the output of the attention mechanism, which cannot be taken into account by k-MIP attention. This explains why the approximation is poor for values of $k$ that are not close to the number of nodes in the graph.

We conclude that k-MIP attention does not approximate the full attention mechanism. Instead, it should be thought of as a different way to perform attention, which has different properties and can be more efficient than full attention.

\subsection{Influence of $k$}
\label{sec:influence_of_k}

The hyperparameter $k$, which determines the number of keys that each query attends to, has the potential to significantly influence the performance and runtime of the model. 
In this section, we will therefore investigate the impact of $k$ on the runtime and prediction quality of a graph transformer with k-MIP attention.

One may be confident that the model's runtime will increase as $k$ grows, since the softmax operation in the attention mechanism must be computed over $k$ elements and $k$ weighted value vectors must be aggregated.
Regarding performance, one may hypothesise that the model will improve with larger values of $k$, as it can take more context into account when processing each node. However, for larger values of $k$, there is also the risk of keys with low inner products introducing noise irrelevant to the query.

\paragraph{Experimental setup}
We train a GraphGPS model with k-MIP attention on four small-graph datasets with varying values of $k$. The other hyperparameters were chosen based on those found by Shirzad et al.~\citep{shirzad2023exphormer}.
For each value of $k$, we conduct 5 runs and record the mean and standard deviation of both the test performance and the training epoch duration across these runs.
All experiments are run with a single 16GB V100 GPU.

\begin{table}[t]
    
    \centering
    \caption{
        Comparison of the performance of GraphGPS with k-MIP attention for varying $k$, on four small-graph datasets. Shown is the mean $\pm$ std over 5 runs. The \firstbest{first} and \secondbest{second} best results are highlighted.
    }
    \scalebox{0.6}{
    \begin{tabular}{ccccc}
    \toprule
    & \textbf{Cifar10} & \textbf{MalNet-Tiny} & \textbf{PascalVOC-SP} & \textbf{Peptides-Func} \\ 
    $\bs{k}$ & \textbf{Accuracy $\uparrow$} & \textbf{Accuracy $\uparrow$} & \textbf{F1 score $\uparrow$} & \textbf{AP $\uparrow$} \\
    \midrule
    $\bs{1}$   &  74.37 $\pm$ 0.56  &  93.00 $\pm$ 0.58  &  35.57 $\pm$ 1.06  &  60.64 $\pm$ 1.13  \\
    $\bs{2}$   &  74.84 $\pm$ 0.41  &  92.92 $\pm$ 0.69  &  \firstbest{37.65 $\pm$ 0.70}  &  62.97 $\pm$ 0.27  \\
    $\bs{3}$   &  74.75 $\pm$ 0.33  &  92.88 $\pm$ 0.41  &  \secondbest{37.47 $\pm$ 0.48}  &  63.13 $\pm$ 0.65  \\
    $\bs{5}$   &  75.08 $\pm$ 0.27  &  \firstbest{93.54 $\pm$ 0.52}  &  37.30 $\pm$ 0.99  &  64.65 $\pm$ 0.37  \\
    $\bs{10}$   &  75.05 $\pm$ 0.32  &  \secondbest{93.52 $\pm$ 0.49}  &  36.58 $\pm$ 1.50  &   65.10 $\pm$ 0.42  \\
    $\bs{20}$   &  75.12 $\pm$ 0.31  &  93.26 $\pm$ 0.45  &  35.80 $\pm$ 1.60  &  \firstbest{65.31  $\pm$ 0.52}  \\
    $\bs{30}$   &  75.08 $\pm$ 0.48  &  92.62 $\pm$ 0.70  &  37.15 $\pm$ 0.73  &  65.14 $\pm$ 0.76 \\
    $\bs{50}$   &  \secondbest{75.50 $\pm$ 0.43}  &  92.78 $\pm$ 0.75 &  36.82 $\pm$ 0.43  &  64.90 $\pm$ 0.84  \\
    $\bs{100}$   &  \firstbest{75.65 $\pm$ 0.44}  &  93.40 $\pm$ 0.57  &  36.90 $\pm$ 0.33  &  \secondbest{65.29 $\pm$ 0.34}  \\
    \bottomrule
    \end{tabular}}
    \label{table:small_graph_varying_k_experiment_performance}
\end{table}

\begin{table}[t]
    
    \centering
    \caption{
        Comparison of the average training epoch duration in seconds of GraphGPS with k-MIP attention for varying $k$, on four small-graph datasets. Shown is the mean $\pm$ std over 5 runs. The \firstbest{first} and \secondbest{second} best results are highlighted.
    }
    \scalebox{0.6}{
    \begin{tabular}{ccccc}
    \toprule
    & \textbf{Cifar10} & \textbf{MalNet-Tiny} & \textbf{PascalVOC-SP} & \textbf{Peptides-Func} \\
    $\bs{k}$ & \textbf{Time (s) $\uparrow$} & \textbf{Time (s) $\uparrow$} & \textbf{Time (s) $\uparrow$} & \textbf{Time (s) $\uparrow$} \\
    \midrule
    $\bs{1}$   & \firstbest{135.34 $\pm$ 8.72}  & \firstbest{22.09 $\pm$ 1.19}  & \secondbest{16.14 $\pm$ 0.95}  & \firstbest{9.29 $\pm$ 0.29} \\
    $\bs{2}$   & 146.64 $\pm$ 11.51  & \secondbest{23.96 $\pm$ 0.89}  & \firstbest{15.61 $\pm$ 1.00}  & \secondbest{9.70 $\pm$ 0.18} \\
    $\bs{3}$   & 146.50 $\pm$ 20.20  & 24.25 $\pm$ 0.91  & 16.81 $\pm$ 1.11  & 10.05 $\pm$ 0.19  \\
    $\bs{5}$   & 147.94 $\pm$ 14.23  & 26.44 $\pm$ 0.90  & 17.12 $\pm$ 0.86  & 10.55 $\pm$ 0.25  \\
    $\bs{10}$  & 146.60 $\pm$ 19.97  & 30.14 $\pm$ 0.94  & 17.70 $\pm$ 0.89  & 12.79 $\pm$ 0.26  \\
    $\bs{20}$  & \secondbest{142.29 $\pm$ 20.11}  & 40.93 $\pm$ 1.15  & 22.58 $\pm$ 0.62  & 16.60 $\pm$ 0.38  \\
    $\bs{30}$  & 149.70 $\pm$ 19.76  & 51.39 $\pm$ 1.37  & 26.92 $\pm$ 1.35  & 21.40 $\pm$ 0.45  \\
    $\bs{50}$  & 166.97 $\pm$ 13.62  & 128.19 $\pm$ 2.27  & 73.42 $\pm$ 2.05  & 38.83 $\pm$ 0.84  \\
    $\bs{100}$ & 241.80 $\pm$ 19.67  & 666.36 $\pm$ 9.46  & 318.91 $\pm$ 5.11  & 150.91 $\pm$ 3.58  \\
    \bottomrule
    \end{tabular}}
    \label{table:small_graph_varying_k_experiment_runtime}
\end{table}

\begin{figure}[t!]
    \centering
    \includegraphics[width=0.7\textwidth]{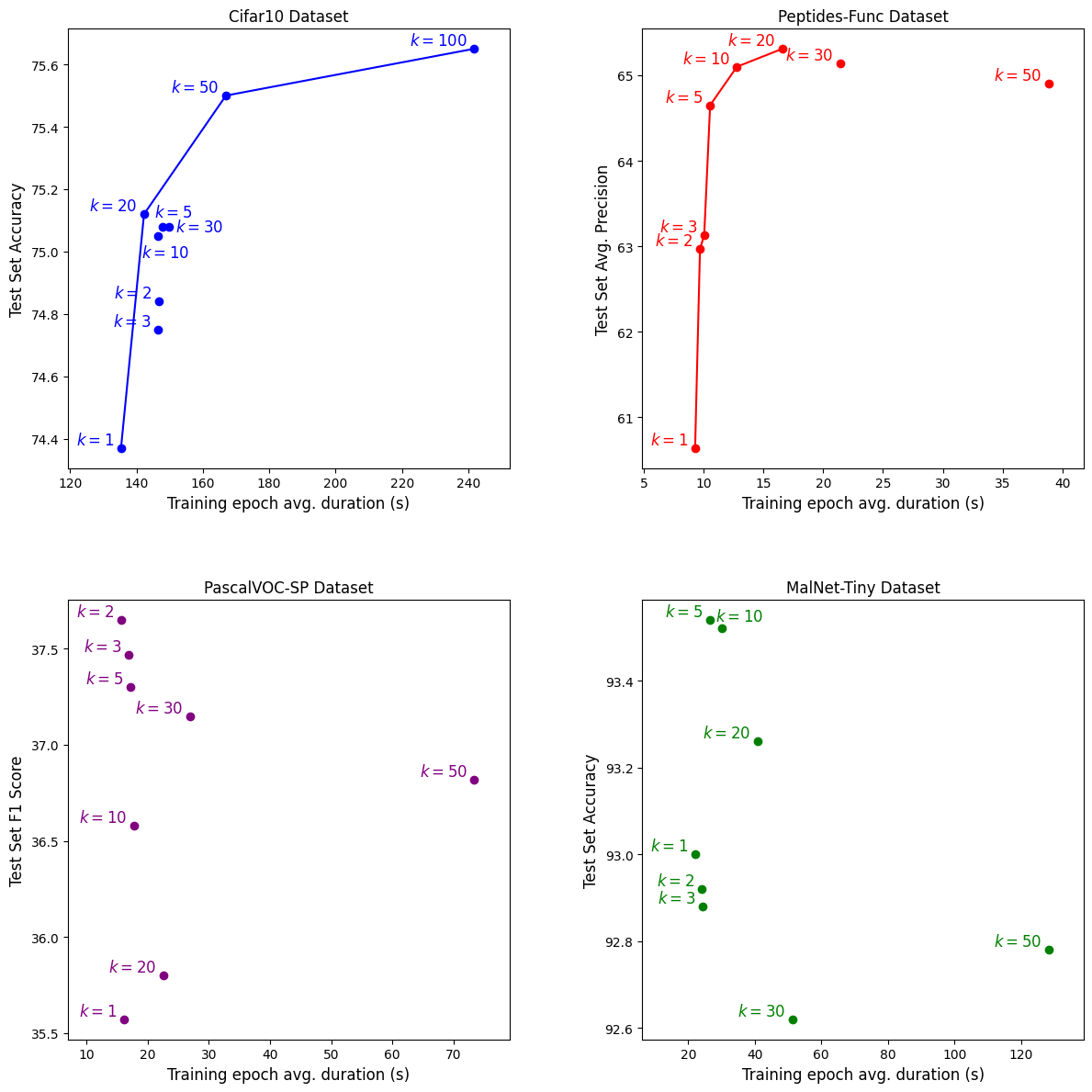}
    \caption[
        Scatter plots of the test performance against the average training epoch duration for varying values of $k$, for the datasets Cifar10, MalNet-Tiny, PascalVOC-SP, and Peptides-Func.
    ]{
        Scatter plots of the test performance against the average training epoch duration for varying values of $k$, for the datasets Cifar10, MalNet-Tiny, PascalVOC-SP, and Peptides-Func. The Pareto frontier is drawn for Cifar10 and Peptides-Func.
    }
    \label{fig:small_graph_varying_k_scatter}
\end{figure}

\paragraph{Results}
The results of this experiment are presented in \cref{table:small_graph_varying_k_experiment_performance,table:small_graph_varying_k_experiment_runtime}.
It is clear that the runtime of the model increases with $k$. This increase is slow for small values of $k$, when the attention mechanism is only a small fraction of the total computation time, but becomes more pronounced for larger values of $k$.
The performance of the model, on the other hand, is relatively stable for varying values of $k$. Only for the Cifar10 and the Peptides-Func datasets is there a clear signal that very low values of $k$ are detrimental for the model's performance. For the other two datasets, the results are too noisy to draw a clear conclusion.
For Cifar10 and Peptides-Func, this leads to $k$ presenting a trade-off between performance and efficiency, where a higher $k$ leads to better performance but also longer training times. This trade-off is visualised in \cref{fig:small_graph_varying_k_scatter}, where we drew the Pareto frontier for Cifar10 and Peptides-Func.

\section{Detailed measurements}
\label{app:detailed_performance_measurements}

\subsection{Objective 1: computational efficiency}
\label{sec:detailed_performance_efficiency_comparison}

\Cref{tab:efficiency_comparison_symbolic_matrices,tab:efficiency_comparison_no_symbolic_matrices,tab:efficiency_comparison_full_attention} present the exact data underlying the log-log plots shown in \cref{sec:controlled_experiment}. These tables provide the raw runtimes and memory usage measurements for each method and graph size.

\begin{table}[hbpt!]
    \centering
    \caption{k-MIP attention with symbolic matrices: Detailed runtimes and peak GPU memory usage across different graph sizes, in the controlled experiment of \cref{sec:controlled_experiment}. All experiments were conducted on a single A100 GPU with 40GB memory. No gradient checkpointing was used. Averages are reported over 5 runs.}
    \label{tab:efficiency_comparison_symbolic_matrices}

    \centering
    \vspace{0.5em}
    \textbf{k-MIP Attention with Symbolic Matrices}
    
    \small
    \begin{tabular}{cccccc}
        \toprule
        & \multicolumn{1}{c}{Inference Setting} & \multicolumn{4}{c}{Training Setting} \\
        $\bs{N}$ & \textbf{Total runtime} & \textbf{Forward} & \textbf{Backward} & \textbf{Total} & \textbf{GPU memory (MB)} \\
        \midrule
        $\bs{10^{2.0}}$ & 1.67 ms   & 1.68 ms   & 0.96 ms    & 2.64 ms   & 0.19     \\
        $\bs{10^{2.5}}$ & 1.69 ms   & 1.79 ms   & 0.98 ms    & 2.76 ms   & 0.58     \\
        $\bs{10^{3.0}}$ & 1.89 ms   & 1.93 ms   & 1.01 ms    & 2.94 ms   & 1.84     \\
        $\bs{10^{3.5}}$ & 2.14 ms   & 2.23 ms   & 1.04 ms    & 3.27 ms   & 5.80     \\
        $\bs{10^{4.0}}$ & 3.14 ms   & 3.18 ms   & 1.41 ms    & 4.59 ms   & 18.31    \\
        $\bs{10^{4.5}}$ & 6.38 ms   & 6.62 ms   & 1.84 ms    & 8.46 ms   & 57.90    \\
        $\bs{10^{5.0}}$ & 34.45 ms  & 34.63 ms  & 5.31 ms    & 39.94 ms  & 183.11   \\
        $\bs{10^{5.5}}$ & 0.231 s   & 0.231 s   & 0.0133 s   & 0.245 s   & 579.03   \\
        $\bs{10^{6.0}}$ & 2.146 s   & 2.147 s   & 0.0863 s   & 2.233 s   & 1831.06  \\
        \bottomrule
    \end{tabular}
\end{table}

\begin{table}[hbpt!]
    \centering
    \caption{k-MIP attention without symbolic matrices: Detailed runtimes and peak GPU memory usage across different graph sizes, in the controlled experiment of \cref{sec:controlled_experiment}. All experiments were conducted on a single A100 GPU with 40GB memory. No gradient checkpointing was used. Averages are reported over 5 runs.}
    \label{tab:efficiency_comparison_no_symbolic_matrices}

    \centering
    \vspace{0.5em}
    \textbf{k-MIP Attention: Naive (without Symbolic Matrices)}
    
    \small
    \begin{tabular}{cccccc}
        \toprule
        & \multicolumn{1}{c}{Inference Setting} & \multicolumn{4}{c}{Training Setting} \\
        $\bs{N}$ & \textbf{Total runtime} & \textbf{Forward} & \textbf{Backward} & \textbf{Total} & \textbf{GPU memory (MB)} \\
        \midrule
        $\bs{10^{2.0}}$ & 2.06 ms   & 2.01 ms   & 1.12 ms    & 3.13 ms   & 8.31      \\
        $\bs{10^{2.5}}$ & 2.00 ms   & 2.00 ms   & 1.00 ms    & 3.00 ms   & 8.71      \\
        $\bs{10^{3.0}}$ & 2.09 ms   & 2.09 ms   & 1.07 ms    & 3.16 ms   & 12.81     \\
        $\bs{10^{3.5}}$ & 2.19 ms   & 2.20 ms   & 1.12 ms    & 3.33 ms   & 49.00     \\
        $\bs{10^{4.0}}$ & 5.00 ms   & 4.96 ms   & 1.31 ms    & 6.27 ms   & 398.22    \\
        $\bs{10^{4.5}}$ & 30.97 ms  & 31.76 ms  & 1.72 ms    & 33.48 ms  & 3865.58   \\
        $\bs{10^{5.0}}$ & 0.262 s   & 0.263 s   & 0.004 s    & 0.268 s   & 36917.86  \\
        $\bs{10^{5.5}}$ & 2.622 s   & 2.626 s   & 0.009 s    & 2.635 s   & 36917.86  \\
        $\bs{10^{6.0}}$ & 26.670 s  & 26.674 s  & 0.086 s    & 26.760 s  & 36917.86  \\
        \bottomrule
    \end{tabular}
\end{table}

\begin{table}[hbpt!]
    \centering
    \caption{Full attention: Detailed runtimes and peak GPU memory usage across different graph sizes, in the controlled experiment of \cref{sec:controlled_experiment}. All experiments were conducted on a single A100 GPU with 40GB memory. No gradient checkpointing was used. Averages are reported over 5 runs.}
    \label{tab:efficiency_comparison_full_attention}

    \centering
    \vspace{0.5em}
    \textbf{Full Attention}
    
    \small
    \begin{tabular}{cccccc}
        \toprule
        & \multicolumn{1}{c}{Inference Setting} & \multicolumn{4}{c}{Training Setting} \\
        $\bs{N}$ & \textbf{Total runtime} & \textbf{Forward} & \textbf{Backward} & \textbf{Total} & \textbf{GPU memory (MB)} \\
        \midrule
        $\bs{10^{2.0}}$ & 0.64 ms   & 0.75 ms   & 1.04 ms    & 1.79 ms   & 16.42     \\
        $\bs{10^{2.5}}$ & 0.83 ms   & 0.75 ms   & 0.94 ms    & 1.69 ms   & 17.84     \\
        $\bs{10^{3.0}}$ & 0.67 ms   & 0.76 ms   & 1.04 ms    & 1.80 ms   & 31.70     \\
        $\bs{10^{3.5}}$ & 0.70 ms   & 0.79 ms   & 1.42 ms    & 2.21 ms   & 169.42    \\
        $\bs{10^{4.0}}$ & 2.61 ms   & 2.69 ms   & 7.41 ms    & 10.10 ms  & 1544.04   \\
        $\bs{10^{4.5}}$ & 24.76 ms  & 24.36 ms  & 47.10 ms   & 71.46 ms  & 15280.32  \\
        $\bs{10^{5.0}}$ & 0.246 s & OOM         & OOM          & OOM         & OOM         \\
        $\bs{10^{5.5}}$ & 2.500 s   & OOM         & OOM          & OOM         & OOM         \\
        $\bs{10^{6.0}}$ & 24.496 s  & OOM         & OOM          & OOM         & OOM         \\
        \bottomrule
    \end{tabular}
\end{table}

\clearpage
\subsection{City-Networks}
\label{sec:detailed_performance_city_networks}

\Cref{table:city_networks,tab:detailed_performance_city_networks} provide the accuracies and average runtimes for each model on the City-Networks dataset, respectively.

\begin{table*}[htbp!]
    \centering
    \caption{
        Test accuracies of GPS + k-MIP and baselines on City-Networks~\citep{liang2025towards}. Shown is the mean $\pm$ std over four runs, except for the London dataset where we only ran one run for the graph transformer models. GPS + BigBird was not evaluated on LA due to long training times.
    }
    \small
    \begin{tabular}{lcccc}
        \toprule
        & \textbf{Paris} & \textbf{Shanghai} & \textbf{LA} & \textbf{London} \\
        \textbf{\# Nodes} & 114k & 184k & 241k & 569k \\
        \textbf{\# Edges} & 183k & 263k & 343k & 759k \\
        \midrule
        \textbf{GCN}
        & 52.93 $\pm$ 0.06
        & 57.75 $\pm$ 0.24
        & 56.65 $\pm$ 0.04
        & 55.25 $\pm$ 0.06 \\
        \textbf{GINE} 
        & 53.36 $\pm$ 0.23
        & 63.35 $\pm$ 0.20
        & 58.21 $\pm$ 0.56
        & \secondbest{57.60 $\pm$ 0.20} \\
        \textbf{GAT}
        & \firstbest{55.83 $\pm$ 0.42}
        & \firstbest{72.53 $\pm$ 0.23}
        & \firstbest{65.53 $\pm$ 0.65}
        & \thirdbest{57.19 $\pm$ 1.09} \\
        \textbf{GatedGCN}
        & 53.27 $\pm$ 0.10
        & \secondbest{68.80 $\pm$ 0.21}
        & \secondbest{63.42 $\pm$ 0.12}
        & \firstbest{61.47 $\pm$ 0.14} \\
        \midrule
        \textbf{GPS + BigBird}
        & 53.53 $\pm$ 0.37
        & 65.24 $\pm$ 0.17
        & --
        & OOM \\
        \textbf{GPS + Performer}
        & \secondbest{54.06 $\pm$ 0.27}
        & \thirdbest{67.27 $\pm$ 0.17}
        & 61.64 $\pm$ 0.24 & 53.20 \\
        \textbf{GPS + Transformer}
        & OOM
        & OOM
        & OOM
        & OOM \\
        \textbf{Exphormer}
        & 51.40 $\pm$ 0.08
        & 62.33 $\pm$ 0.16
        & 58.15 $\pm$ 0.13
        & OOM \\
        \midrule
        \textbf{GPS + k-MIP (ours)}
        & \thirdbest{53.62 $\pm$ 0.22}
        & 66.94 $\pm$ 0.44
        & \thirdbest{61.72 $\pm$ 0.35}
        & 56.05 \\
        \bottomrule
    \end{tabular}
    \label{table:city_networks}
\end{table*}

\begin{table*}[hbpt!]
    \centering
    \caption{
        Training times of GPS + k-MIP and baselines on City-Networks~\citep{liang2025towards}. Shown is the mean over four runs, except for the London dataset where we only ran one run for the graph transformer models. All times were measured on a single 40GB A100 GPU for Paris, and a single 80GB A100 GPU for the other datasets.
    }
    
    \small
    \begin{tabular}{lcccc}
        \toprule
        & \textbf{Paris} & \textbf{Shanghai} & \textbf{LA} & \textbf{London}
        \vspace{0.05cm} \\
        \textbf{\# Nodes} & 114k & 184k & 241k & 569k \\
        \textbf{\# Edges} & 183k & 263k & 343k & 759k \\
        & \textbf{Time (h)} & \textbf{Time (h)} & \textbf{Time (h)} & \textbf{Time (h)} \\
        \midrule
        \textbf{GCN}
        & 0.051
        & 0.066
        & 0.089 
        & 0.193 \\
        \textbf{GINE}
        & 0.048
        & 0.065
        & 0.087
        & 0.183 \\
        \textbf{GAT}
        & 0.071
        & 0.093
        & 0.125
        & 0.278 \\
        \textbf{GatedGCN}
        & 0.092
        & 0.113
        & 0.150
        & 0.332 \\
        \midrule
        \textbf{GPS + BigBird}
        & 11.50
        & 19.21
        & --
        & -- \\
        \textbf{GPS + Performer}
        & 2.62
        & 5.08
        & 7.87
        & 13.66 \\
        \textbf{GPS + Transformer} 
        & --
        & --
        & --
        & -- \\
        \textbf{Exphormer}
        & 1.54
        & 2.16
        & 2.75
        & -- \\
        \midrule
        \textbf{GPS + k-MIP (ours)}
        & 3.09
        & 6.45
        & 10.20
        & 31.91 \\
        \bottomrule
    \end{tabular}
    \label{tab:detailed_performance_city_networks}
\end{table*}

\clearpage

\section{Comparison with FlashAttention}
\label{app:flashattention_comparison}

We acknowledge FlashAttention is a revelant baseline for scalable attention. To address this, we conducted preliminary experiments comparing k-MIP attention with FlashAttention, measuring runtime and peak memory usage under comparable settings on a single 40GB A100 GPU.

The inference setting consists of one forward pass without gradient tracking. Table~\ref{tab:inference_comparison} reports the measured runtime and peak memory. The training setting consists of one forward pass with gradient tracking and one backward pass. Table~\ref{tab:training_comparison} reports the results.

\begin{table}[hbpt!]
\caption{Inference runtime and memory comparison between k-MIP~(fp32) and FlashAttention~(fp16).}
\centering
\scalebox{0.7}{
\begin{tabular}{c|cc|cc}
$N$ & k-MIP (fp32) runtime & k-MIP (fp32) peak memory (MB) & FlashAttention (fp16) runtime & FlashAttention (fp16) peak memory \\
\hline
$10^{2.0}$ & 1.67 ms & 0.14 & 0.20 ms & 0.02 \\
$10^{2.5}$ & 1.69 ms & 0.45 & 0.17 ms & 0.13 \\
$10^{3.0}$ & 1.89 ms & 1.41 & 0.17 ms & 0.69 \\
$10^{3.5}$ & 2.14 ms & 4.47 & 0.18 ms & 2.17 \\
$10^{4.0}$ & 3.14 ms & 14.18 & 0.30 ms & 6.87 \\
$10^{4.5}$ & 6.38 ms & 44.63 & 1.80 ms & 6.87 \\
$10^{5.0}$ & 34.45 ms & 141.15 & 13.1 ms & 18.31 \\
$10^{5.5}$ & 0.231 s & 446.34 & 0.115 s & 57.91 \\
$10^{6.0}$ & 2.146 s & 1411.44 & 0.915 s & 183.11 \\
$10^{6.5}$ & 20.966 s & 4463.36 & 9.083 s & 579.03
\end{tabular}}
\label{tab:inference_comparison}
\end{table}

\begin{table}[hbpt!]
\caption{Training runtime and memory comparison between k-MIP~(fp32) and FlashAttention~(fp16).}
\centering
\scalebox{0.7}{
\begin{tabular}{c|cc|cc}
$N$ & k-MIP (fp32) total runtime & k-MIP (fp32) peak memory (MB) & FlashAttention (fp16) runtime & FlashAttention (fp16) peak memory \\
\hline
$10^{2.0}$ & 2.64 ms & 0.19 & 0.98 ms & 1.73 \\
$10^{2.5}$ & 2.76 ms & 0.58 & 1.01 ms & 5.17 \\
$10^{3.0}$ & 2.94 ms & 1.84 & 1.14 ms & 13.83 \\
$10^{3.5}$ & 3.27 ms & 5.80 & 1.29 ms & 43.23 \\
$10^{4.0}$ & 4.59 ms & 18.31 & 1.86 ms & 136.60 \\
$10^{4.5}$ & 8.46 ms & 57.90 & 7.89 ms & 428.88 \\
$10^{5.0}$ & 39.94 ms & 183.11 & 53.73 ms & 1352.44 \\
$10^{5.5}$ & 0.245 s & 579.03 & 0.374 s & 4273.56 \\
$10^{6.0}$ & 2.233 s & 1831.02 & 3.625 s & 13512.50 \\
$10^{6.5}$ & 21.232 s & 5790.30 & OOM & OOM \\
$10^{7.0}$ & 208.338 s & 18310.55 & OOM & OOM
\end{tabular}}
\label{tab:training_comparison}
\end{table}

FlashAttention provides faster inference than k-MIP, achieving roughly a 2–2.5$\times$ speedup for smaller graphs. In training, k-MIP has an approximately 50\% higher throughput for large graphs but is up to $2.7\times$ slower for small graphs. However, we note that a direct comparison is not entirely fair due to several factors:
\begin{itemize}
    \item \textbf{Tensor Core usage:} FlashAttention utilizes NVIDIA Tensor Cores, while KeOps (used by k-MIP) does not yet support these optimizations~\citep{feydy2020fast}. On the NVIDIA A100 GPU we benchmark on, Tensor Cores have an advertised peak throughput (in TFLOPS, for FP16) that is roughly $16\times$ higher than the advertised peak throughput of the standard CUDA cores (FP32).
    \item \textbf{Precision:} FlashAttention uses FP16 while k-MIP uses FP32. Because FP16 typically offers about $2\times$ higher FLOPS throughput than FP32, FlashAttention can achieve higher raw compute throughput.
    \item \textbf{Hardware optimization:} FlashAttention uses fused kernels, tiling, and optimized memory layouts, which are not yet implemented for k-MIP.
    \item \textbf{Framework maturity:} FlashAttention has undergone multiple rounds of optimization; k-MIP is a research prototype focused on validating the algorithmic idea.
\end{itemize}

We are currently working on engineering strategies for k-MIP, including FP16/INT8 quantization, Tensor Core support, fused and tiled kernels, and cache-aware memory layouts. Despite these differences, the comparison demonstrates that k-MIP attention is competitive, especially in terms of memory efficiency and scalability for large graphs.

\end{document}